%

\documentclass[mnsc,nonblindrev]{informs4} 


\RequirePackage{tgtermes}
\RequirePackage{newtxtext}
\RequirePackage{newtxmath}
\RequirePackage{bm}
\RequirePackage{endnotes}

\usepackage{tcolorbox}

\usepackage{xcolor}
\usepackage[framemethod=TikZ]{mdframed}
\usepackage{caption} 
\newmdenv[
    backgroundcolor=gray!10, 
    linecolor=black, 
    linewidth=1pt, 
    roundcorner=5pt, 
    font=\ttfamily\small, 
    skipabove=10pt, 
    skipbelow=10pt, 
    innerleftmargin=8pt, 
    innerrightmargin=8pt, 
    innertopmargin=6pt, 
    innerbottommargin=6pt, 
]{promptenv}


  

\usepackage{algorithm2e}

\OneAndAHalfSpacedXI




\usepackage{natbib}
 \bibpunct[, ]{(}{)}{,}{a}{}{,}%
 %
 %
 %
 %
 %






\newcommand{\E}{{\rm E}}





\TheoremsNumberedThrough     
\EquationsNumberedBySection 

\usepackage[bookmarks=false, hypertexnames=false]{hyperref}
\hypersetup{
   colorlinks=true,
   linkcolor=blue, 
   citecolor= blue, 
   filecolor= blue, 
   urlcolor=blue, 
	pdfpagemode=FullScreen,
   }

\usepackage{epigraph}

\begin{document}


\RUNAUTHOR{Li, Fan, and Hong}

\RUNTITLE{Efficient Budget Allocation for Large-Scale LLM-Enabled Virtual Screening}

\TITLE{Efficient Budget Allocation for  Large-Scale  LLM-Enabled Virtual Screening}


\ARTICLEAUTHORS{%
    \AUTHOR{Zaile Li}
    \AFF{Technology and Operations Management Area, INSEAD, Fontainebleau, France \\\EMAIL{zaile.li@insead.edu}}
    \AUTHOR{Weiwei Fan}
    \AFF{Advanced Institute of Business and School of Economics and Management, Tongji University, Shanghai, China \\\EMAIL{wfan@tongji.edu.cn}} 
\AUTHOR{L. Jeff Hong}
    \AFF{Department of Industrial and Systems Engineering, University of Minnesota, Minneapolis, Minnesota\\\EMAIL{lhong@umn.edu}}
} 

\ABSTRACT{%
Screening tasks that aim to identify a small subset of top alternatives from a large pool are common in business decision-making processes. These tasks often require substantial human effort to evaluate each alternative’s performance, making them time-consuming and costly. Motivated by recent advances in large language models (LLMs), particularly their ability to generate outputs that align well with human evaluations, we consider an LLM-as-human-evaluator approach for conducting screening virtually, thereby reducing the cost burden. To achieve scalability and cost-effectiveness in virtual screening, we identify that the stochastic nature of LLM outputs and their cost structure necessitate efficient budget allocation across all alternatives.
To address this, we propose using a top-$m$ greedy evaluation mechanism, a simple yet effective approach that keeps evaluating the current top-$m$ alternatives, and design the explore-first top-$m$ greedy (EFG-$m$) algorithm. We prove that EFG-$m$ is both sample-optimal and consistent in large-scale virtual screening. Surprisingly, we also uncover a bonus ranking effect, where the algorithm naturally induces an indifference-based ranking within the selected subset. To further enhance practicality, we design a suite of algorithm variants to improve screening performance and computational efficiency.  Numerical experiments validate our results and demonstrate the effectiveness of our algorithms. Lastly, we conduct a case study on LLM-based virtual screening. The study shows that while LLMs alone may not provide meaningful screening and ranking results when directly queried, integrating them with our sample-optimal algorithms unlocks their potential for cost-effective, large-scale virtual screening.
}%


\KEYWORDS{virtual screening, large-scale, sample optimality, ranking, large language models}



\maketitle

\epigraph{``... artificial intelligence, the science of making machines do things that would require intelligence if done by men."}{-- \cite{minsky1969semantic}}

\section{Introduction}\label{sec: introduction}

In many decision making processes, screening tasks play an important role. The goal of screening is to select a small subset of high-quality alternatives from a vast pool—whether in hiring, product design and investment project selection. By narrowing down alternatives, screening reduces decision making complexity, allowing decision makers to focus on the most promising options. After the initial screening, the shortlisted alternatives often undergo more rigorous evaluation, such as large-scale experiments or in-depth qualitative reviews, so that final decisions can be made. These screening processes are prevalent across business and management scenarios, but they often require substantial human effort, making them costly, time-consuming, and resource-intensive. For instance, evaluating job applicants requires expert screening of resumes and initial interviews, market research surveys rely on direct feedback from consumers, and investment decisions demand in-depth assessments by professionals. In each case, human evaluation is indispensable.
As computing power and artificial intelligence continue to advance—driving automation in industries from manufacturing and healthcare to financial services—it is natural to ask: \emph{Can we leverage machines to assist in screening tasks to reduce human workload while maintaining screening quality?}



Machine-assisted screening is not an entirely new concept. A practical example comes from drug discovery, where researchers must identify promising drug candidates from a vast library of chemical compounds for further refinement and testing. 
To address this challenge, \textit{virtual screening}, or in silico screening, has emerged as a common practice. Instead of relying on human-led laboratory experiments, virtual screening uses computational simulation tools—such as molecular dynamics models—to evaluate each compound's potential effectiveness.
This \textit{simulator-as-evaluator} approach significantly simplifies the screening process. Researchers can use simulation-based evaluations of all compounds to quickly rank them and select the top candidates for further experimental validation. A key advantage of virtual screening is its scalability. Advancements in computing power, driven by Moore’s Law, have made it convenient and cost-effective to run simulations in parallel on high-performance computing systems. This allows researchers to screen a considerably large number of alternatives efficiently. 

The cost-effectiveness and scalability of virtual screening in drug discovery motivate a similar simulator-as-evaluator approach for business screening tasks. However, applying this approach is not straightforward. Virtual screening works well in drug discovery because molecular interactions follow well-defined physical laws, making simulation modeling relatively simple. In contrast, many business tasks—especially those involving human judgment—lack such precisely defined evaluation criteria. Consider a marketing research scenario where Lenovo surveys customers to estimate their willingness to pay (WTP) for a new laptop design. Unlike molecular interactions, WTP evaluations are shaped by individual preferences, experiences, and perceptions. Building an explicit simulator to replicate human evaluation in such contexts is not only challenging but may also be infeasible.

Luckily, recent advancements in large language models (LLMs) may offer a promising remedy for the challenge of ``simulating" human evaluators. In a nutshell, LLMs are a class of models that process text-based descriptions (called prompts) as inputs and generate text outputs corresponding to the provided input (many modern LLMs can also handle multimodal inputs and outputs, such as images). Trained on vast amounts of human-generated data, LLMs are increasingly perceived to exhibit a certain level of intelligence, which has been leveraged in applications such as code generation and writing assistance. A particularly notable development relevant to our focus emerges in the marketing area, where \cite{brand2023using} and \cite{li2024frontiers} find that LLMs are capable of generating realistic survey responses that closely align with responses from real customers. For example, when provided (prompted) with a survey problem that includes a product description and a question about the maximum WTP, LLMs can generate reasonable WTP estimates that are consistent with human responses and real market prices. We illustrate this process in Figure \ref{fig:example_LLM}. Such findings are fascinating, as they demonstrate the potential for LLMs to act as virtual human evaluators; see Section \ref{subsec: review} for a review of related studies reporting similar results. In the WTP example, LLMs evaluate the ``performance" of a product in terms of WTP reasonably well. This concept is directly aligned with the core principle of virtual screening: a machine evaluates each alternative without requiring human intervention. 

\begin{figure}[htbp]
    \centering
    \includegraphics[width=0.7\linewidth]{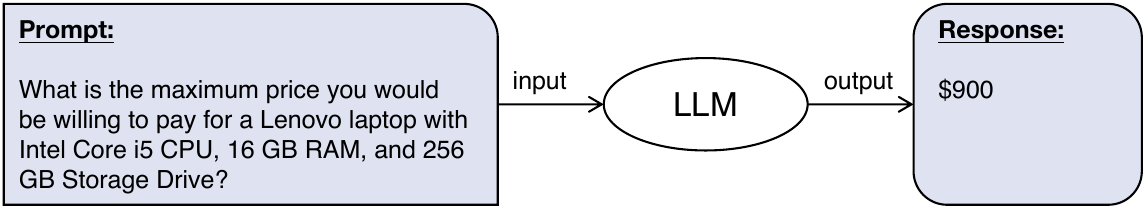}
    \caption{Evaluation process using LLMs to estimate willingness to pay for a laptop design}
    \label{fig:example_LLM}
\end{figure}

The ability of LLMs to act as virtual human evaluators may pave the way for a virtual approach to business screening tasks. Just as virtual screening has transformed drug discovery, LLM-based virtual screening could enable decision makers to rapidly evaluate and screen alternatives without the need for costly and time-consuming human assessments. With the fast-growing LLM community and continuous advancements in new models,  we believe that this avenue holds significant potential. Therefore, in this paper, we focus on the \textit{LLM-as-human-evaluator} approach for virtual screening, explore the key challenges to its efficient application, propose solutions to address these challenges, and finally demonstrate its economic value.

A key feature of LLMs is that their output is inherently random. This stochastic nature enables them to reflect human-like variability and heterogeneity, but it also complicates screening tasks. In the WTP example, for the same product and identical prompt, querying the LLM multiple times may yield different WTP estimates. In general, a prompt to an LLM does not determine a single fixed response but rather a distribution of possible responses, from which the model samples a response each time it is queried. This randomness arises from the stochastic nature of LLMs' model structure and is controlled by a hyperparameter called \emph{temperature}, which typically takes a value between 0 and 2. By default, the temperature is often set to 1, allowing the model to sample one response unbiasedly from the distribution it believes to be most plausible. 
Interestingly, this inherent randomness in LLMs may enhance their alignment with human decision making. Since LLMs are trained on vast amounts of data from diverse sources, their output does not reflect the knowledge or opinion of a single individual but instead captures the collective patterns of human knowledge and judgment. Consequently, this randomness may help LLMs capture human heterogeneity and subjective preferences. However, the same randomness also introduces new challenges for screening tasks.
For each alternative (and its associated prompt), we may need to query the LLM multiple times to obtain multiple observations of its true performance (e.g., WTP or evaluation score), then compute a mean estimate from the observations and make screening decisions based on the mean estimates of all alternatives. When the number of alternatives in a screening task is relatively large, the total number of required queries can become enormous. This naturally raises concerns about the cost of LLM-based virtual screening.


Using LLMs may come at a financial cost, which necessitates efficient budget allocation when solving screening problems. Running LLMs efficiently requires professional-grade GPUs and expertise in GPU deployment. For ordinary users, a more accessible approach is to use commercial LLM APIs, such as OpenAI’s GPT-4o, through which users can send prompts and receive responses without hosting the model themselves. These services are reliable and fast, as API providers own massive amounts of GPUs. However, they are not free. API-based LLM services charge based on prompt length, which is measured by the number of tokens. One token roughly corresponds to four characters or 0.75 words in English text. For example, according to OpenAI’s pricing, GPT-4o charges \$5 per million tokens. This pricing structure means that in LLM-based screening tasks, each observation of an alternative incurs a monetary cost. 
This cost structure of LLM-based applications naturally raises a fundamental decision problem in virtual screening. Consider a common scenario where the screening task is subject to a financial budget, which determines the total number of allowable LLM queries. As discussed earlier, multiple observations are typically required for each alternative to estimate its true mean performance and make an informed screening decision.  Then, in the budgeted setting, a key question arises: How should the total query budget be allocated across alternatives to maximize the quality of the final screening outcome? The efficiency of this budget allocation directly determines the cost effectiveness of LLM-based virtual screening.  

Resolving the budget allocation problem efficiently requires careful formulation. The first step is to define a suitable evaluation criterion for the efficiency of budget allocation. A natural choice is the probability of correctly selecting the true top-\(m\) alternatives  (e.g.,  $m=5$ or 10) in terms of their mean performance. 
Given this criterion, the budget allocation problem can be restated as: how should we allocate the total query budget to maximize the probability of correct screening? Furthermore, notice that this formulation reflects a strict correctness requirement, where the selected subset must exactly match the true top-\(m\) alternatives. In scenarios with a tight budget constraint, it may be desirable to relax this requirement. For screening problems, as the number of alternatives increases, some alternatives may become nearly indistinguishable from the true top-\(m\). Achieving exact correctness thus becomes increasingly difficult and potentially unnecessary. In this context, a more feasible goal is to aim for ``good screening", where the selected \(m\) alternatives are close in performance to the true top-\(m\). This relaxed objective may require significantly fewer queries while still maintaining high decision quality.

Readers familiar with the simulation optimization literature may recognize that the top-\(m\) screening formulation aligns closely with the optimal subset selection (OSS) problem \citep{dudewicz1975allocation, chen2008efficient}. Like virtual screening, the goal of OSS is to identify the top alternatives from a set using stochastic simulation. In the OSS literature, a number of budget allocation algorithms have been developed, and these algorithms could, in principle, be applied to support virtual screening.  However, a fundamental issue in applying these algorithms lies in their limited scalability (see Section~\ref{subsec: review} for a review). From a theoretical perspective, a key factor that determines the performance of any budget allocation strategy in large-scale problems is how the total number of observations required for a meaningful screening grows with the number of alternatives. In the context of LLM-based virtual screening, this growth rate directly dictates how the monetary cost scales with the size of the alternative set. The lower the growth rate, the more cost-efficient the algorithm becomes. Ideally, the most economical algorithm would achieve the lowest possible growth rate in budget scaling.  This raises a theoretical problem: what is the minimal achievable growth rate? Intuitively, the total budget must grow at least linearly, since each alternative should be evaluated at least once. If this intuition holds, the next challenge is to design a budget allocation strategy that achieves this rate—thereby enabling cost-effective virtual screening at scale.

Furthermore, under the top-\(m\) screening formulation, an additional and interesting question may arise: can we obtain a reliable ranking among the selected alternatives? Ranking information is often desirable in real-world screening applications where further decision making is constrained by non-financial resources—such as time, experimental capacity, or managerial attention—making prioritization and sequencing critical. This is especially true in domains like drug development and recruitment, where decisions are made on how to proceed with the shortlisted options.  Taking ranking into account introduces a new dimension to the screening problem. It invites a deeper exploration of what constitutes a good ranking, how such a criterion can be formally defined, and how it may further complicate the budget allocation.

\subsection{Our Contributions}
We systematically address the above problems and challenges surrounding efficient budget allocation in virtual screening. Specifically, we begin by defining the good screening event and the corresponding screening quality criterion: the probability of good screening, denoted by \(\text{PGS}_m\). This probability serves as the primary performance measure for budget allocation algorithms throughout the paper.  We then establish a fundamental theoretical result: the minimal growth rate of the evaluation budget (and thus the monetary budget) required to achieve a non-zero \(\text{PGS}_m\) is linear in the number of alternatives.  Next, to enable efficient and scalable budget allocation, we take inspiration from \cite{li2023surprising} and design a simple yet effective top-\(m\) greedy allocation mechanism, which concentrates evaluation efforts on the current top-\(m\) alternatives. Building on this mechanism, we propose the explore-first top-\(m\) greedy (EFG-\(m\)) algorithm. For this algorithm, we derive a lower bound for its PGS$_m$ and then prove that this EFG-\(m\) algorithm is both sample-optimal—achieving the minimal linear budget growth rate—and consistent for large-scale virtual screening problems.


Further, we discover that EFG-$m$ provides not only sample-optimal screening but also effective ranking of the selected top-$m$ alternatives—almost {for free}. To measure the effectiveness of this ranking, we borrow the concept of semiordering from economics \citep{luce1956semiorders} and introduce the notion of indifference-based ranking. Building on this concept, we expand our assessment of the EFG-$m$ algorithm by introducing an additional metric: the probability of good screening and ranking, denoted by PGSR$_m$. Interestingly, our numerical experiments show that as the number of alternatives increases, PGSR$_m$ tends to align with PGS$_m$. Inspired by this, we prove that once a good screening is achieved, the additional evaluation budget required to ensure a good ranking is (asymptotically) negligible. This leads to a strengthened result: EFG-$m$ is also sample optimal and consistent in terms of PGSR$_m$. These findings highlight the algorithm's efficacy in simultaneously addressing both screening and ranking challenges in virtual screening.

We also develop a suite of EFG-\(m\) variants to its enhance practical performance. First, we extend the top-\(m\) greedy phase to a top-\(M\) greedy phase where \(M > m\). This simple modification leads to the EFG-\(M\) algorithm, which can significantly outperform EFG-\(m\) by broadening the sampling focus beyond the top-\(m\) set. Next, we further improve the exploration phase by incorporating a seeding step as in \cite{hong2022solving} and obtain the EFG-\(M^+\) algorithm. Finally, to leverage the concurrent capability of LLM API and services—i.e., their ability to respond to multiple queries simultaneously—we propose an asynchronous parallel version of the top-\(M\) greedy phase and design the EFG-\(M^{++}\) algorithm. This variant maintains similar performance as EFG-\(M^+\) while offering significantly improved computational efficiency.

Lastly, we conduct a comprehensive numerical study to validate our theoretical results and evaluate the performance of our proposed algorithms. We also present an illustrative case study on LLM-based virtual screening. We formulate a LLM-based laptop design screening problem that seeks the top designs with highest willingness to pay (WTP). Using a locally deployed LLM, we first verify the reasonableness of the model’s WTP estimates and then apply our algorithm to solve the screening problem. The results show that, when coupled with our sample-optimal algorithms, the LLM-as-human-evaluator approach can effectively support virtual screening.  The cost analysis reveals that to solve a problem with 3,240 alternatives, a total budget of \$62.4 for OpenAI's GPT-4o mini API (or \$28.1 for DeepSeek's DeepSeek V3 API) may be sufficient, which demonstrates the cost-effectiveness of our approach.
%
The readers may be curious about whether the LLMs can directly perform screening if given the descriptions of all alternatives. To explore this, we test an alternative approach that directly prompts the LLM to name the top-$m$ designs. Interestingly, we find that LLMs often fail to provide meaningful screening results even for small-scale problems, let alone achieving sample optimality. We believe this aligns with the ``hallucination'' phenomenon, a well-documented issue with LLM-generated outputs \citep{huang2025survey}. While the model may perform better when asked to identify the single best alternative, its performance deteriorates rapidly as the number of alternatives increases. These findings suggest that LLMs may not be capable of making informed screening decisions yet and underscore the importance of the LLM-as-human evaluator approach in enabling reliable and scalable virtual screening.

\subsection{Literature Review}
\label{subsec: review}

The use of LLMs to substitute for human effort and support cost-effective decision-making has attracted growing attention across fields such as marketing, accounting, and management science. As mentioned earlier, \cite{brand2023using} and \cite{li2024frontiers} explore and validate the potential of LLMs to simulate human customers in marketing research. In a similar direction, \cite{goli2024frontiers} examine how preferences can be elicited from LLM-generated responses in marketing surveys. This stream of work motivates the LLM-as-human-evaluator approach for virtual screening proposed in this paper. Furthermore, \cite{de2025chatgpt} review and validate the potential of LLMs to act as human researchers in accounting tasks, while \cite{ye2025lola} explore their role in evaluating content appeal for news recommendation systems. From a decision making perspective, \cite{wasserkrug2024combining} offer an overview of combining LLMs with management science tools to support smarter, data-driven decisions. Our work contributes to this emerging research agenda by developing an efficient budget allocation framework for leveraging LLMs in virtual screening.

The budget allocation problem in virtual screening aligns closely with the optimal subset selection (OSS) problem \citep{chen2008efficient, zhang2021asymptotically} in the simulation optimization literature. OSS generalizes the classical ranking and selection (R\&S) problem \citep{bechhofer1954single, hong2021review} by shifting the objective from selecting the single best alternative to identifying the top-\(m\) alternatives. 
The OSS problem is also studied in the best arm identification literature; see, e.g., \citet{bubeck2013multiple}.  
Despite the conceptual alignment between OSS and virtual screening, existing OSS algorithms may fall short in addressing several key aspects specific to virtual screening. First, the ranking aspect of screening remains underexplored \citep{bechhofer1968sequential}. 
More importantly, current OSS methods do not account for the scalability requirement that is central to virtual screening. While the notion of sample optimality—ensuring that the required evaluation budget grows minimally with the number of alternatives—has been a central focus in recent work on large-scale R\&S \citep{jamieson2013finding, zhong2022knockout, hong2022solving}, it has not yet been studied in the OSS setting.  Furthermore, computational considerations, such as parallel and asynchronous execution, which are especially relevant in large-scale problems, remain largely unaddressed.  Despite these limitations, to highlight the advantages of our approach, we compare our proposed algorithms with several prevalent OSS algorithms in Section~\ref{subsec: comparison}. We also discuss additional algorithms not included in the main comparison in Section~\ref{subsec: other_algorithms}.
Readers may refer to these sections for more details.

The remainder of this paper is organized as follows.  In Section \ref{sec: problem}, we formulate the budget allocation problem for virtual screening. In Section \ref{sec: top-m-boundary-crossing}, we propose the EFG-$m$ algorithm and analyze the algorithm's effectiveness. In Section \ref{sec: sample-optimality}, we establish the sample optimality and consistency of the EFG-$m$ algorithm.  In Section \ref{sec: ranking},  we define the indifference-based ranking for virtual screening and show that the EFG-$m$ algorithm may automatically obtain the ranking within the selected subset.  In Section \ref{sec: enhancements}, we introduce improvement strategies for the EFG-$m$ algorithm.  In Section \ref{sec: numerical}, we present our numerical results and the LLM-based cash study.  Lastly, we conclude in Section \ref{sec: conclusion} and present auxiliary materials in the e-companion.

\section{Problem Statement}
\label{sec: problem}
\subsection{Large Language Models and Virtual Screening}

Consider a screening problem with \( k \) candidate alternatives, denoted by the index set \( \mathcal{K} = \{1, 2, \dots, k\} \). Each alternative \( i \in \mathcal{K} \) has an unknown constant performance level \( \mu_i \), which reflects its value or quality from the perspective of the decision maker. The goal of screening is to identify the top-\(m\) ($m>1$) alternatives from \( \mathcal{K} \) with the highest performance. 
Motivated by recent advances in large language models (LLMs), we consider using LLMs as virtual human evaluators to assess the performance of alternatives and thereby conduct screening \textit{virtually} with the goal of reducing the cost and time associated with human evaluation. Specifically, for each alternative, we query the LLM with a text prompt that describes the alternative and ask the LLM to evaluate it. We then extract a numerical performance estimate from the model's response. For simplicity, we focus on scenarios where each alternative can be described in natural language, such that it can be directly prompted to the LLM (note that many modern LLMs are also capable of handling multimodal inputs and outputs, and our results may also hold in these more general settings). As discussed earlier, LLM outputs are inherently stochastic. For each alternative \( i \), we model the LLM’s output as a random variable \( X_i \), of which each observation corresponds to a sampled performance estimate in response to a query about that alternative.  Throughout the paper, we adopt the following key assumption:
\begin{assumption}
\label{assu: hypothesis}
For each alternative \( i \) with performance level \( \mu_i \), given a properly designed prompt, the LLM's evaluation output \( X_i \) satisfies 
\[
\E[X_i] = \mu_i \quad \text{and} \quad 0 <  \mathrm{Var} [X_i] = \sigma_i^2 < \infty.
\]
\end{assumption}  
This assumption posits that an LLM can act as a noisy but unbiased evaluator of the true performance \( \mu_i \) for each alternative, when queried with an appropriately constructed prompt. The condition of finite, non-zero variance ensures that the LLM’s output is neither deterministic nor excessively volatile.
Notice that unlike traditional simulation models, where the data-generating process is explicitly defined, LLM-generated responses depend on training data, model architecture, and prompting strategies—making it difficult to theoretically validate the assumption. Nevertheless, with the rapid advancements in LLM capability, we believe this assumption has practical merit. Empirical evidence supporting this assumption is emerging; for example, \cite{brand2023using} and \cite{li2024frontiers} demonstrate that LLM-generated evaluations in marketing research may closely match human responses.

We make an additional assumption that, for the purpose of screening, a proper prompt is available for each alternative. 
In practice, proper prompts can be manually crafted through a trial-and-error process or even generated by the LLM itself.  
In addition, we note that the distribution of \( X_i \) depends on the temperature parameter used in the LLM, which governs the randomness of the model’s output. Throughout this paper, we adopt the default setting of letting temperature be 1, which corresponds to unbiased sampling from the model’s learned knowledge. Furthermore, under Assumption \ref{assu: hypothesis}, obtaining reliable performance estimates for each alternative requires multiple evaluations from the LLM. We assume that, for each alternative, the outputs from repeated prompts are independent and identically distributed (i.i.d.). This condition can be easily achieved by prompting the LLM in stateless mode without providing access to previous responses. 


\subsection{Budget Allocation for Virtual Screening}
\label{subsec: problem}

LLM-based evaluations are not free—each query may incur a monetary cost. This naturally leads to a key decision problem in virtual screening:   Given a limited budget, how should queries be allocated across alternatives to maximize screening quality?  Let \( M \) denote the total financial budget available for the screening task. For each alternative \( i \), let \( h_i \) represent the cost of obtaining one evaluation (observation) from the LLM. In general, \( h_i \) may depend on the prompt length. In this paper, we consider a structured prompting scenario, where all alternatives are described under a fixed template. As a result, the cost per query is approximately uniform across alternatives, i.e., \( h_i \approx h \) for all \( i \in \mathcal{K} \). Then, the total number of observations—also referred to as the total evaluation budget—is given by $B = \left\lfloor \frac{M}{h} \right\rfloor$. Consequently, the budget allocation problem is to allocate this evaluation budget \( B \) across the \( k \) alternatives.  Without loss of generality, we assume that the alternatives are ordered in descending order by their  performance, i.e., \( \mu_i \geq \mu_{i+1} \) for all \( i = 1, \dots, k-1 \). The goal is to screen out all inferior alternatives to select the top-\(m\) alternatives, which form the optimal subset \( \mathcal{S}^* = \{1, 2, \dots, m\} \). Uniqueness of the optimal subset would require \( \mu_m \neq \mu_{m+1} \). 

A budget allocation algorithm collects i.i.d. observations for each alternative from the LLM, estimates their mean performances, and selects a subset \( \mathcal{S} \subseteq \mathcal{K} \) of size \( m \) when the total evaluation budget is exhausted. A natural measure of an algorithm's effectiveness is the probability of correctly selecting \( \mathcal{S}^*\), defined as:  
\begin{eqnarray*}\label{def: PCS}  
    \mbox{PCS}_m=\Pr \left\{ \mathcal{S} = \mathcal{S}^* \right\},  
\end{eqnarray*}  
where the subscript \( m \) in PCS\(_m\) highlights its dependency on the size of the optimal subset \( \mathcal{S}^* \).  While the  PCS\(_m\)  is a natural measure, it can be too strict. As the number of alternatives \( k \) grows, it may become increasingly likely that some inferior alternatives in \( \mathcal{K} \setminus \mathcal{S}^* \) have mean performances very close to those in the optimal subset \( \mathcal{S}^* \). In such cases, exact screening may be both difficult and unnecessary. The decision maker may be satisfied with a slightly relaxed yet more practical objective—selecting a set of \( m \) approximately top-\( m \) or ``good" alternatives. Inspired by \cite{kalyanakrishnan2012pac}, we define an alternative \( i \) as good if its mean satisfies \( \mu_i \geq \mu_m - \delta \) where $\delta>0$ is the indifference-zone parameter denoting the smallest mean difference that is practically meaningful, and we denote the set of all good alternatives as \( \mathcal{G} \). Then, we say that a good screening (GS) is achieved if the selected subset \( \mathcal{S} \) of size \( m \) lies entirely within \( \mathcal{G} \). The corresponding performance measure, the probability of good screening (PGS\(_m\)), is defined as:
\begin{eqnarray*}
    \mbox{PGS}_m  = \Pr \left\{\mathcal{S}\subseteq \mathcal{G} \right\}.
\end{eqnarray*}
Notably, PGS\(_m\) generalizes PCS\(_m\): when \( \delta = 0 \), \( \mathcal{G} = \mathcal{S}^* \), and thus \(\text{PGS}_m = \text{PCS}_m\). Therefore, throughout the remainder of this paper, we adopt PGS\(_m\) as our primary screening quality criterion.

We note that both PCS\(_m\) and PGS\(_m\) focus solely on which alternatives are selected and do not evaluate the ranking among the selected alternatives. As discussed in Section~\ref{sec: introduction}, ranking information can also be very valuable in downstream decision making. We revisit this issue in detail in Section~\ref{sec: ranking}. Besides, to facilitate the design and analysis of budget allocation algorithms, throughout the paper we assume that the LLM evaluation output for each alternative is normally distributed, as stated in the following assumption:
\begin{assumption}
\label{assu: normal}
For each alternative \( i \in \mathcal{K}\), $
X_i \sim \text{Normal}(\mu_i, \sigma_i^2)$.
\end{assumption}
Such normality assumption is standard in the simulation optimization or best arm identification literature (see, e.g., \citealt{chen2011stochastic, hong2021review}). Its popularity stems from the Central Limit Theorem, which shows that the sample mean of a non-normal random variable can be approximately normal. While LLM outputs may not always conform perfectly to normality, the assumption offers a pragmatic and analytically convenient approximation. Numerical experiments in Section~\ref{sec: numerical} show that our proposed algorithms remain robust to deviations from this assumption.

\subsection{Sample Optimality and Consistency}
\label{subsec: regime}
A defining feature of virtual screening is its scalability: it should be versatile enough to handle problems involving a large number of alternatives in a cost-effective manner. To analyze and evaluate the performance of budget allocation algorithms in large-scale settings, we adopt an asymptotic regime in which the number of alternatives \( k \rightarrow \infty \)  \citep{jamieson2013finding, hong2022solving, pei2022parallel}. 
To make this asymptotic analysis meaningful, we impose two regularity conditions on the problem parameters—specifically, the size of the optimal subset $m$ and the number of good alternatives $|\mathcal{G}|$. We maintain $m$ as a bounded constant as $k$ increases. This reflects a common scenario in which the number of alternatives to be selected is determined by practical constraints (e.g., time, budget, or evaluation capacity), rather than by the size of the candidate pool.  We also assume that the number of good alternatives $|\mathcal{G}|$ remains bounded as $k$ increases. This captures the realistic notion that high-quality options are relatively scarce in large pools of candidates.



Under the asymptotic regime of $k\to \infty$, a budget allocation algorithm is measured by the growth rate of the total evaluation budget $B$ required for ensuring an asymptotically non-zero PGS$_m$. Algorithms that achieve the minimal growth rate of $B$ are considered ideal for addressing large-scale screening problems and are termed \textit{sample-optimal}. We prove in \ref{subsec: proof_best_order} that the minimal growth rate of $B$ should be at least $\mathcal{O}(k)$. 
Therefore, we can formally define the sample optimality in terms of PGS$_m$ as follows.
\begin{definition}
    \label{def: rate_optimality_PCS}
    A budget allocation algorithm is sample-optimal if there exists a constant $c>0$ such that
    \begin{equation}\label{eqn:ro}
        \liminf_{k\to\infty}\, {\rm PGS}_m >0\ {\rm for}\ B=ck.
    \end{equation}
\end{definition}

Beyond sample optimality, another important property of a budget allocation algorithm is its consistency \citep{li2023surprising}. While sample optimality focuses on minimizing the growth rate of the required budget, consistency ensures that the algorithm can eventually succeed—i.e., achieve a PGS\(_m\) close to 1—when the total evaluation budget is sufficiently large relative to the problem scale \( k \) (i.e., increases at a slightly faster rate than \( \mathcal{O}(k) \)). The formal definition of consistency is provided as follows.


\begin{definition}
    \label{def: large_scale_consistency} 
A sample-optimal budget allocation algorithm is consistent if for any $\alpha \in (0, 1) $, there exists a constant $c>0$ such that
    \begin{equation}\liminf\limits_{k \to \infty}\, {\rm PGS}_m \geq  1-\alpha, \mbox{ for } B=ck.
    \end{equation}
\end{definition}

While several budget allocation algorithms have been proposed in the simulation optimization and bandit learning literature and may be applied to virtual screening, their applicability remains limited. As reviewed in Section~\ref{subsec: review}, these algorithms often fall short in addressing several key aspects specific to virtual screening, particularly in large-scale settings. These limitations underscore the need for new budget allocation algorithms that can achieve both sample optimality and consistency for large-scale virtual screening.

\section{Algorithm Design and Performance Analysis}
\label{sec: top-m-boundary-crossing}
In this section, we design a new budget allocation algorithm to achieve sample optimality and consistency for large-scale virtual screening. We then analyze the performance of the proposed algorithm. This analysis provides the foundation for establishing the algorithm's theoretical properties in the next section.

\subsection{Budget Allocation via Top-${m}$ Greedy Selection}
\label{subsec: topm_algorithm}

A pragmatic approach of algorithm development is to adapt ideas in the literature used to solve similar but simpler problems like ranking and selection. 
In this paper, we focus on an interesting greedy allocation framework \citep{bayati2020unreasonable, li2023surprising}. Specifically, we build on the explore-first greedy (EFG) algorithm proposed by \cite{li2023surprising}, which features a straightforward two-phase design: In the exploration phase, a fixed number \( n_0 \) of observations are collected from each alternative. In the greedy phase, the remaining budget is allocated sequentially to the current best alternative with the highest sample mean at each round until the total budget is exhausted. 


To adapt the two-phase EFG algorithm for our budget allocation setting, we introduce only a simple modification to the greedy phase: instead of focusing solely on the single current best alternative at each round, we evaluate the current top-\(m\) alternatives. Specifically, At each round of the greedy phase, one observation is collected for each of the current top-\(m\) alternatives.
Suppose that at round $t=0,1, 2,\dots,(B-n_0k)/m-1$, each alternative $i\in\mathcal{K}$ has received $n_i(t)$ observations with $\sum_{i\in\mathcal{K}}n_i(t)=n_0 k + mt$ and its current sample mean is denoted by $\bar{X}_i(n_i(t))$. 
To determine the alternatives to evaluate in the next round, we utilize the operator 
$\stackrel{1, \dots,  m}{ \argmax}$, which identifies the indices of the top-$m$ elements in descending order.  Then, at the subsequent round $t+1$, the EFG-$m$ algorithm collects one observation for each alternative in the set defined by
\begin{equation}\label{eqn: sampling}
S(t)=\stackrel{1, \dots,  m}{ \argmax}_{i \in \mathcal{K}} \bar{X}_i(n_i(t)).
\end{equation}
The detailed EFG-$m$ algorithm is presented as Algorithm \ref{algorithm: topm}. 

    \RestyleAlgo{ruled}
    \LinesNumbered
    \SetAlgorithmName{Algorithm}{Algorithm}{Algorithm}
    \SetAlgoCaptionLayout{centerline}
    \begin{algorithm}[hbtp]
    \caption{\textbf{Explore-First Top-${m}$ Greedy (EFG-$m$) Algorithm}}
        \label{algorithm: topm}
    \KwIn{$m$, $k$ alternatives and their prompts, the total evaluation budget $B=ck$, the evaluation budget allocated to the exploration phase $n_0k (<B)$, and the LLM}
    \For(\tcp*[f]{{Exploration Phase}}){$i=1$ \KwTo $k$ }{
        prompt the LLM $n_0$ times for alternative $i$  to get $n_0$ observations $x_{i1},\ldots,x_{in_0}$\;
    set $\bar X_i(n_0)=\frac{1}{n_0}\sum_{j=1}^{n_0} x_{ij}$ and let $n_i=n_0$\;
    }
    \While(\tcp*[f]{{Top-m Greedy Phase}}){$\sum_{i=1}^k n_i < B$ }{
    let $\mathcal{S} = \,  \stackrel{1, \dots, m}{ \argmax}_{i \in \{1, \ldots, k\}} \bar X_i(n_i)$\;
      \For{$j \in\mathcal{S}$}{
      prompt the LLM for alternative $j$ to get one observation $x_{j}$; 
      
      update $\bar X_{j}(n_j+1) = \frac{1}{n_j+1}\left[n_j\bar X_{j}(n_j) + x_j\right]$ and let $n_j = n_j+1$\;
      }
    }
    \KwOut{$\stackrel{1, \dots, m}{ \argmax}_{i \in \{1, \ldots, k\}} \bar X_i(n_i)$}
    \end{algorithm}


The EFG-\(m\) algorithm offers two main advantages. First, it is simple to implement. The exploration phase adopts a simple equal-allocation scheme, and identifying the top-\(m\) alternatives in the greedy phase—as shown in Equation~\eqref{eqn: sampling}—is computationally straightforward and fast  (see Section~\ref{subsec: other_algorithms} for the implementation guidance). This simplicity is particularly noteworthy, as it makes the algorithm easy for practitioners to understand and adopt. Second, the straightforward structure of EFG-\(m\) lends itself to flexibility, opening up opportunities to further improve its efficiency by adjusting the budget allocation in both the exploration and greedy phases. We will explore this flexibility in Section~\ref{sec: enhancements}.
Despite the structural simplicity of the EFG-$m$ algorithm, the analysis of its PCS$_m$ and PGS$_m$ is challenging due to its adaptive selection dynamics in the greedy phase. Inspired by the boundary-crossing framework proposed
by \cite{li2023surprising}, we are equipped with a new perspective to dissect the complicated selection process of EFG-$m$ and develop useful lower bounds for the PGS$_m$ in the rest of this section. These bounds are critical for demonstrating the sample optimality and consistency of the EFG-$m$ algorithm. To facilitate understanding, we first examine PCS$_m$ to introduce the core ideas and then extend the analysis to PGS$_m$.

\subsection{Boundary-Crossing Framework for the PCS$_{\mathbf{m}}$}
\label{subsec: boundary_crossing_PCS}
To analyze the PCS$_m$, the key task is to investigate when a given evaluation budget $B$ is sufficient to ensure a correct screening. This task is generally difficult due to the complicated selection process of EFG-$m$, where the sample sizes of all alternatives are inter-connected with all sample means, as outlined in Equation \eqref{eqn: sampling}. To facilitate the analysis, \cite{li2023surprising} introduced a boundary-crossing framework for the standard EFG algorithm (when $m=1$). It uncovers an important insight about the EFG's top-1 greedy selection process: the inferior alternatives continue to receive observations until their running averages fall below that of the best alternative. This implies, while precisely determining the sample size for each inferior alternative is difficult, its sample size can be upper bounded by the boundary-crossing time of its running average from a boundary determined by the best alternative. 
With this framework, we are able to shift our focus from the complex selection dynamics to the more tractable task of analyzing the boundary-crossing times of inferior alternatives, thus making the analysis of PCS$_m$ feasible.

However, the boundary-crossing framework developed for the EFG may not readily apply to EFG-$m$ due to the inherent differences in their selection mechanisms. Unlike the EFG, the EFG-$m$ allows for multiple alternatives to be evaluated at each round, increasing the opportunities for all alternatives to receive observations. Specifically, for any inferior alternative $i\in\mathcal{K}\setminus\mathcal{S}^*$, it may still receive observations even after it is statistically dominated by the best alternative. Therefore, the boundary-crossing time of each inferior alternative is not solely determined by the best alternative, adding complexity to its analysis. 



In what follows, we first consider the boundary-crossing time of each inferior alternative $i\in\mathcal{K}\setminus\mathcal{S}^*$. Under the top-$m$ greedy selection mechanism, once the sample mean of alternative $i$ drops below those of all the top-$m$ alternatives in $\mathcal{S}^*$, it will not be sampled at this round. Moreover, if at this point each of the top-$m$ alternatives has reached the minimum of its running average, then alternative $i$ will be excluded from evaluation in subsequent rounds. This exclusion is due to its consistent dominance by the top-$m$ alternatives. To formalize this, we let $\bar{X}_j^* = \min_{n \in [n_0, \infty)} \bar{X}_j(n)$ represent the minimum running average for $n \in [n_0, \infty)$ for each alternative $j \in \mathcal{S}^*$.
Then, for any inferior alternative $i\in\mathcal{K}\setminus\mathcal{S}^*$, it will never be evaluated again and receive any new observations from round $t$ if
\[
\bar{X}_i(n_i(t))\leq \bar{X}_j^*, \mbox{ for all } j\in\mathcal{S}^*.
\]
This implies that $\bar{X}^{**}=\min_{j\in\mathcal{S}^*}\bar{X}_j^*$ serves as a natural boundary with respect to the boundary-crossing time of alternative $i$ and we summarize the result as follows.
\begin{observation}\label{obs: boundary} Regardless of the total evaluation budget, for each inferior alternative $i\in\mathcal{K}\setminus\mathcal{S}^*$, its sample size can be upper bounded by the boundary-crossing time
\begin{eqnarray}\label{eqn: bound_TS}
    N_i(\bar{X}^{**};n_0):= \inf\{n \in [n_0, \infty): \bar X_i(n)\leq \bar{X}^{**} \}.
\end{eqnarray} 
\end{observation}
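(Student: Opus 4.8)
The plan is to prove the bound by a deterministic, pathwise domination argument, exploiting the fact that the running-average trajectory of each alternative, viewed as a function of its own sample size, is generated by an i.i.d.\ observation stream and is therefore decoupled from the adaptive allocation decisions of EFG-$m$. Concretely, for every alternative $j$ the map $n \mapsto \bar{X}_j(n)$ is determined solely by the draws $x_{j,1}, x_{j,2}, \dots$, so the quantities $\bar{X}_j^* = \min_{n \ge n_0}\bar{X}_j(n)$ and $\bar{X}^{**} = \min_{j \in \mathcal{S}^*}\bar{X}_j^*$ are well-defined random variables that do not depend on how many observations any particular inferior alternative $i$ happens to receive. By the strong law of large numbers each trajectory converges to $\mu_j$, so these minima are almost surely finite and attained.

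First I would fix an inferior alternative $i \in \mathcal{K}\setminus\mathcal{S}^*$ and set $n^* = N_i(\bar{X}^{**};n_0)$, the first sample size at which its running average falls to or below the boundary, i.e.\ $\bar{X}_i(n^*) \le \bar{X}^{**}$. The crux is to show that $i$ can never accumulate more than $n^*$ observations. For this I would argue that at any round $t$ in the greedy phase at which $n_i(t) = n^*$, alternative $i$ is dominated by every member of $\mathcal{S}^*$: since $\bar{X}_j(n_j(t)) \ge \bar{X}_j^* \ge \bar{X}^{**} \ge \bar{X}_i(n^*)$ for all $j \in \mathcal{S}^*$, there are $m$ distinct alternatives whose current sample means are at least that of $i$. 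Hence $i$ cannot belong to the top-$m$ selection $S(t)$ and receives no new observation; consequently $n_i$ remains at $n^*$ at the next round and the same inequality chain applies verbatim, giving exclusion at every subsequent round. It follows that the final sample size of $i$ is at most $n^* = N_i(\bar{X}^{**};n_0)$, which is the claim. When the trajectory never crosses the boundary, $N_i(\bar{X}^{**};n_0) = \infty$ and the bound holds trivially.

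The main obstacle, and the point where EFG-$m$ genuinely departs from the $m=1$ analysis of \cite{li2023surprising}, is establishing the \emph{permanence} of domination rather than mere instantaneous exclusion. For $m=1$ it suffices to compare $i$ against the single best alternative, but here $i$ keeps receiving observations until it has fallen below all $m$ top alternatives simultaneously, and the top-$m$ sample means continue to fluctuate over time. The device that resolves this is the use of the global running-minimum boundary $\bar{X}^{**} = \min_{j \in \mathcal{S}^*}\min_{n \ge n_0}\bar{X}_j(n)$ taken over the entire (infinite) horizon: once $\bar{X}_i$ dips below this worst-case floor, no future evolution of the top-$m$ trajectories can lift alternative $i$ back into contention, which is exactly what converts a one-round exclusion into a permanent one. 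A minor remaining point is tie-breaking in the top-$m$ selection operator defining $S(t)$ when sample means coincide; under the normality of Assumption \ref{assu: normal} such ties occur with probability zero, so the inequalities above may be taken strict almost surely and the exclusion argument is unaffected.
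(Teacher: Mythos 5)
Your proposal is correct and follows essentially the same route as the paper: the paper's justification for this observation is precisely the pathwise domination argument via the all-time minimum boundary $\bar{X}^{**}$, namely that every $j\in\mathcal{S}^*$ always satisfies $\bar{X}_j(n_j(t))\geq \bar{X}_j^*\geq \bar{X}^{**}$, so once $\bar{X}_i$ falls to or below $\bar{X}^{**}$ the inferior alternative $i$ is permanently excluded from the top-$m$ set. Your additional care about tie-breaking (negligible under Assumption \ref{assu: normal}) and the possibly infinite crossing time (which the paper handles in Remark \ref{rem: infinite}) only makes explicit what the paper leaves informal.
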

\begin{remark}
\label{rem: independence}
    \textcolor{black}{The sample mean process $\{\bar{X}_i(n):n=1,2,\ldots\}$ of each alternative $i$ is defined over $n=1$ to $\infty$. This definition does not require the alternative to be evaluated infinitely many times in practice, but it is used to support our theoretical analysis. Building upon it, the corresponding minimal running average \( \bar{X}_j^* \) are mutually independent across \( j \in \mathcal{S}^* \), and the boundary-crossing times \( N_i(\cdot; n_0) \)  are fully determined by the sample path itself, independent of the evaluation budget and its allocation. These quantities, along with their nice properties, form the foundation of our boundary-crossing analysis. }
\end{remark}
\begin{remark}
\label{rem: infinite}
    Notice that \( N_i(\bar{X}^{**};n_0) \) may be infinite. Consider a sample path where \( \bar{X}^{**} \) is significantly low—lower than \( \mu_i \) for some inferior alternative \( i \). In this case, it is possible that \( \bar{X}_i(n) > \bar{X}^{**} \) for all \( n \in [n_0, \infty) \). If this happens, \( N_i(\bar{X}^{**};n_0) \) will be infinite in the sample path, and the algorithm will never select the correct subset regardless of the total evaluation budget. However, this will not affect our analysis.
\end{remark}

The $\bar{X}^{**}$, determined by the entire set of top-$m$ alternatives (i.e., $\mathcal{S}^*$), sets a common boundary for all inferior alternatives and determines their boundary-crossing times. It also plays a crucial role in ensuring a correct screening. To elaborate, suppose that all top-$m$ alternatives reach their minimum running averages, thus reaching the boundary $\bar{X}^{**}$. If the remaining budget is sufficient to lower the sample means of all inferior alternative below the boundary $\bar{X}^{**}$, then the true top-$m$ alternatives remain as the sample top $m$ until the evaluation budget is exhausted, guaranteeing a correct screening. We summarize this observation as follows.
\begin{observation}\label{obs: timelines}
A correct screening is ensured if the evaluation budget in the greedy phase, i.e., $B-n_0k$, is sufficient to cover the following two kinds of key timelines:
\begin{enumerate}
    \item $T_{\mathcal{S}^{*}}^{j}\,(j\in\mathcal{S}^*)$, the number of greedy selection rounds on each alternatives $j$ such that its running average achieves the minimum $\bar{X}_j^{*}$;
    \item $T_{\mathcal{K}\setminus\mathcal{S}^*}$, the number of greedy selection rounds on the inferior alternatives in $\mathcal{K}\setminus\mathcal{S}^*$ such that all their running averages first drop below the boundary $\bar{X}^{**}=\min_{j\in\mathcal{S}^*} \bar{X}_j^{*}$.
\end{enumerate} 
\end{observation}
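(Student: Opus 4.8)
The plan is to argue entirely on a fixed sample path, so that the greedy selection becomes deterministic and the quantities $\bar{X}_j^*$, $\bar{X}^{**}$, $N_i(\bar{X}^{**};n_0)$, and the two timelines are all fixed numbers, as justified in Remark \ref{rem: independence}. Since covering timeline~2 presupposes that every inferior crossing time $N_i(\bar{X}^{**};n_0)$ is finite (cf.\ Remark \ref{rem: infinite}), I only need to treat paths on which this holds; on the remaining paths the hypothesis fails and the implication is vacuous. Under Assumption \ref{assu: normal} the observations are continuous, so with probability one no two running averages ever tie, and I may assume strict orderings throughout.

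First I would record two monotonicity facts. (i) For every $j\in\mathcal{S}^*$ and every $n\ge n_0$, $\bar{X}_j(n)\ge \bar{X}_j^*\ge \bar{X}^{**}$, so the running average of a top-$m$ alternative never drops below the boundary. (ii) The first time an inferior alternative $i$ satisfies $\bar{X}_i(n_i)\le \bar{X}^{**}$, fact~(i) makes it strictly dominated by all $m$ members of $\mathcal{S}^*$; by the top-$m$ greedy rule it is never selected again, so its sample size and running average freeze---this is exactly Observation \ref{obs: boundary}. Combining (i) and (ii) yields the decisive stability statement: once every inferior alternative has crossed below $\bar{X}^{**}$, the $m$ largest sample means are precisely those of $\mathcal{S}^*$, and this remains true no matter how many further greedy rounds elapse. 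Hence it suffices to show that covering the two timelines drives the process into this all-inferior-below-$\bar{X}^{**}$ configuration before the greedy budget $B-n_0k$ is exhausted.

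The accounting I would use is the following. An inferior alternative is selected only while it lies in the current top-$m$, which forces it to sit weakly above at least one $j\in\mathcal{S}^*$; by fact~(i) this is possible only when that $j$ is near its minimum $\bar{X}_j^*$. Timeline~1, which allots enough rounds for each $j\in\mathcal{S}^*$ to descend to $\bar{X}_j^*$, therefore guarantees that the ``overtaking windows'' needed for inferior alternatives to be sampled actually occur; timeline~2, which budgets $\sum_{i\in\mathcal{K}\setminus\mathcal{S}^*}(N_i(\bar{X}^{**};n_0)-n_0)$ inferior selections, then supplies each inferior alternative with enough selections through these windows to reach $\bar{X}^{**}$ and freeze. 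Since Observation \ref{obs: boundary} caps the total inferior selections at $T_{\mathcal{K}\setminus\mathcal{S}^*}$ while every remaining greedy selection falls on $\mathcal{S}^*$, covering both timelines leaves no inferior alternative above the boundary when the budget runs out, which by the stability statement yields a correct screening.

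The hard part will be this last coupling. The greedy rule interleaves top-$m$ and inferior selections adaptively, and a member of $\mathcal{S}^*$ that has already reached its minimum may be re-selected (over-sampled) while other inferior alternatives are still being pushed down, so the two timelines cannot simply be executed one after the other. I expect to neutralize this by noting that such over-sampling is harmless for correctness---by fact~(i) it only keeps the top-$m$ averages at or above $\bar{X}^{**}$---and by charging each greedy round either against the finite inferior budget $T_{\mathcal{K}\setminus\mathcal{S}^*}$ or against the descent of some $j\in\mathcal{S}^*$ toward $\bar{X}_j^*$, so that the number of rounds required to reach the stable configuration is controlled by timelines~1 and~2. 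Converting this charging argument into the sharp budget threshold is what the ensuing lemma makes precise; the present Observation records the qualitative sufficiency on which that bound rests.
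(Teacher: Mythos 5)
Your proposal is correct and takes essentially the same route as the paper: work on a fixed sample path, use the facts that the running averages of $\mathcal{S}^*$ members never fall below $\bar{X}^{**}$ while each inferior alternative freezes once it crosses that boundary (Observation \ref{obs: boundary}), conclude that the all-inferiors-below-boundary configuration is absorbing and forces the output $\mathcal{S}^*$, and bound the rounds needed to reach it by charging every greedy round either to the inferior crossing budget $T_{\mathcal{K}\setminus\mathcal{S}^*}$ or to the descent of some $j\in\mathcal{S}^*$ toward $\bar{X}_j^*$. The paper itself justifies this Observation at the same informal level (one intuitive paragraph plus the remark that at least one inferior alternative is sampled in each round counted by $T_{\mathcal{K}\setminus\mathcal{S}^*}$), so your explicit flagging of the interleaving/over-sampling subtlety is consistent with, and if anything more candid than, the source.
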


From the observation above, the PCS$_m$ of the EFG-$m$ algorithm can be bounded below:
\begin{eqnarray}\label{eqn: PCS}
     \mbox{PCS}_m \geq \Pr\left\{ B-n_0k\geq m T_{\mathcal{K}\setminus\mathcal{S}^*}+m\sum_{j\in\mathcal{S}^*}T_{\mathcal{S}^{*}}^j\right\}.
\end{eqnarray}
Given the maximal sample size of each inferior alternative $i\in \mathcal{K}\setminus\mathcal{S}^*$, as specified in Observation \ref{obs: boundary}, we have that $T_{\mathcal{K}\setminus\mathcal{S}^*}$ must satisfy 
\begin{equation}\label{eqn: bound_K-S}
    T_{\mathcal{K}\setminus\mathcal{S}^*} \leq\ \sum_{i\in\mathcal{K}\setminus\mathcal{S}^*} \left\{N_i (X^{**}; n_0)-n_0\right\},
\end{equation}
where ``$-n_0$" is included to subtract the observations count for each alternative during the exploration phase. The upper bound is constructed based on the fact that at each round in $T_{\mathcal{K}\setminus\mathcal{S}^*}$, at least one inferior alternative is sampled. As the EFG-$m$ allows for simultaneously evaluation of multiple alternatives at each round, this assumption may render the upper bound conservative. Nevertheless, we illustrate that this bound is achievable through a simplified scenario where the observations of the top-$m$ alternatives have no random noise. In this scenario, we set $n_0=1$ and $m=2$, and visualize a specific selection process of the EFG-$m$ in Figure \ref{fig: top_2_sampling}.

\begin{figure}[htbp]
    \begin{center}
\includegraphics[width=0.9\textwidth]{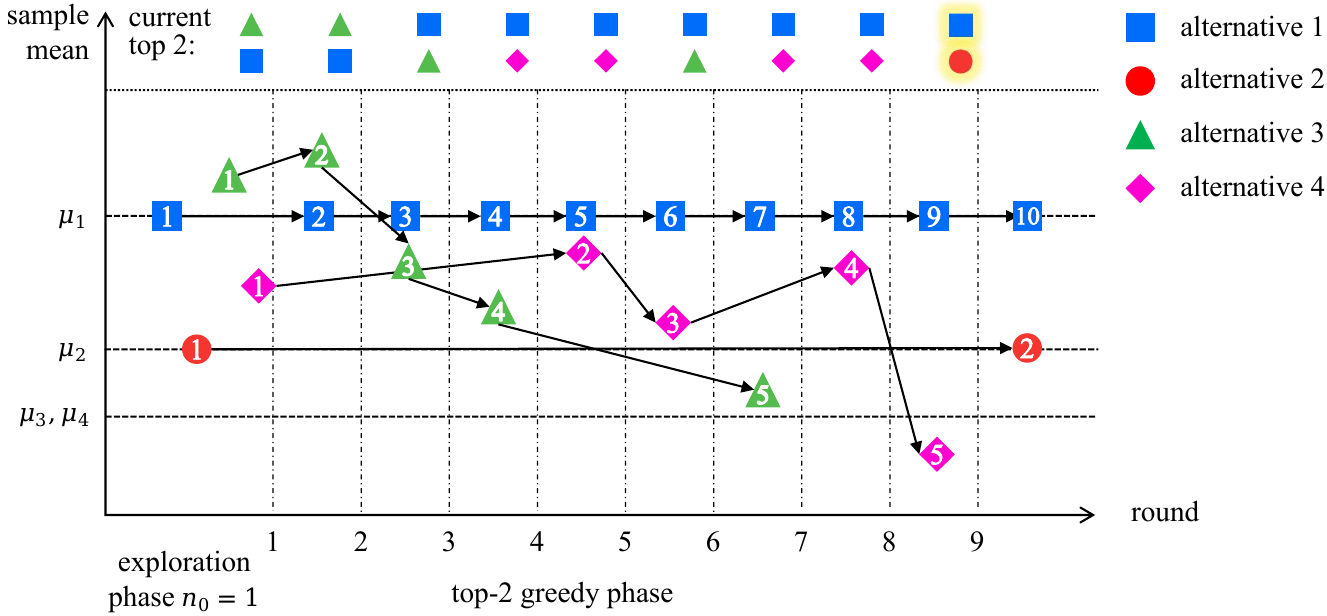}
    \end{center}
    \caption{Top-2 greedy selection process of an example problem with $4$ alternatives where $\bar X_1(n) = \mu_1$ and $\bar X_2(n) = \mu_2, \forall n \geq 1$. Numbers in the markers represent the total sample sizes.}
    \label{fig: top_2_sampling}
\end{figure}

Additionally, by the virtue of definition, it is straightforward that $T_{\mathcal{S}^{*}}^{j}=\argmin_{n \in [n_0, \infty)} \bar{X}_j(n)$ for any $j\in\mathcal{S}^*$. By plugging Equation \eqref{eqn: bound_K-S} into \eqref{eqn: PCS}, we can obtain a lower bound for the desired PCS$_m$:
\begin{eqnarray}
    \label{eq: general_PCS2lbound}
\notag\mbox{PCS}_m &\geq &  \Pr \left\{ B-n_0k \geq m  \left[\sum_{i\in\mathcal{K}\setminus\mathcal{S}^*} \left\{N_i (X^{**}; n_0)-n_0\right\} + \sum_{j\in\mathcal{S}^*} \left\{\argmin_{n \in [n_0, \infty)} \bar{X}_j(n)-n_0\right\}\right]\right\}\\
& = & \Pr \left\{ B-n_0k \geq m \left[\sum_{i\in\mathcal{K}\setminus\mathcal{S}^*} N_i (X^{**}; n_0) + \sum_{j\in\mathcal{S}^*} \argmin_{n \in [n_0, \infty)} \bar{X}_j(n)-n_0k\right]\right\}.
\end{eqnarray}
Equation \eqref{eq: general_PCS2lbound} showcases the power of a boundary-crossing perspective for analyzing the performance of the EFG-$m$ algorithm. Although it seems complicated, it enables us to prove the sample optimality and consistency of the EFG-$m$ algorithm.

\subsection{From PCS$_\mathbf{m}$ to PGS$_{\mathbf{m}}$}
\label{subsec: boundary_crossing_PGS}
Now we extend the boundary-crossing analysis developed above to study the PGS$_m$, where the number of good alternatives is greater than $m$, i.e., $|\mathcal{G}|>m$. For the PCS$_m$ scenario detailed in Section \ref{subsec: boundary_crossing_PCS}, the boundary, which is a crucial element in the boundary-crossing framework, can be clearly defined by the optimal subset $\mathcal{S}^*$. However, such a boundary is no longer readily available when the target turns into selecting a good subset from a pool of good alternatives $\mathcal{G}$. This  arises from the existence of multiple viable good subsets within $\mathcal{G}$, each potentially suggesting a different choice of the boundary.

To address the challenge, we define the minimum running average of each good alternative $l$ in $\mathcal{G}$ as $\bar{X}_l^*=\min_{n \in [n_0, \infty)}\bar{X}_l(n)$. These values are then ranked in descending order:
\begin{equation}\label{eqn: argmin_ordering}
\bar{X}_{(1)}^* \geq \bar{X}_{(2)}^*\geq\dots\geq \bar{X}_{(g)}^*,
\end{equation}
where $g=|\mathcal{G}|$. For illustrative purposes, consider a simple scenario where the observations of each good alternative have no random noise, such that $\bar{X}_l(n) \equiv \mu_l$ for all $n \in [n_0, \infty)$ and all $l\in\mathcal{G}$. In this ideal scenario, 
the top-$m$ alternatives (specifically alternatives $1,2,\dots,m$) will consistently outperform all the other alternatives and continue to be evaluated until the end of algorithm, whenever the running averages of all inferior alternatives in $\mathcal{K}\setminus\mathcal{G}$ fall below $\mu_{m}$, namely the $m$th largest mean. Thus, $\mu_{m}$ effectively acts as the boundary. Extending this logic to the general case with random noise, we analogously set the boundary as $\bar{X}_{(m)}^*$, which represents the $m$th largest value among the minimum running averages $\bar{X}_{(1)}^*,\bar{X}_{(2)}^*,\dots, \bar{X}_{(g)}^*$. It is important to notice that the boundary $\bar{X}_{(m)}^*$ is determined by the entire set of good alternatives. This frees us from having to select a specific subset of good alternatives to determine the boundary, but rather automatically sets the boundary according to the selection processes of all good alternatives. 

Given the boundary $\bar{X}_{(m)}^*$, the boundary-crossing times of all inferior alternatives can be observed as follows, which is an analogous result to Observation \ref{obs: boundary} for the PCS$_m$ scenario.
\begin{observation}\label{obs: boundary_PGS}
For each inferior alternative $i\in\mathcal{K}\setminus\mathcal{G}$, its sample size can be upper bounded by the boundary-crossing time
\begin{eqnarray}\label{eqn: bound_TS2}
    N_i(\bar{X}^{*}_{(m)};n_0):= \inf\{n \in [n_0, \infty): \bar X_i(n)\leq \bar{X}^{*}_{(m)} \}.
\end{eqnarray} 
\end{observation}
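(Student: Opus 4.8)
The plan is to mirror the sample-path domination argument that underlies Observation \ref{obs: boundary}, adapting it to the new boundary $\bar{X}^*_{(m)}$. The starting point is the ordering \eqref{eqn: argmin_ordering}: since $\bar{X}^*_{(m)}$ is the $m$th largest among the minimum running averages of the $g=|\mathcal{G}|$ good alternatives, there exist $m$ good alternatives---precisely those ranked $(1),\dots,(m)$---whose minimum running averages satisfy $\bar{X}^*_{(j)}\geq \bar{X}^*_{(m)}$ for $j=1,\dots,m$. Because $\bar{X}^*_{(j)}=\min_{n\in[n_0,\infty)}\bar{X}_{(j)}(n)$, these $m$ good alternatives keep their running averages at or above the boundary, $\bar{X}_{(j)}(n)\geq \bar{X}^*_{(m)}$, for every $n\geq n_0$ along the entire sample path. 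This is the key structural difference from the PCS$_m$ case, where \emph{all} $m$ alternatives of $\mathcal{S}^*$ sit above $\bar{X}^{**}$; here the boundary is instead calibrated so that \emph{at least} $m$ good alternatives remain protective.

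First I would isolate the \emph{freezing} property of the top-$m$ greedy phase. By the selection rule \eqref{eqn: sampling}, an alternative accrues a new observation in a round only if it currently lies among the top-$m$ sample means, so an alternative not selected in a round retains both its sample size and its running average unchanged. This monotone, self-perpetuating structure is exactly what converts a one-time domination into a permanent exclusion.

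Next I would run the domination argument. Fix an inferior alternative $i\in\mathcal{K}\setminus\mathcal{G}$ and write $N_i:=N_i(\bar{X}^*_{(m)};n_0)$. If $N_i=\infty$ the bound is vacuous (cf.\ Remark \ref{rem: infinite}), so suppose it is finite. Consider the round in which $i$ first accumulates $N_i$ observations; by definition $\bar{X}_i(N_i)\leq \bar{X}^*_{(m)}$. At this round and at every subsequent round, the $m$ good alternatives $(1),\dots,(m)$ each have sample mean $\bar{X}_{(j)}(\cdot)\geq \bar{X}^*_{(m)}\geq \bar{X}_i(N_i)$, and they are distinct from $i$ since they lie in $\mathcal{G}$. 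Hence at least $m$ alternatives dominate $i$, so $i$ is excluded from the selection set $S(\cdot)$ and is not sampled; by the freezing property its running average stays at $\bar{X}_i(N_i)\leq \bar{X}^*_{(m)}$, so the same domination recurs at the following round. An induction on rounds then shows $i$ is never sampled again, yielding $n_i\leq N_i(\bar{X}^*_{(m)};n_0)$ irrespective of the total budget $B$.

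The main obstacle is the boundary tie case $\bar{X}_i(N_i)=\bar{X}^*_{(m)}$, in which $i$ could in principle be swapped into the top-$m$ by tie-breaking rather than strictly excluded. I would dispose of this by invoking Assumption \ref{assu: normal}: the running averages are sums of normals, so the event that $\bar{X}_i(N_i)$ coincides with the $m$th largest protective running average has probability zero, and the strict inequality $\bar{X}_{(j)}(\cdot)>\bar{X}_i(N_i)$ holds almost surely. The remaining points are routine: confirming via Remark \ref{rem: independence} that $\bar{X}^*_{(m)}$ and the $N_i$ are well-defined, budget-independent sample-path functionals, and observing that the argument never needs the protective set $\{(1),\dots,(m)\}$ to equal the true top-$m$, only to consist of good alternatives lying above the boundary.
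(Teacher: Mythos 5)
Your proof is correct and follows essentially the same argument the paper uses (informally) for this observation: the $m$ good alternatives $(1),\dots,(m)$ have running averages that never fall below $\bar{X}^{*}_{(m)}$, so once an inferior alternative's sample mean crosses that boundary it is permanently dominated by $m$ distinct alternatives, excluded from the top-$m$ selection, and never sampled again. Your explicit treatment of the freezing property, the induction over rounds, and the zero-probability tie case merely makes rigorous what the paper leaves implicit.
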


Additionally, the boundary $\bar{X}_{(m)}^*$ basically divides the good alternatives in $\mathcal{G}$ into two disjoint subsets: $\mathcal{G}' = \{(1),(2),\dots,(m)\}$ and $\mathcal{G}''=\{(m+1),(m+2),\dots,(g)\}$. Notice that for each alternative $l$ from $\mathcal{G}''$, whenever it achieves the minimum running average $\bar{X}_l^*$, its sample mean will remain consistently smaller than those of the $m$ alternatives in $\mathcal{G}'$ (by Equation \eqref{eqn: argmin_ordering}) and never be selected again. Then, it is straightforward to obtain the following interesting observation about the good alternatives in $\mathcal{G}''$. 
\begin{observation}\label{obs: boundary_G''}
All the good alternatives from $\mathcal{G}''$ will be ultimately dominated by the $m$ good alternatives in $\mathcal{G}'$. Additionally, for each alternative $l\in\mathcal{G}''$, its sample size can be upper bounded by $\argmin_{n \in [n_0, \infty)}\bar{X}_l(n)$.
\end{observation}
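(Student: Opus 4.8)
The plan is to mirror the boundary-crossing argument already used for the inferior alternatives in Observation~\ref{obs: boundary_PGS}, except that the boundary $\bar{X}_{(m)}^*$ and the subset $\mathcal{G}'$ now play the roles previously held by $\bar{X}^{**}$ and $\mathcal{S}^*$. Two things must be shown: (i) every $l\in\mathcal{G}''$ is eventually dominated by all $m$ members of $\mathcal{G}'$, and (ii) its total sample size never exceeds $n_l^*:=\argmin_{n\in[n_0,\infty)}\bar{X}_l(n)$. As in Remark~\ref{rem: independence}, I would treat each running-average path $\{\bar{X}_l(n)\}$ and each pathwise minimum $\bar{X}_l^*$ as fixed sample-path quantities, so that $n_l^*$ and the ordering \eqref{eqn: argmin_ordering} are well defined independently of how the algorithm allocates its budget.

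The first step is a pointwise order relation. For every $j\in\mathcal{G}'=\{(1),\dots,(m)\}$ and every $n\geq n_0$, the definition of the pathwise minimum together with the fact that $\mathcal{G}'$ collects the $m$ largest such minima gives $\bar{X}_j(n)\geq \bar{X}_j^*\geq \bar{X}_{(m)}^*$. For $l\in\mathcal{G}''$, the ordering \eqref{eqn: argmin_ordering} gives $\bar{X}_l^*\leq \bar{X}_{(m)}^*$; moreover, under Assumption~\ref{assu: normal} the minimal running averages are mutually independent (Remark~\ref{rem: independence}) and atomless, so this inequality is strict almost surely. Combining these, at the instant $l$ attains its pathwise minimum---i.e.\ when its sample size equals $n_l^*$ and $\bar{X}_l(n_l^*)=\bar{X}_l^*$---every one of the $m$ alternatives in $\mathcal{G}'$ has running average strictly exceeding $\bar{X}_l(n_l^*)$, so $l$ lies strictly outside the current top-$m$ set $S(t)$.

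The second step converts this single-instant exclusion into a permanent one. Since the greedy phase raises each sample size by exactly one per selection, $l$ cannot skip over the value $n_l^*$: to reach $n_l^*+1$ observations it would first have to be selected while holding exactly $n_l^*$ observations, which the previous step forbids. Hence $l$ is not selected at that round, its running average freezes at $\bar{X}_l^*$ forever after, and the $m$ members of $\mathcal{G}'$---whose averages never fall below $\bar{X}_{(m)}^*>\bar{X}_l^*$---dominate it for the remainder of the run. This delivers both conclusions: the ultimate domination by $\mathcal{G}'$ and the bound $n_l\leq n_l^*$. If instead $l$ becomes frozen at some sample size below $n_l^*$, the bound holds \emph{a fortiori}.

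I expect the main obstacle to be exactly the non-monotonicity of $\bar{X}_l(\cdot)$: after touching $\bar{X}_l^*$ the path may rebound above the boundary $\bar{X}_{(m)}^*$, which superficially seems to let $l$ re-enter the top-$m$ and collect further observations. The resolution---that the unit increments force $l$ through its exact minimum, where domination by the \emph{entire} set $\mathcal{G}'$ is guaranteed and triggers the freeze before any rebound can matter---is the crux and must be argued carefully. A minor secondary issue is tie-breaking at the boundary, which I would dispose of through the almost-sure strictness of \eqref{eqn: argmin_ordering} under normality.
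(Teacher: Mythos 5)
Your proof is correct and takes essentially the same route as the paper, whose justification for this observation is precisely the argument you give: once an alternative $l\in\mathcal{G}''$ attains its minimum running average $\bar{X}_l^*\leq\bar{X}_{(m)}^*$, it is dominated by all $m$ alternatives in $\mathcal{G}'$ (whose running averages never fall below $\bar{X}_{(m)}^*$), hence is never selected again and its sample size cannot exceed $\argmin_{n\in[n_0,\infty)}\bar{X}_l(n)$. Your two refinements—the unit-increment argument ruling out skipping past the argmin, and the almost-sure strictness of the ordering of the pathwise minima under normality to dispose of ties—simply make explicit what the paper leaves implicit.
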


\begin{remark}
\label{rem: bounds}
While the upper bound in Equation \eqref{eqn: bound_TS2} applies to all alternatives, we introduce a new upper bound specifically for each alternative \( l\in\mathcal{G}'' \) in Observation \ref{obs: boundary_G''}. As shown later in Section \ref{subsec: arguments},  this new upper bound is crucial to establish the sample optimality of EFG-\( m \). In contrast, applying the upper bound in Equation \eqref{eqn: bound_TS2} to alternatives in \( \mathcal{G}'' \) would render the approach invalid.
\end{remark}

As we can see from Observations \ref{obs: boundary_PGS} and \ref{obs: boundary_G''}, the good alternatives included in $\mathcal{G}'$ ultimately dominates not only all the inferior alternatives in $\mathcal{K}\setminus\mathcal{G}$ but also the other good alternatives in  $\mathcal{G}''$, as the selection process proceeds. Specifically, when this event occurs within the evaluation budget $B$, the EFG-$m$ selects the subset $\mathcal{G}'$ at terminal round, leading to a good screening. To formally capture this behavior, we can introduce an observation analogous to Observation \ref{obs: timelines}.
\begin{observation}\label{obs: timelines_PGS}
A good screening is ensured if the evaluation budget in the greedy selection phase, i.e., $B-n_0k$, is sufficient to cover the following three kinds of key timelines:
\begin{enumerate}
    \item $T_{\mathcal{G}'}^j\,(j\in\mathcal{G}')$, the number of greedy selection rounds on alternatives $j\in\mathcal{G}'$ such that its minimum  running average $\bar{X}_j^{*}$ is reached;
    \item $T_{\mathcal{K}\setminus\mathcal{G}}$, the number of greedy selection rounds on the inferior alternatives in $\mathcal{K}\setminus\mathcal{G}$ such that all their running averages first drop below $\bar{X}^{*}_{(m)}$;
    \item $T_{\mathcal{G}''}^l\,(l\in\mathcal{G}'')$, the number of greedy selection rounds on good alternative $l\in \mathcal{G}''$ such that its minimum running average $\bar{X}_l^{*}$ is reached.
\end{enumerate} 
\end{observation}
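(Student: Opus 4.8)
The plan is to replicate, for the good-screening event, the boundary-crossing argument that underlies Observation \ref{obs: timelines} and the lower bound \eqref{eqn: PCS}--\eqref{eq: general_PCS2lbound}, now using the boundary $\bar{X}_{(m)}^*=\min_{j\in\mathcal{G}'}\bar{X}_j^*$ in place of $\bar{X}^{**}=\min_{j\in\mathcal{S}^*}\bar{X}_j^*$. Concretely, I would show that once the three timelines are covered the selection process reaches a ``lock-in'' state in which the $m$ members of $\mathcal{G}'$ occupy the sample top-$m$ and no other alternative can ever re-enter it; since $\mathcal{G}'\subseteq\mathcal{G}$, the terminal selection $\mathcal{S}=\mathcal{G}'$ yields a good screening. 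The whole argument is a separation statement: $\mathcal{G}'$ sits (weakly) above the boundary forever, while every alternative in $(\mathcal{K}\setminus\mathcal{G})\cup\mathcal{G}''$ is eventually frozen at or below it.

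The key steps are three, one per class of alternative. First, for each $j\in\mathcal{G}'$ the defining inequality $\bar{X}_j(n)\geq\bar{X}_j^*\geq\bar{X}_{(m)}^*$ holds for every $n\geq n_0$, so a member of $\mathcal{G}'$ never drops below the boundary regardless of how many observations it later receives. Second, for each inferior $i\in\mathcal{K}\setminus\mathcal{G}$, covering $T_{\mathcal{K}\setminus\mathcal{G}}$ guarantees (via the boundary-crossing bound of Observation \ref{obs: boundary_PGS}) that $\bar{X}_i$ has crossed below $\bar{X}_{(m)}^*$; from that round on $i$ is dominated by all $m$ members of $\mathcal{G}'$ and is never sampled again, so its mean is frozen at or below the boundary. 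Third, for each $l\in\mathcal{G}''$, covering $T_{\mathcal{G}''}^l$ drives its running average to the minimum $\bar{X}_l^*\leq\bar{X}_{(m)}^*$, at which point Observation \ref{obs: boundary_G''} shows it too is permanently dominated by $\mathcal{G}'$ and frozen. Combining the three, after the last of these events every non-$\mathcal{G}'$ alternative lies at or below $\bar{X}_{(m)}^*$ while each $\mathcal{G}'$ alternative lies at or above it, so the sample top-$m$ equals $\mathcal{G}'$ and remains so until the budget is spent. The role of timeline 1 is then purely to account for the observations that the $\mathcal{G}'$ alternatives themselves absorb before reaching their minima, exactly as $T_{\mathcal{S}^*}^j$ does in \eqref{eqn: PCS}.

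The main obstacle is the budget bookkeeping rather than the separation logic. Because each greedy round draws $m$ observations that may simultaneously advance several timelines, I would, as in the PCS derivation, use the conservative additive count $B-n_0k\geq m\big(\sum_{j\in\mathcal{G}'}T_{\mathcal{G}'}^j+T_{\mathcal{K}\setminus\mathcal{G}}+\sum_{l\in\mathcal{G}''}T_{\mathcal{G}''}^l\big)$, justified by charging at least one observation per round to the alternative responsible for that round's timeline and invoking $T_{\mathcal{K}\setminus\mathcal{G}}\leq\sum_{i\in\mathcal{K}\setminus\mathcal{G}}\{N_i(\bar{X}_{(m)}^*;n_0)-n_0\}$. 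Two points need care. First, one must use the \emph{tighter} bound $\argmin_{n}\bar{X}_l(n)$ for each $l\in\mathcal{G}''$ rather than its boundary-crossing time $N_l(\bar{X}_{(m)}^*;n_0)$; as flagged in Remark \ref{rem: bounds}, the latter can be infinite for a good alternative whose mean never falls below the boundary and would invalidate the count. Second, at the boundary ties may occur (e.g.\ $\bar{X}_{(m)}^*=\bar{X}_{(m+1)}^*$), making the split into $\mathcal{G}'$ and $\mathcal{G}''$ non-unique; this is harmless because every tied alternative lies in $\mathcal{G}$, so any resulting top-$m$ is still a good subset and the conclusion $\mathcal{S}\subseteq\mathcal{G}$ is unaffected.
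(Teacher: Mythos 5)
Your proposal is correct and follows essentially the same route as the paper: the same boundary $\bar{X}^*_{(m)}$, the same three-way split into $\mathcal{G}'$, $\mathcal{G}''$, and $\mathcal{K}\setminus\mathcal{G}$ with the domination/freezing logic of Observations \ref{obs: boundary_PGS} and \ref{obs: boundary_G''} (including the use of $\argmin_{n \in [n_0, \infty)}\bar{X}_l(n)$ rather than a boundary-crossing time for $l\in\mathcal{G}''$, as in Remark \ref{rem: bounds}), and the same conservative per-round budget count yielding Equation \eqref{eqn: bound_PGS}. Your explicit handling of ties at the boundary is a minor addition the paper leaves implicit (such ties occur with probability zero under Assumption \ref{assu: normal}), and it does not alter the argument.
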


Compared to its analogous Observation \ref{obs: timelines}, Observation \ref{obs: timelines_PGS} involves an additional timeline $T_{\mathcal{G}''}^l$, which is crucial for tracking the selection dynamics of the good alternatives within $\mathcal{G}''$. Then, the PGS$_m$ can be expressed as
\begin{eqnarray}\label{eqn: bound_PGS}
    \mbox{PGS}_m \geq \Pr\left\{ B -   n_0k\geq m\sum_{j\in\mathcal{G}'} T_{\mathcal{G}'}^j+mT_{\mathcal{K}\setminus\mathcal{G}}+m\sum_{l\in\mathcal{G}''}T_{\mathcal{G}''}^l\right\}.
\end{eqnarray}
Similarly to Equation  \eqref{eqn: bound_K-S}, we have
\begin{equation}\label{eqn: bound_K-G}
    T_{\mathcal{K}\setminus\mathcal{G}} \leq\ \sum_{i\in\mathcal{K}\setminus\mathcal{G}} \left\{N_i (\bar{X}_{(m)}^{*}; n_0)-n_0\right\}.
\end{equation}
Meanwhile, Observations \ref{obs: boundary_PGS} and \ref{obs: boundary_G''} show that
\begin{eqnarray}\label{eqn: bound_argmin}
    T_{\mathcal{G}'}^j = \argmin_{n \in [n_0, \infty)} \bar{X}_j(n)-n_0, \mbox{  and  } T_{\mathcal{G}''}^l = \argmin_{n \in [n_0, \infty)} \bar{X}_l(n)-n_0.
\end{eqnarray}
Plugging Equations  \eqref{eqn: bound_K-G} and \eqref{eqn: bound_argmin} into \eqref{eqn: bound_PGS} leads to
\begin{equation}\label{eqn: bound_PGS2}
\begin{aligned}
\mbox{PGS}_m &\geq\ \Pr \left\{ B-n_0k \geq m \left[\sum_{i\in\mathcal{K}\setminus\mathcal{G}} \left\{N_i (X_{(m)}^{*}; n_0)-n_0\right\} + \sum_{j\in\mathcal{G}} \left\{\argmin_{n \in [n_0, \infty)} \bar{X}_j(n)-n_0\right\}\right]\right\}\\
&\geq\ \Pr \left\{ B-n_0k \geq m \left[\sum_{i\in\mathcal{K}\setminus\mathcal{G}} N_i (X_{(m)}^{*}; n_0)+ \sum_{j\in\mathcal{G}} \argmin_{n \in [n_0, \infty)} \bar{X}_j(n)-n_0k\right]\right\}.
\end{aligned}
\end{equation}

The PGS$_m$ lower bound above is an extension of the PCS$_m$ lower bound in Equation \eqref{eq: general_PCS2lbound}. When $\mathcal{S}^*=\mathcal{G}$, these two lower bounds become identical. In addition, it is worth acknowledging that the PGS$_m$ lower bound may not be tight. It is because it is constructed based on a particular scenario that leads to a good screening. Nonetheless, as we will show in the following section, this PGS$_m$ lower bound is sufficient to establish the sample optimality and consistency of the EFG-$m$ algorithm.


\section{Properties of the EFG-${m}$ Algorithm}
\label{sec: sample-optimality}

In this section, we prove the desired properties of the EFG-$m$ algorithm, particularly its sample optimality and consistency in terms of the PGS$_m$. We first introduce a necessary assumption on the LLM's evaluation output $X_i$ for each alternative in the asymptotic regime as $k\to\infty$.

\begin{assumption}
    \label{assu: guassian}
    There exists a positive constant $\bar \sigma$ such that
    $ \max\limits_{i\in\mathcal{K}}\sigma_i^2 \leq \bar \sigma^2$ regardless of how large the problem scale $k$ is.
\end{assumption}
This assumption helps to prevent scenarios where the variances of the LLM's evaluation outputs could become unbounded as  $k$ increases. Importantly, it is worth noting that the value of the variance bound $\bar\sigma^2$ does not need to be known for the implementation and analysis of our EFG-$m$ algorithm.

Meanwhile, we also include here the boundedness assumption on  the number of good alternatives (i.e., $|\mathcal{G}|$), which has been explained in Section \ref{subsec: regime}. It helps to streamline our analysis and understanding of the desired properties of EFG-$m$.  
\begin{assumption}\label{assum: bounded_g}
    There exists a positive constant $\bar{g}$ such that $g=|\mathcal{G}|\leq \bar{g}$ regardless of how large $k$ is.
\end{assumption}

\subsection{Sample Optimality}
\label{subsec: arguments}

To establish the sample optimality as defined in Definition \ref{def: rate_optimality_PCS}, it suffices to verify that the PGS$_m$ lower bound outlined in Equation \eqref{eqn: bound_PGS2} does not approach to zero as $k\to\infty$, given a linearly growing evaluation budget $B=ck$ for some positive constant $c$. For ease of presentation, we rewrite Equation \eqref{eqn: bound_PGS2} as
\begin{equation}\label{eqn: bound_PGS3}
\mbox{PGS}_m \geq \Pr \left\{ \frac{B}{mk}-\frac{n_0}{m} +n_0 \geq  \frac{1}{k} \sum_{j\in\mathcal{G}}\argmin_{n \in [n_0, \infty)}  \bar{X}_j(n)
    + \frac{1}{k} \sum_{i\in\mathcal{K}\setminus\mathcal{G}}  N_i \left(X^*_{(m)}; n_0\right)\right\}. 
\end{equation}

The equation above relates to the properties of the running average process $\{\bar{X}_i(n): n=n_0,n_0+1,\dots\}$ for each alternative $i\in\mathcal{K}$, particularly focusing on its minimum  or boundary-crossing time. 
As such, we first summarize  two useful lemmas from \cite{li2023surprising} concerning their properties.
\begin{lemma}\label{lem: argmin}
    For any alternative $j\in\mathcal{G}$, its running average process $\{\bar{X}_j(n): n=n_0,n_0+1,\dots\}$ reaches its minimum within a finite number of observations almost surely, i.e., $\argmin_{n \in [n_0, \infty)}\bar{X}_j(n)<\infty$ almost surely.
\end{lemma}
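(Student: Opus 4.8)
The goal is to show that for any good alternative $j\in\mathcal{G}$, the running average process $\{\bar{X}_j(n): n=n_0, n_0+1, \dots\}$ attains its infimum at some finite index almost surely. Since each $\bar{X}_j(n)$ is the sample mean of $n$ i.i.d.\ normal observations with mean $\mu_j$ and finite variance $\sigma_j^2$, the core idea is to argue that the process cannot keep achieving new record lows indefinitely, because it converges to $\mu_j$ and the fluctuations around $\mu_j$ shrink.

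\textbf{Plan.} The strategy I would take is to combine a strong law of large numbers argument with a control on the early-time fluctuations. First I would invoke the strong law: $\bar{X}_j(n)\to\mu_j$ almost surely as $n\to\infty$. This already tells us that for large $n$ the running average is pinned near $\mu_j$, so the infimum cannot be chased off to infinity unless the process dips strictly below $\mu_j$ at arbitrarily late times. The key quantitative input is therefore to rule out such late dips. I would fix a level strictly below $\mu_j$, say $\mu_j-\epsilon$ for some $\epsilon>0$, and show that almost surely only finitely many $n$ satisfy $\bar{X}_j(n)\le \mu_j-\epsilon$. Granting this, the running average is eventually trapped in $(\mu_j-\epsilon, \infty)$, so the infimum over $n\in[n_0,\infty)$ must coincide with the minimum over a finite initial segment (the finitely many indices where the process can go low plus the indices needed to pass below the running value), and a minimum over a finite set is attained at a finite index.

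\textbf{Main obstacle.} The hard part is establishing that $\bar{X}_j(n)\le\mu_j-\epsilon$ happens only finitely often almost surely — this is a statement about the \emph{entire tail} of the sample-mean sequence, not about a single large $n$, so a pointwise deviation bound does not immediately suffice. The clean way to handle this is a Borel--Cantelli argument: for normal (or more generally sub-Gaussian) observations, $\Pr\{\bar{X}_j(n)\le \mu_j-\epsilon\}\le \exp(-n\epsilon^2/(2\sigma_j^2))$, and I would need the events $\{\bar{X}_j(n)\le\mu_j-\epsilon\}$ to be summable. The per-$n$ Gaussian tail bound is summable in $n$, so the first Borel--Cantelli lemma gives that $\{\bar{X}_j(n)\le\mu_j-\epsilon\}$ occurs only finitely often almost surely. (One must be a little careful that the events are not independent, but Borel--Cantelli's first lemma needs only summability of probabilities, not independence, so this is fine.)

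\textbf{Assembling the argument.} Putting the pieces together: by the above, there is an almost surely finite random index $N^\ast$ beyond which $\bar{X}_j(n)>\mu_j-\epsilon$ for all $n\ge N^\ast$. On the other hand, over the finite block $n\in[n_0,N^\ast]$ the running average takes finitely many values and hence attains its minimum, say $v=\min_{n_0\le n\le N^\ast}\bar{X}_j(n)$. If $v\le\mu_j-\epsilon$, then the global infimum over $[n_0,\infty)$ equals this finite-block minimum and is attained at a finite index; if instead $v>\mu_j-\epsilon$, then the whole process stays above $\mu_j-\epsilon$, and combined with $\bar{X}_j(n)\to\mu_j$ one sees the infimum is again attained within a finite window (the tail values accumulate near $\mu_j\ge v-\epsilon$, so only finitely many indices can compete for the minimum). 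Either way $\argmin_{n\in[n_0,\infty)}\bar{X}_j(n)<\infty$ almost surely, which is the claim. Since this lemma is quoted from \cite{li2023surprising}, I would expect the paper to simply cite it rather than reproduce the Borel--Cantelli computation, but the argument above is the natural self-contained route.
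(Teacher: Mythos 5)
Your proposal cannot be compared against a proof in the paper, because the paper does not prove this lemma at all: it imports it verbatim from \cite{li2023surprising} as one of two lemmas summarized from that reference. Judged on its own merits, your argument has a genuine gap in the final assembly step, specifically in the case where $v=\min_{n_0\le n\le N^\ast}\bar X_j(n) > \mu_j-\epsilon$. There you assert that ``only finitely many indices can compete for the minimum,'' but that is exactly what fails in the one scenario your tools do not exclude: the running average may stay strictly above $\mu_j$ for every $n\ge n_0$ while converging to $\mu_j$ from above. In that scenario the infimum equals $\mu_j$, is approached as $n\to\infty$, and is never attained, so $\argmin_{n\in[n_0,\infty)}\bar X_j(n)$ is not finite. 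Note that non-attainment can only occur this way: if the infimum $I$ were not attained and $I<\mu_j$, any minimizing subsequence would have to escape to infinity and hence converge to $\mu_j>I$, a contradiction; so the entire burden of the proof is to show $\Pr\left\{\bar X_j(n)>\mu_j \mbox{ for all } n\ge n_0\right\}=0$. Your Borel--Cantelli bound is silent on this event: for each fixed $\epsilon>0$ it controls dips below $\mu_j-\epsilon$, but a path that never dips below $\mu_j$ at all satisfies that conclusion vacuously.

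Closing the gap requires a fluctuation-theoretic input about the centered walk $\tilde S_n = \sum_{l\le n}(X_{jl}-\mu_j)$, namely that $\liminf_{n\to\infty}\tilde S_n=-\infty$ almost surely (equivalently, $\bar X_j(n)<\mu_j$ infinitely often). This follows, for example, from Chung--Fuchs recurrence of mean-zero finite-variance random walks, from the law of the iterated logarithm, or, for Gaussian increments, from the identity $\Pr\{\tilde S_n\le -M\sqrt n\}=\Phi(-M/\sigma_j)>0$ combined with the Kolmogorov zero--one law applied to the tail event $\{\liminf_n \tilde S_n/\sqrt n\le -M\}$. Once this fact is in hand, the proof actually simplifies and your Borel--Cantelli computation becomes unnecessary: almost surely there exists a finite $n_1\ge n_0$ with $\bar X_j(n_1)<\mu_j$, and by the strong law there exists a finite $N$ with $\bar X_j(n)>\bar X_j(n_1)$ for all $n\ge N$; hence the minimum over $[n_0,\infty)$ coincides with the minimum over the finite set $\{n_0,\dots,N\}$ and is attained there. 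In short, the essential ingredient is the downward fluctuation of the sample mean below $\mu_j$, not the summability of the downward large-deviation probabilities, and that ingredient is missing from your proposal.
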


\begin{lemma}\label{lem: boundary-crossing}
    For any alternative $i\in \mathcal{K}\setminus\mathcal{G}$ and a fixed boundary $x>\mu_i$, its associated boundary-crossing time $N_i \left(x; n_0\right)$ of the running average process $\{\bar{X}_i(n): n=n_0,n_0+1,\dots\}$ has a finite expectation. Specifically, 
    \[
    \mathrm{E}[N_i(x;n_0)]\leq C\left(\frac{x-\mu_i}{\bar\sigma};n_0\right),
    \]
    where $C(z;n_0)=\mathrm{E}[\inf\{n \in [n_0, \infty): \bar{Z}(n)\leq z\}]$ and  $\{\bar{Z}(n): n =  n_0, n_0+1,\dots\}$ denotes the running average process of a sequence of independent standard normal random variables.
\end{lemma}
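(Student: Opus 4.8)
The plan is to reduce the boundary-crossing time of the general normal running-average process to the standard-normal case by a pathwise scaling (coupling) argument, then to prove finiteness of the standard-normal crossing-time expectation via a summation of Gaussian tail probabilities, and finally to pass from the per-alternative standard deviation $\sigma_i$ to the uniform bound $\bar\sigma$ using monotonicity. First I would invoke Assumption \ref{assu: normal} to write each observation as $X_{ij}=\mu_i+\sigma_i Z_j$ with $Z_j$ i.i.d.\ standard normal, so that $\bar{X}_i(n)=\mu_i+\sigma_i\bar{Z}(n)$ holds \emph{pathwise} on a common probability space. Under this coupling the event $\bar{X}_i(n)\leq x$ is exactly $\bar{Z}(n)\leq (x-\mu_i)/\sigma_i$, so
\[
N_i(x;n_0)=\inf\left\{n\in[n_0,\infty):\bar{Z}(n)\leq \frac{x-\mu_i}{\sigma_i}\right\}
\]
as random variables, which gives the identity $\mathrm{E}[N_i(x;n_0)]=C\!\left((x-\mu_i)/\sigma_i;n_0\right)$. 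Since $x>\mu_i$, the argument $z_i:=(x-\mu_i)/\sigma_i$ is strictly positive, which is what makes the crossing eventually happen.

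The step I expect to carry the real content is proving $C(z;n_0)<\infty$ for every $z>0$; by the strong law of large numbers $\bar{Z}(n)\to 0$ almost surely, so the crossing time is finite a.s., but finiteness of its \emph{expectation} is the genuine obstacle. I would use the layer-cake identity $\mathrm{E}[N]=\sum_{n\geq 0}\Pr\{N>n\}$. For $n<n_0$ the terms equal one and contribute $n_0$; for $n\geq n_0$ the event $\{N>n\}$ is contained in $\{\bar{Z}(n)>z\}$, and because $\bar{Z}(n)\sim\mathrm{Normal}(0,1/n)$, the standard Gaussian tail bound $1-\Phi(t)\leq (t\sqrt{2\pi})^{-1}e^{-t^2/2}$ yields $\Pr\{\bar{Z}(n)>z\}\leq (z\sqrt{2\pi n})^{-1}e^{-z^2 n/2}$. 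The resulting series converges geometrically precisely because $z>0$, so $C(z;n_0)$ is finite, establishing both the finite-expectation claim and a usable bound.

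Finally I would establish that $C(\cdot;n_0)$ is non-increasing: for $0<z_1\leq z_2$ the pathwise inclusion $\{n:\bar{Z}(n)\leq z_1\}\subseteq\{n:\bar{Z}(n)\leq z_2\}$ forces $\inf\{n\geq n_0:\bar{Z}(n)\leq z_2\}\leq \inf\{n\geq n_0:\bar{Z}(n)\leq z_1\}$, and taking expectations gives $C(z_2;n_0)\leq C(z_1;n_0)$. Assumption \ref{assu: guassian} supplies $\sigma_i\leq\bar\sigma$, and since $x-\mu_i>0$ this means $z_i=(x-\mu_i)/\sigma_i\geq (x-\mu_i)/\bar\sigma>0$; monotonicity then delivers $\mathrm{E}[N_i(x;n_0)]=C(z_i;n_0)\leq C\!\left((x-\mu_i)/\bar\sigma;n_0\right)$, which is the asserted bound. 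The scaling and monotonicity manipulations are purely deterministic coupling arguments, so the only place where probabilistic care is needed is the geometric tail summation in the second step.
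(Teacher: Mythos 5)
Your proof is correct. Note that the paper does not actually prove this lemma: it is imported verbatim (as one of ``two useful lemmas'') from \cite{li2023surprising}, so there is no in-paper argument to compare against. Your reconstruction---coupling $\bar X_i(n)=\mu_i+\sigma_i\bar Z(n)$ to reduce to the standard-normal process, proving $C(z;n_0)<\infty$ for $z>0$ via the layer-cake identity $\mathrm{E}[N]=\sum_{n\geq 0}\Pr\{N>n\}$ combined with the Mills-ratio tail bound (the inclusion $\{N>n\}\subseteq\{\bar Z(n)>z\}$ and geometric summability of $e^{-z^2n/2}$ being the key steps), and then passing from $\sigma_i$ to $\bar\sigma$ by monotonicity of $C(\cdot\,;n_0)$---is the standard argument for such results and is sound at every step; in particular, the layer-cake identity correctly handles the a priori possibility $\Pr\{N=\infty\}>0$, since finiteness of the sum rules it out. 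The only cosmetic remark is that the reduction in your first step yields the exact identity $\mathrm{E}[N_i(x;n_0)]=C\bigl((x-\mu_i)/\sigma_i;n_0\bigr)$, which is slightly stronger than the stated inequality and makes clear that the bound with $\bar\sigma$ is simply the worst case over the variance bound of Assumption \ref{assu: guassian}.
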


Based on the two lemmas, we  establish the sample optimality of the EFG-$m$  by first presenting two key arguments. These arguments not only sketch our proof, but also explain the insights behind. 
 
\begin{argument}
    \label{argu4}
     $\sum\limits_{j\in\mathcal{G}} \argmin_{n \in [n_0, \infty)}  \bar{X}_j(n)<\infty$ almost surely.
    \end{argument}

Argument \ref{argu4} is straightforward by combining Assumption \ref{assum: bounded_g} and Lemma \ref{lem: argmin}. It indicates that all the good alternatives in $\mathcal{G}$ are guaranteed to reach the minimums of their running averages within a finite number of observations almost surely. Consequently, the boundary $\bar{X}_{(m)}^*$ specified in the boundary-crossing framework of the EFG-$m$ algorithm can also be reached within a finite number of observations almost surely.  This implies, given the evaluation budget $B=ck$ as $k\to\infty$, the requirement of reaching the boundary $\bar{X}_{(m)}^*$ does not affect the sample optimality.

    \begin{argument}
        \label{argu5}
        Under the condition $\bar X^*_{(m)} \geq \mu_m - \delta_0$ with  $\delta_0 \in (0, \delta)$,  $\limsup\limits_{k\to\infty}\, \frac{1}{k} \sum\limits_{i\in\mathcal{K}\setminus\mathcal{G}} N_i (\bar X^*_{(m)}; n_0)\leq C\left(\frac{\delta - \delta_0}{\bar \sigma} ; n_0\right)$ almost surely.
        \end{argument}
        
Under the condition $\bar X^*_{(m)} \geq \mu_m - \delta_0$, for each inferior alternative $i\in\mathcal{K}\setminus\mathcal{G}$, because $\mu_i<\mu_m-\delta$,
                    \begin{eqnarray}
                        \label{eq: bound_transform}
                        \begin{aligned}
                        N_i \left(X^*_{(m)}; n_0\right) 
                        &=\ \inf\left\{n \in [n_0, \infty): \bar X_i(n) \leq \bar X^*_{(m)}\right\}\\
                        &\leq\ \inf\left\{n \in [n_0, \infty): \bar X_i(n) \leq\mu_m-\delta_0\right\} \\
                       &\leq\  \inf\left\{n \in [n_0, \infty): \bar X_i(n) \leq\mu_i+\delta-\delta_0\right\}\\
                       & =\ N_i \left(\mu_i+\delta-\delta_0; n_0\right).
                   \end{aligned}
                    \end{eqnarray}
Notice that \( N_i \left(X^*_{(m)}; n_0\right) \) for different \( i \) are generally correlated due to the shared random boundary \( X^*_{(m)} \). However, their corresponding upper bounds, \( N_i \left(\mu_i + \delta - \delta_0; n_0\right) \), are mutually independent because the boundary becomes the constant \( \mu_i + \delta - \delta_0 \) (see Remark \ref{rem: independence} for a reminder). By leveraging this independence, we may use the strong law of large numbers (SLLN) and Lemma \ref{lem: boundary-crossing} to obtain Argument \ref{argu5}.

Argument \ref{argu5} highlights an important feature of the EFG-$m$ algorithm that, whenever the condition $\bar X^*_{(m)}\geq \mu_m - \delta_0$ is satisfied, the maximal evaluation budget allocated to all the inferior alternatives, quantified by the sum of their corresponding boundary-crossing times, is  $\mathcal{O}(k)$.  This feature is crucial to understand the sample optimality of the EFG-$m$ algorithm. It also suggests that the PGS$_m$ might be closely related to the probability of fulfilling the condition $\bar X^*_{(m)}\geq \mu_m - \delta_0$.

\vspace*{0.3cm}
Building on the two arguments above, we can rigorously prove the sample optimality of the EFG-$m$ algorithm, which is presented in the following theorem. The detailed proof is deferred to \ref{subsec: proof_nonnormal}. 

\begin{theorem}
    \label{thm: nonnormalPGS}
    Suppose that Assumptions \ref{assu: guassian} and \ref{assum: bounded_g} hold. If the total evaluation budget $B=(n_0 + n_g) k$ and $n_g > m\left[C\left(\frac{\delta}{\bar \sigma} ; n_0\right)-n_0\right]$, the PGS$_m$ of the EFG-$m$ algorithm satisfies
    \begin{eqnarray}\label{eqn: limitingPGS_bound}
    \liminf\limits_{k \to \infty} {\rm PGS}_m \geq  \Pr\left\{\bar X_{(m)}^* \geq  \mu_m - \delta_0\right\} > 0,
    \end{eqnarray}
    where $\delta_0 \in (0, \delta)$ is a constant such that $m\left[C\left(\frac{\delta - \delta_0}{\bar \sigma}; n_0\right)-n_0\right] = n_g$. Therefore, the EFG-$m$ algorithm is sample optimal.
    \end{theorem}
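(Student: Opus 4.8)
The plan is to establish the two claims in Theorem~\ref{thm: nonnormalPGS} separately: first the limiting lower bound $\liminf_{k\to\infty}\mathrm{PGS}_m \geq \Pr\{\bar X_{(m)}^* \geq \mu_m - \delta_0\}$, and then the strict positivity of this probability, which together yield sample optimality per Definition~\ref{def: rate_optimality_PCS}. Starting from the PGS$_m$ lower bound in Equation~\eqref{eqn: bound_PGS3}, I would condition on the event $\mathcal{E} = \{\bar X_{(m)}^* \geq \mu_m - \delta_0\}$ and argue that on this event, with the prescribed budget $B = (n_0 + n_g)k$, the inequality inside the probability in~\eqref{eqn: bound_PGS3} holds asymptotically almost surely. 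The left-hand side of that inequality, $\frac{B}{mk} - \frac{n_0}{m} + n_0 = \frac{n_g}{m} + n_0$, is a fixed constant determined by the budget. So the task reduces to showing the right-hand side is, in the limit, bounded above by this constant on $\mathcal{E}$.

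The right-hand side splits into the good-alternative term $\frac{1}{k}\sum_{j\in\mathcal{G}}\argmin_{n}\bar X_j(n)$ and the inferior-alternative term $\frac{1}{k}\sum_{i\in\mathcal{K}\setminus\mathcal{G}}N_i(X^*_{(m)};n_0)$. By Argument~\ref{argu4}, the first sum is finite almost surely and, being a sum of a bounded number (at most $\bar g$) of finite terms, divided by $k$, vanishes as $k\to\infty$. For the second term, I would invoke Argument~\ref{argu5}: on the event $\mathcal{E}$, its $\limsup$ is almost surely at most $C\!\left(\frac{\delta-\delta_0}{\bar\sigma};n_0\right)$. Combining these, the $\limsup$ of the entire right-hand side on $\mathcal{E}$ is at most $C\!\left(\frac{\delta-\delta_0}{\bar\sigma};n_0\right)$. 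Now I use the defining relation for $\delta_0$, namely $m\left[C\!\left(\frac{\delta-\delta_0}{\bar\sigma};n_0\right)-n_0\right] = n_g$, which rearranges to $C\!\left(\frac{\delta-\delta_0}{\bar\sigma};n_0\right) = \frac{n_g}{m}+n_0$, exactly matching the constant left-hand side. Hence on $\mathcal{E}$ the required inequality holds in the limit almost surely, so the conditional probability of the event in~\eqref{eqn: bound_PGS3} tends to one on $\mathcal{E}$; taking the liminf and restricting attention to $\mathcal{E}$ yields $\liminf_{k\to\infty}\mathrm{PGS}_m \geq \Pr\{\mathcal{E}\}$.

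For strict positivity of $\Pr\{\bar X_{(m)}^* \geq \mu_m - \delta_0\}$, I would note that $\bar X_{(m)}^*$ is the $m$th largest among the minimal running averages $\bar X_l^*$ of the good alternatives $l\in\mathcal{G}$. A sufficient event is that at least $m$ of these minima exceed $\mu_m - \delta_0$; in particular, it suffices that each of the top-$m$ alternatives $j\in\{1,\dots,m\}$ (which have $\mu_j \geq \mu_m$) satisfies $\bar X_j^* \geq \mu_m - \delta_0$. Since each $\bar X_j^*$ is the infimum over $n\geq n_0$ of a running average of normal variables with mean $\mu_j \geq \mu_m$, and the minimal running average stays above any level strictly below its mean with positive probability (the running average converges to $\mu_j$ and its all-time minimum exceeds $\mu_m-\delta_0 < \mu_j$ with positive probability for a Gaussian random walk average), each such event has positive probability. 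By the mutual independence of the $\bar X_j^*$ across $j\in\mathcal{S}^*$ noted in Remark~\ref{rem: independence}, the joint event has positive probability, giving $\Pr\{\mathcal{E}\}>0$.

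The main obstacle I anticipate is the careful handling of the almost-sure-to-probability passage when conditioning on the random event $\mathcal{E}$. The inequality in~\eqref{eqn: bound_PGS3} must hold eventually in $k$ along sample paths in $\mathcal{E}$, but Argument~\ref{argu5} delivers only a $\limsup$ bound that is at most $C(\frac{\delta-\delta_0}{\bar\sigma};n_0) = \frac{n_g}{m}+n_0$, i.e.\ it meets the constant with equality rather than strictly below it. I would therefore need to exploit the strict hypothesis $n_g > m[C(\frac{\delta}{\bar\sigma};n_0)-n_0]$ to create slack: because $C(\cdot;n_0)$ is monotone in its first argument, one can choose $\delta_0$ so that the threshold $\frac{\delta-\delta_0}{\bar\sigma}$ sits strictly inside the feasible range, ensuring the $\limsup$ bound is strictly below $\frac{n_g}{m}+n_0$, which is what converts the almost-sure eventual inequality into a probability-one statement (via Fatou's lemma or a reverse-Fatou argument on indicator functions) on the event $\mathcal{E}$. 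Making this slack argument rigorous—reconciling the strict budget hypothesis with the equality-defining choice of $\delta_0$—is the delicate point that the formal proof deferred to~\ref{subsec: proof_nonnormal} must address.
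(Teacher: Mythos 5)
Your proposal follows essentially the same route as the paper's proof: start from the lower bound in Equation \eqref{eqn: bound_PGS3}, make the good-alternative term vanish via Argument \ref{argu4}, control the inferior-alternative term via Argument \ref{argu5} on the event $\{\bar X^*_{(m)} \geq \mu_m - \delta_0\}$, and obtain positivity from the mutual independence of the minimal running averages of the top-$m$ alternatives (your positivity argument is exactly the one the paper gives in Section \ref{subsec: normal_properties}).

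The one place your sketch is not yet a proof is the point you yourself flag. With $\delta_0$ pinned down by the equality $m\left[C\left(\frac{\delta-\delta_0}{\bar\sigma};n_0\right)-n_0\right]=n_g$, Argument \ref{argu5} only yields a $\limsup$ bound that meets the budget constant $\frac{n_g}{m}+n_0$ with equality, and $\limsup_k a_k \leq c$ does not imply $a_k \leq c$ for large $k$. Your proposed repair---``choose $\delta_0$ so that the threshold sits strictly inside the feasible range''---is not available, precisely because $\delta_0$ cannot be re-chosen: it is fixed by the statement. What the paper does instead is introduce an auxiliary slack parameter: for each $\lambda \in \left(0, \frac{n_g}{m}+n_0-C\left(\frac{\delta}{\bar\sigma};n_0\right)\right)$ it defines $\delta_\lambda < \delta_0$ by $C\left(\frac{\delta-\delta_\lambda}{\bar\sigma};n_0\right)=\frac{n_g}{m}+n_0-\lambda$, splits the slack between the two sums via the events $\Omega_{\argmin}$ (good-alternative sum, divided by $k$, at most $\lambda/2$) and $\Omega_{N}$ (inferior sum, divided by $k$, at most $C\left(\frac{\delta-\delta_\lambda}{\bar\sigma};n_0\right)+\lambda/2$), applies Bonferroni to peel off $\Pr\{\Omega_{\argmin}^c\}\to 0$, and concludes $\liminf_{k\to\infty}{\rm PGS}_m \geq \Pr\left\{\bar X^*_{(m)} \geq \mu_m-\delta_\lambda\right\}$ for every such $\lambda$; letting $\lambda \to 0$, so that $\delta_\lambda \uparrow \delta_0$, then recovers the stated bound with $\delta_0$. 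This auxiliary-parameter-and-limit device is the missing step in your outline; everything else matches the paper's argument.
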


Notice that the PGS$_m$ lower bound $\Pr\left\{\bar X_{(m)}^* \geq  \mu_m- \delta_0\right\}$  above is not tight. But it aligns well with our intuitive understandings of the PGS$_m$. By Equation \eqref{eqn: argmin_ordering}, $\bar{X}_{(m)}^*$ is the $m$th order statistics among the minimum running averages of $g$ good alternatives in $\mathcal{G}$. The lower bound $\Pr\left\{\bar X_{(m)}^* \geq  \mu_m - \delta_0\right\}$ refers to the probability that at least $m$ of the $g$ good alternatives maintain their running averages above a constant $\delta-\delta_0$, which  generally increases as $g$ increase. This supports our practical observation that  including more good alternatives  often result in a higher PGS$_m$. We remark that the PGS$_m$ lower bound in Theorem \ref{thm: nonnormalPGS} is a worst-case guarantee, which is independent of how the means of the inferior alternatives are distributed. Furthermore, for a fixed $m$, the required greedy budget $m\left[C\left(\frac{\delta}{\bar \sigma} ; n_0\right)-n_0\right]$ decreases exponentially as $n_0$ increases (see Lemma \ref{lem: bct_mean_convergence}). Consequently, when $n_0$ is relatively large, only a small proportion of the total evaluation budget is required for the top-$m$ greedy phase. In our experiments, we allocate 20\% of the total evaluation budget to the top-$m$ greedy phase.

\subsection{Consistency}
\label{subsec: normal_properties}

We further prove the consistency of the EFG-$m$ algorithm.
By the definition of consistency in Definition \ref{def: large_scale_consistency}, it suffices to show that, for any $\alpha\in(0,1)$, these exists a pair of $n_0$ and $n_g$  such that
\begin{eqnarray*}
    \liminf_{k\to\infty}\, \mbox{PGS}_m\geq 1-\alpha.
\end{eqnarray*}
With Theorem \ref{thm: nonnormalPGS}, we may first reformulate the limiting PGS$_m$ above in a more tractable form:
\begin{eqnarray*}
    \liminf_{k\to\infty}\mbox{PGS}_m&\geq &
   \Pr\left\{\bar X_{(m)}^* \geq  \mu_m - \delta_0\right\}\\
    &\geq & \Pr\left\{\bar X_{1}^* \geq  \mu_m - \delta_0,\bar X_{2}^* \geq  \mu_m - \delta_0,\dots,\bar X_{m}^* \geq  \mu_m - \delta_0\right\}\\
    &=&\prod_{j= 1\ldots, m} \Pr\left\{\min_{n \in [n_0, \infty)} \bar X_j(n) \geq  \mu_m - \delta_0\right\}\\
    &\geq &\prod_{j= 1\ldots, m} \Pr\left\{\min_{n \in [n_0, \infty)} \bar X_j(n) \geq  \mu_j - \delta_0\right\} \qquad (\mbox{because }\mu_j\geq \mu_m).
\end{eqnarray*}
 Then our task is modified to find a pair of $n_0$ and $n_g$ such that 
\begin{eqnarray}\label{eqn: n0_find}
    \prod_{j= 1\ldots, m} \Pr\left\{\min_{n \in [n_0, \infty)} \bar X_j(n) \geq  \mu_j - \delta_0\right\}\geq 1-\alpha.
\end{eqnarray}
To proceed, we prepare the following lemma and its proof is included in \ref{subsec: proof_hitting_time_prob}.
\begin{lemma}
\label{lem: hitting_time_prob}
Suppose that Assumption \ref{assu: guassian} holds. Then, we have:
    $$\Pr\left\{\min_{n \in [n_0, \infty)} \bar X_j(n) > \mu_j-\delta_0\right\} \geq 1-2\exp\left(-\frac{n_0\delta_0^2}{2\bar\sigma^2}\right).$$
\end{lemma}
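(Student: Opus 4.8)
The goal is to upper bound the complementary probability $\Pr\{\min_{n\ge n_0}\bar X_j(n) \le \mu_j - \delta_0\}$, since the lemma is just $1$ minus this quantity. The plan is to exploit the \emph{reverse-martingale} structure of sample means together with Doob's maximal inequality; this is the natural tool for controlling the running minimum of $\bar X_j(\cdot)$ uniformly over the entire tail $n \ge n_0$, whereas a crude union bound over $n$ would only yield $\exp(-n_0\delta_0^2/(2\bar\sigma^2))/(1-\exp(-\delta_0^2/(2\bar\sigma^2)))$, whose denominator is not controlled by the constant $2$.

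First I would recall the classical fact that, for i.i.d. summands, the sample-mean sequence $(\bar X_j(n))_{n\ge1}$ is a reverse martingale: by exchangeability, $\E[\bar X_j(n)\mid \bar X_j(n+1),\bar X_j(n+2),\dots] = \bar X_j(n+1)$. Fixing a finite horizon $N \ge n_0$ and reindexing time backwards via $\tau = N - n$, the process $W_\tau := \bar X_j(N-\tau) - \mu_j$ for $\tau = 0,1,\dots,N-n_0$ becomes an ordinary forward martingale with respect to $\mathcal{H}_\tau = \sigma(\bar X_j(N-\tau),\bar X_j(N-\tau+1),\dots)$. Crucially, the event of interest is then rewritten exactly as $\{\min_{n_0\le n\le N}\bar X_j(n) \le \mu_j - \delta_0\} = \{\max_{0\le\tau\le N-n_0}(-W_\tau) \ge \delta_0\}$.

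Next, for any $\theta > 0$ the process $Y_\tau := e^{-\theta W_\tau}$ is a nonnegative submartingale (a convex image of a martingale, via conditional Jensen), so Doob's maximal inequality gives $\Pr\{\max_{0\le\tau\le N-n_0} Y_\tau \ge e^{\theta\delta_0}\} \le e^{-\theta\delta_0}\,\E[Y_{N-n_0}]$. The terminal index $\tau = N - n_0$ corresponds to $n = n_0$, and since $\bar X_j(n_0)-\mu_j \sim N(0,\sigma_j^2/n_0)$ under Assumption~\ref{assu: normal}, the Gaussian moment generating function yields $\E[Y_{N-n_0}] = \exp(\theta^2\sigma_j^2/(2n_0))$. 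Hence $\Pr\{\min_{n_0\le n\le N}\bar X_j(n)\le \mu_j-\delta_0\} \le \exp(\theta^2\sigma_j^2/(2n_0) - \theta\delta_0)$, and optimizing over $\theta$ at $\theta^\star = n_0\delta_0/\sigma_j^2$ gives $\exp(-n_0\delta_0^2/(2\sigma_j^2))$. Invoking Assumption~\ref{assu: guassian} ($\sigma_j^2\le\bar\sigma^2$) and letting $N\to\infty$ (the events are increasing in $N$, so continuity from below applies and the limit event is $\{\min_{n\ge n_0}\bar X_j(n)\le\mu_j-\delta_0\}$) produces $\Pr\{\min_{n\ge n_0}\bar X_j(n)\le \mu_j-\delta_0\} \le \exp(-n_0\delta_0^2/(2\bar\sigma^2))$. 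This is in fact sharper than the claimed bound, so the stated factor of $2$ leaves ample slack; taking complements finishes the proof.

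I expect the main obstacle to be the time-reversal bookkeeping: verifying that $(W_\tau)$ is genuinely a forward martingale so that Doob's inequality applies, and justifying the passage to the infinite horizon $n\ge n_0$ by monotone convergence rather than attempting a maximal inequality directly over an unbounded index set. The remaining ingredients---the submartingale step, the one Gaussian MGF computation, and the scalar optimization in $\theta$---are all routine.
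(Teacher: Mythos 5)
Your proof is correct, and it takes a genuinely different route from the paper's. The paper observes that the normalized partial sums $\{n(\bar X_j(n)-\mu_j)/\sigma_j\}$ agree in law with a standard Brownian motion sampled at integer times, bounds the discrete-time minimum event by the corresponding continuous-time event, applies the time-inversion property of Brownian motion to turn the infinite-horizon constraint into a supremum over the compact interval $[0,1/n_0]$, evaluates that supremum exactly via the reflection principle, and finally invokes the Gaussian tail bound $1-\Phi(x)\le \exp(-x^2/2)$ --- this last, deliberately loose, step is precisely where the factor $2$ in the lemma's statement comes from. Your argument stays entirely in discrete time: reverse-martingale property of sample means, Doob's maximal inequality for the exponential submartingale $e^{-\theta W_\tau}$, Chernoff optimization in $\theta$, and a monotone limit in the horizon $N$. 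Each approach buys something. The paper's yields an exact identity for the continuous-time boundary-crossing probability (and would recover your constant if it used the sharper bound $1-\Phi(x)\le\tfrac12 e^{-x^2/2}$), at the price of continuous-time machinery. Yours is sharper as stated --- you obtain $1-\exp\left(-\frac{n_0\delta_0^2}{2\bar\sigma^2}\right)$, which implies the lemma --- and, since the normal law enters only through the moment generating function of $\bar X_j(n_0)$, it extends verbatim to sub-Gaussian observations, which is directly relevant to the paper's stated open problem of removing Assumption \ref{assu: normal}. The only point to tighten is the horizon limit: the union $\bigcup_N\{\min_{n_0\le n\le N}\bar X_j(n)\le\mu_j-\delta_0\}$ could in principle miss the event that the infimum equals $\mu_j-\delta_0$ without being attained; this is a null event (by the strong law the running average eventually stays above $\mu_j-\delta_0/2$), or can be sidestepped by proving the bound for $\delta_0-\epsilon$ and letting $\epsilon\downarrow 0$, so it is a cosmetic fix rather than a gap.
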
 

Using Lemma \ref{lem: hitting_time_prob}, we may readily derive the left-hand-side term in Equation \eqref{eqn: n0_find} as:
\begin{eqnarray}
    \label{eq: lb_normalPACS}
    \notag  \prod_{j= 1\ldots, m} \Pr\left\{\min_{n \in [n_0, \infty)} \bar X_j(n)\geq \mu_j-\delta_0\right\} \geq  \left[1-2\exp\left(-\frac{n_0\delta_0^2}{2\bar \sigma^2}\right)\right]^m\geq 1-2m\exp\left(-\frac{n_0\delta_0^2}{2\bar \sigma^2}\right).
\end{eqnarray}
When $\delta_0$ is fixed to be $\delta/2$, for $\alpha\in(0,1)$, we may choose $n_0$ as $n_0 =  \frac{8 \bar \sigma^2}{\delta^2} \log\left(\frac{2m}{\alpha}\right)$, and subsequently determine $n_g$ using the formula $n_g =m\left[C\left(\frac{\delta - \delta_0}{\bar \sigma};n_0\right)-n_0\right]$ as outlined in Theorem \ref{thm: nonnormalPGS}.  Such choice of $n_0$ and $n_g$ ensures that Equation \eqref{eqn: n0_find} is met, thereby confirming the consistency of the EFG-$m$ algorithm. We formally present this result in the following theorem and its proof is included in \ref{subsec: proof_consistency}.

\begin{theorem}
    \label{thm: consistency}
    Suppose that Assumptions \ref{assu: guassian} and \ref{assum: bounded_g} hold.
    For any $\alpha \in (0, 1)$, if the total evaluation budget $B=(n_0 + n_g) k$ with $n_0 = \frac{8\bar\sigma^2}{\delta^2}\log\frac{2m}{\alpha}$ and $n_g \geq \frac{\alpha}{2} + \frac{4\alpha \bar\sigma^2}{\delta^2}$,
    the PGS$_m$ of the EFG-$m$ algorithm satisfies
       $ \liminf_{k \to \infty} {\rm PGS}_m\geq 1-\alpha$. Thus, the EFG-$m$ algorithm is consistent.
\end{theorem}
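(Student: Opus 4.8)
The plan is to build directly on the machinery already assembled for Theorem~\ref{thm: nonnormalPGS} and the probabilistic estimate in Lemma~\ref{lem: hitting_time_prob}, and simply verify that the explicit choices of $n_0$ and $n_g$ stated in the theorem are admissible and deliver the $1-\alpha$ guarantee. The starting point is the chain of inequalities already derived in Section~\ref{subsec: normal_properties}, which reduces the consistency claim to exhibiting a pair $(n_0,n_g)$ satisfying Equation~\eqref{eqn: n0_find}. Fixing $\delta_0=\delta/2$, Lemma~\ref{lem: hitting_time_prob} together with the union-bound step in Equation~\eqref{eq: lb_normalPACS} gives
\[
\prod_{j=1}^{m}\Pr\left\{\min_{n\in[n_0,\infty)}\bar X_j(n)\geq \mu_j-\delta_0\right\}\geq 1-2m\exp\left(-\frac{n_0\delta_0^2}{2\bar\sigma^2}\right).
\]
So the first concrete step is to solve $2m\exp(-n_0\delta_0^2/(2\bar\sigma^2))\leq\alpha$ for $n_0$, which, after substituting $\delta_0=\delta/2$, yields precisely $n_0=\frac{8\bar\sigma^2}{\delta^2}\log\frac{2m}{\alpha}$. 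This confirms Equation~\eqref{eqn: n0_find} and hence $\liminf_{k\to\infty}\mathrm{PGS}_m\geq 1-\alpha$ for this exploration budget.

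The second step is to certify that the stated lower bound $n_g\geq \frac{\alpha}{2}+\frac{4\alpha\bar\sigma^2}{\delta^2}$ is consistent with the requirement from Theorem~\ref{thm: nonnormalPGS} that $n_g=m\bigl[C\bigl(\frac{\delta-\delta_0}{\bar\sigma};n_0\bigr)-n_0\bigr]$. Here I would invoke an explicit upper bound on the boundary-crossing constant $C(z;n_0)$ for the standard normal running average—this is where the anticipated auxiliary result Lemma~\ref{lem: bct_mean_convergence} (referenced in the remark after Theorem~\ref{thm: nonnormalPGS}, giving the exponential decay of $C(\delta/\bar\sigma;n_0)-n_0$ in $n_0$) enters. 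The idea is to bound $m\bigl[C\bigl(\frac{\delta/2}{\bar\sigma};n_0\bigr)-n_0\bigr]$ from above by a quantity of the form $c_1 e^{-c_2 n_0}$ or a comparable decaying expression, then substitute the chosen $n_0=\frac{8\bar\sigma^2}{\delta^2}\log\frac{2m}{\alpha}$, which makes $e^{-c_2 n_0}$ polynomially small in $\alpha$ and lets one collapse the bound to the closed form $\frac{\alpha}{2}+\frac{4\alpha\bar\sigma^2}{\delta^2}$. This reconciles the abstract existence of a valid $n_g$ from Theorem~\ref{thm: nonnormalPGS} with the concrete, implementable value quoted in the present theorem.

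The main obstacle I anticipate is precisely this second step: pinning down a sufficiently sharp and clean upper bound on the boundary-crossing expectation $C\bigl(\frac{\delta/2}{\bar\sigma};n_0\bigr)-n_0$ so that the resulting $n_g$ simplifies to the advertised affine-in-$\alpha$ expression. The quantity $C(z;n_0)$ is the expected first time a standard normal running average drops below a negative level $z$, and controlling it requires a submartingale/maximal-inequality argument or a geometric-decay estimate on the tail of the boundary-crossing time; getting constants that land exactly on $\frac{\alpha}{2}+\frac{4\alpha\bar\sigma^2}{\delta^2}$ is delicate bookkeeping rather than conceptually hard. Once that estimate is in hand, the remaining work is routine: combine the two steps, confirm $n_0+n_g$ grows only with the fixed problem parameters (so $B=(n_0+n_g)k$ is genuinely $\mathcal{O}(k)$ and the algorithm stays sample-optimal), and conclude $\liminf_{k\to\infty}\mathrm{PGS}_m\geq 1-\alpha$, which is exactly the consistency assertion of Definition~\ref{def: large_scale_consistency}. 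I would defer the detailed constant-chasing to the appendix, as the excerpt indicates.
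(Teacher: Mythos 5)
Your proposal is correct and follows essentially the same route as the paper: the main text reduces consistency to Equation~\eqref{eqn: n0_find} via Lemma~\ref{lem: hitting_time_prob} with $\delta_0=\delta/2$ (yielding exactly the stated $n_0$), and the appendix proof handles your anticipated ``second step'' precisely as you sketch, bounding $m\left[C\left(\frac{\delta}{2\bar\sigma};n_0\right)-n_0\right]$ by $\beta e^{-\kappa n_0}$ via Lemma~\ref{lem: bct_mean_convergence}. The only detail you deferred—collapsing the constant to $\frac{\alpha}{2}+\frac{4\alpha\bar\sigma^2}{\delta^2}$—is done in the paper by substituting $n_0$ to get $n_g\leq \beta\alpha/2$ and then bounding $\beta=\left(1-e^{-\delta^2/(8\bar\sigma^2)}\right)^{-1}\leq 1+\frac{8\bar\sigma^2}{\delta^2}$ using $e^{-x}\leq 1/(1+x)$.
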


Theorem \ref{thm: consistency} not only establishes the consistency of the EFG-\( m \) algorithm but also reveals another important property. In the best arm identification (BAI) literature, worst-case sample complexity is a key measure. It determines the minimum number of observations required to select a good subset of size \( m \) with a given fixed precision $1-\alpha$. The recognized lower bound for worst-case sample complexity in the BAI literature is \( \mathcal{O}\left(\frac{k}{\delta^2} \log\left(\frac{m}{\alpha}\right)\right) \) \citep{kalyanakrishnan2012pac}.  From Theorem \ref{thm: consistency}, it follows directly that the sample complexity of EFG-\( m \), given by \( k(n_0+n_g) \), aligns with this established bound as \( k \to \infty \), suggesting that EFG-\( m \) may achieve the optimal sample complexity in the asymptotic regime $k\to\infty$. Numerical experiments may verify this result for \( \alpha \) and \( \delta \). Notably, for \( m \), the result requires \( k \) to be sufficiently large relative to \( m \). See Section \ref{subsec: numerical_topm_consistency} and Section \ref{subsec: additional_numerical} for detailed numerical results. 

\section{Ranking within the Selected Subset}
\label{sec: ranking}
As previously mentioned in Section \ref{sec: introduction}, for better supporting decision making, it is valuable to not only select a subset of top-performing alternatives but also correctly rank the alternatives within the subset. To address this, we first introduce a new concept of indifference-based ranking for large-scale virtual screening where the means of top alternatives may be very close to each other. We then investigate the effectiveness of the EFG-$m$ algorithm in delivering both good screening and ranking.

\subsection{Indifference-based Ranking}
\label{subsec: PGSR}
When the total evaluation budget $B$ is exhausted, the EFG-$m$ algorithm selects the subset $\mathcal{S}$ based on the terminal sample means of  alternatives in $\mathcal{K}$. Let $\{\pi_1(\mathcal{S}),\pi_2(\mathcal{S}),\dots,\pi_m(\mathcal{S})\}\subset\mathcal{K}$ denote the indices of alternatives included in $\mathcal{S}$. For simplicity, we let $\pi_1(\mathcal{S})<\pi_2(\mathcal{S})<\dots<\pi_m(\mathcal{S})$, indicating $\mu_{\pi_1(\mathcal{S})}\geq \mu_{\pi_2(\mathcal{S})}\geq\dots\geq\mu_{\pi_m(\mathcal{S})}$. To obtain a ranking of  these alternatives, the most straightforward method is by sorting their sample means in descending order. The alternative with a larger sample mean is assigned a higher rank. Clearly, the resulting ranking is correct if  
\begin{eqnarray*}
    \bar{X}_{\pi_1(\mathcal{S})}(n_{\pi_1(\mathcal{S})}(B))>\bar{X}_{\pi_2(\mathcal{S})}(n_{\pi_2(\mathcal{S})}(B))>\dots>\bar{X}_{\pi_m(\mathcal{S})}(n_{\pi_m(\mathcal{S})}(B)),
\end{eqnarray*}
where $n_{\pi_i(\mathcal{S})}(B)$ denotes the sample size  allocated to alternative $\pi_i(\mathcal{S})\in\mathcal{S}$ at the end of algorithm. 

Recall that we introduced an indifference-zone parameter $\delta>0$ in Section \ref{sec: problem}. It refers to the minimal mean difference among alternatives that decision makers feel worthy detecting. Any two alternatives whose mean difference is within $\delta$ are considered indifferent for screening purpose. It follows logically that the ranking between these two alternatives should also be treated as indifferent. To formally extend this concept of indifference to the ranking purpose, we adopt the semiordering principle from \cite{luce1956semiorders} and define the \textit{indifference-based ranking} (IBR) among alternatives. The IBR consists of two binary relations depending on the indifference-zone parameter $\delta$: indifference ($\sim_\delta$) and dominance ($>_\delta$). These two relations are formalized as follows.
\begin{definition} Given an indifference-zone parameter $\delta >0$, for any two alternatives $A$ and $B$ whose means are denoted by $\mu_A$ and $\mu_B$, define
    \label{def_IBR}
    \begin{itemize}
        \item [(1)] $A \sim_\delta B$  if and only if  $|\mu_A - \mu_B| < \delta$;
        \item [(2)] $ A >_\delta B$   if and only if  $\mu_A - \mu_B \geq \delta$.
    \end{itemize}
\end{definition}
Notice that the dominance relation ``$>_\delta$" above is transitive. More specifically, for any three alternatives $A$, $B$ and $C$, if $ A >_\delta B$  (i.e., $\mu_A - \mu_B \geq \delta$) and  $ B >_\delta C$  (i.e., $\mu_B - \mu_C \geq \delta$), then it must be true that $ A >_\delta C$, because $\mu_A-\mu_C= \mu_A-\mu_B+\mu_B-\mu_C\geq 2\delta>\delta$.  This transitive property is crucial to ensure consistency when alternatives are ranked based on their pairwise dominance.


With IBR, we are ready to define a good ranking within the selected subset $\mathcal{S}$.  Compared to the case of correct ranking, we only need to care about the correctness of ranking among alternatives which exhibits dominance relations. By the transitivity property of the dominance relation in Definition  \ref{def_IBR},  we declare that a good ranking of $\mathcal{S}$ is obtained if the following condition
\[
\bar{X}_{\pi_i(S)}(n_{\pi_i(S)}(B))>\bar{X}_{\pi_j(S)}(n_{\pi_j(S)}(B)), \mbox{ if } \pi_i(\mathcal{S})>_\delta \pi_j(\mathcal{S}), \ \forall \pi_i(\mathcal{S}), \pi_j(\mathcal{S})\in\mathcal{S}
\]
is satisfied, or equivalently in a neater form
\begin{eqnarray}\label{eqn: GR}
\bar{X}_{i}(n_{i}(B))>\bar{X}_{j}(n_{j}(B)), \mbox{ if } \mu_i-\mu_j\geq \delta, \ \forall i,j\in\mathcal{S}.
\end{eqnarray}
To measure the effectiveness of the EFG-$m$ algorithm when considering both the screening and ranking, we propose a new metric called the probability of good screening and ranking (PGSR$_m$), which is expressed as
\begin{eqnarray}
    \mbox{PGSR}_m 
    =\Pr\left\{\{\mathcal{S}\subset \mathcal{G}\} \cap \left\{  \bar X_i(n_i(B))  >  \bar X_j(n_j(B)) \mbox{ if } \mu_{i} - \mu_{j} > \delta,  \forall i, j \in \mathcal{S} \right\} \right\}.
\end{eqnarray}
The definition of PGSR$_m$ allows us to measure the performance of a budget allocation algorithm for good screening and ranking when solving large-scale virtual screening problems.

\subsection{PGSR$_{\mathbf{m}}$ of the EFG-${m}$ Algorithm}
\label{subsec: PGSR_properties}


    \label{eq: def_PCSR}

Similar to the PGS$_m$ analysis, we may leverage the boundary-crossing framework to form a lower bound for the PGSR$_m$.  As noticed in Observation \ref{obs: timelines_PGS} or Equation \eqref{eqn: bound_PGS}, after at most 
\begin{eqnarray}\label{eqn: T_GS}
T_{GS}=\sum_{j\in\mathcal{G}'}T_{\mathcal{G}'}^j+\sum_{l\in\mathcal{G}''}T_{\mathcal{G}''}^l+T_{\mathcal{K}\setminus\mathcal{G}}
\end{eqnarray}
rounds of top-$m$  greedy selection, the EFG-$m$ algorithm will concentrate on the set of good alternatives in $\mathcal{G}'$ until the end of algorithm. Consequently, a good screening event is ensured and the subset $$\mathcal{S}=\mathcal{G}'=\{(1),(2),\dots,(m)\}$$ is selected. When taking PGSR$_m$ into consideration, we are required to additionally ensure the good ranking within $\mathcal{G}'$, or equivalently, the condition detailed in Equation \eqref{eqn: GR}. Then, beyond the PGS$_m$ analysis, the key challenge in analyzing the PGSR$_m$ is to investigate how many additional rounds, if any are required to satisfy condition \eqref{eqn: GR}.

Suppose that the terminal sample mean of each alternative $j\in\mathcal{G}'$ lies within $\delta/2$ of its true mean, namely,
\begin{eqnarray}\label{eqn: samplemean_delta}
    |\bar{X}_j(n_j(B))-\mu_j|<\delta/2.
\end{eqnarray}
Then, condition \eqref{eqn: GR} can be straightforwardly met because, for any alternatives $i,j\in\mathcal{G}'$ with $\mu_i-\mu_j\geq \delta$, 
\[
\bar{X}_i(n_i(B))-\bar{X}_j(n_j(B)) =\left[\bar{X}_i(n_i(B))-\mu_i\right]-\left[\bar{X}_i(n_i(B))-\mu_i\right]+\left[\mu_i-\mu_j\right]> -\delta/2-\delta/2+\delta =0.
\]
To formally present the insight above, we define 
\begin{eqnarray}
L^j_{\mathcal{G}'}\left(\delta/2; n_0\right)= \sup \left\{n \in [n_0, \infty) : |\bar X_j(n)  - \mu_{j}| \geq \delta/2\right\}
\end{eqnarray}
as the \textit{last exit time}  of  the running average process $\{\bar{X}_j(n): n=n_0,n_0+1,\dots\}$ of alternative $j$ from the region $[\mu_j -  \delta/2, \mu_j + \delta/2]$ after $n=n_0$. By definition, once the sample size allocated to alternative $j$ exceeds $L^j_{\mathcal{G}'}(\delta/2;n_0)$, its sample mean will remain satisfying Equation \eqref{eqn: samplemean_delta} until the end of algorithm. In light of this, we tailor Observation \ref{obs: timelines_PGS} for good screening into the following Observation \ref{obs: timelines_PGSR} for the context of ensuring both good screening and ranking. Compared to Observation \ref{obs: timelines_PGS}, Observation \ref{obs: timelines_PGSR} includes a new timeline $L^j_{\mathcal{G}'}(\delta/2;n_0)-n_0$, which is marked as the second point, such that  condition \eqref{eqn: GR} can be satisfied.  
\begin{observation}\label{obs: timelines_PGSR}
A good screening and ranking is ensured if the evaluation budget in the greedy phase, i.e., $B-n_0k$, is sufficient to cover the following four kinds of key timelines:
\begin{enumerate}
    \item $T_{\mathcal{G}'}^j\,(j\in\mathcal{G}')$, the number of greedy selection rounds on alternatives $j\in\mathcal{G}'$ such that its minimum running average $\bar{X}_j^{*}$ is reached;
    \item $L^j_{\mathcal{G}'}(\delta/2;n_0)-n_0$, the number of greedy selection rounds on alternatives $j\in\mathcal{G}'$ such that its running average $\bar{X}_j^{*}$ remains within $\delta/2$ of its true mean $\mu_j$;
    \item $T_{\mathcal{K}\setminus\mathcal{G}}$, the number of greedy selection rounds on the inferior alternatives in $\mathcal{K}\setminus\mathcal{G}$ such that all their running averages first drop below $\bar{X}^{*}_{(m)}$;
    \item $T_{\mathcal{G}''}^l\,(l\in\mathcal{G}'')$, the number of greedy selection rounds on good alternative $l\in \mathcal{G}''$ such that its minimum running average $\bar{X}_l^{*}$ is reached.
\end{enumerate} 
\end{observation}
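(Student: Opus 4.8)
The plan is to prove Observation \ref{obs: timelines_PGSR} by decomposing the ``good screening and ranking'' event into its two constituent requirements and dispatching each with machinery already in place. First, I would note that timelines 1, 3, and 4 are exactly those appearing in Observation \ref{obs: timelines_PGS}, so I can invoke that observation verbatim to conclude that whenever the greedy budget covers these three timelines a good screening is guaranteed: the algorithm eventually concentrates all sampling on $\mathcal{G}'$ and terminates with $\mathcal{S}=\mathcal{G}'=\{(1),\dots,(m)\}$. The residual task is to show that the extra timeline 2 upgrades this good screening to a good ranking, i.e., that condition \eqref{eqn: GR} holds at the terminal round.

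For the ranking part, I would reuse the triangle-inequality argument given just before the observation: if every $j\in\mathcal{G}'$ satisfies the band condition \eqref{eqn: samplemean_delta}, namely $|\bar X_j(n_j(B))-\mu_j|<\delta/2$, then for any $i,j\in\mathcal{G}'$ with $\mu_i-\mu_j\geq\delta$ the two sample-mean errors cancel against the mean gap to force $\bar X_i(n_i(B))>\bar X_j(n_j(B))$, which is precisely \eqref{eqn: GR}. Hence it suffices to guarantee \eqref{eqn: samplemean_delta} for all members of $\mathcal{G}'$ at termination. By the very definition of the last exit time $L^j_{\mathcal{G}'}(\delta/2;n_0)$, once the sample size of alternative $j$ strictly exceeds $L^j_{\mathcal{G}'}(\delta/2;n_0)$ its running average never again leaves the band $[\mu_j-\delta/2,\mu_j+\delta/2]$; so the band condition reduces to the sample-size condition $n_j(B)>L^j_{\mathcal{G}'}(\delta/2;n_0)$ for every $j\in\mathcal{G}'$.

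The crux is then a budget-bookkeeping argument showing that covering timeline 2 forces $n_j(B)>L^j_{\mathcal{G}'}(\delta/2;n_0)$. The key structural fact I would establish is that after concentration every member of $\mathcal{G}'$ is sampled in every subsequent round: for $j\in\mathcal{G}'$ the running average stays at or above its minimum $\bar X^*_j\geq \bar X^*_{(m)}$ by the ordering \eqref{eqn: argmin_ordering}, whereas every inferior alternative and every $l\in\mathcal{G}''$ is frozen strictly below $\bar X^*_{(m)}$ once dominated, so $\mathcal{G}'$ permanently occupies the top-$m$ slots and each of its members receives one observation per round. Consequently any greedy budget beyond what screening requires flows uniformly into $\mathcal{G}'$. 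Concretely I would verify that a greedy budget of at least $m\big[\sum_{j\in\mathcal{G}'}(T^j_{\mathcal{G}'}+L^j_{\mathcal{G}'}(\delta/2;n_0)-n_0)+T_{\mathcal{K}\setminus\mathcal{G}}+\sum_{l\in\mathcal{G}''}T^l_{\mathcal{G}''}\big]$ suffices: the first and third groups buy a good screening, at which point each $j$ has already reached $n_j\geq\argmin_{n}\bar X_j(n)$, and the extra $m\sum_{j\in\mathcal{G}'}(L^j_{\mathcal{G}'}(\delta/2;n_0)-n_0)$ observations translate into enough additional all-of-$\mathcal{G}'$ rounds to push each $n_j(B)$ past its own $L^j_{\mathcal{G}'}(\delta/2;n_0)$.

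I expect the main obstacle to be exactly this last bookkeeping step: timelines 1 and 2 both constrain the same alternatives in $\mathcal{G}'$, so I must avoid double-counting and decide whether the governing per-alternative quantity is the sum or the maximum of $T^j_{\mathcal{G}'}$ and $L^j_{\mathcal{G}'}(\delta/2;n_0)-n_0$. The conservative sum above is plainly sufficient and keeps the argument clean, which is all that is needed because the ensuing asymptotic analysis will show the extra last-exit-time terms are negligible. The remaining points requiring care are that both $\argmin_{n}\bar X_j(n)$ and $L^j_{\mathcal{G}'}(\delta/2;n_0)$ are finite almost surely—the former by Lemma \ref{lem: argmin}, the latter because $\bar X_j(n)\to\mu_j$ almost surely (strong law of large numbers) forces the process to exit the band only finitely often—and that the ``frozen below the boundary'' property of dominated alternatives genuinely prevents any re-entry into the top-$m$.
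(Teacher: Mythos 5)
Your proposal is correct and takes essentially the same route as the paper: invoke Observation \ref{obs: timelines_PGS} for timelines 1, 3, and 4, reduce the ranking condition \eqref{eqn: GR} to the band condition \eqref{eqn: samplemean_delta} by the triangle inequality, translate that band condition into a sample-size requirement via the last exit time, and use the structural fact that $\mathcal{G}'$ permanently occupies the top-$m$ slots once all dominations have occurred. The one divergence is bookkeeping: you charge each $j \in \mathcal{G}'$ the sum $T_{\mathcal{G}'}^j + L_{\mathcal{G}'}^j(\delta/2;n_0) - n_0$, whereas the paper charges $\max\left\{T_{\mathcal{G}'}^j,\, L_{\mathcal{G}'}^j(\delta/2;n_0)-n_0\right\}$ — the same greedy rounds on $j$ serve both timelines simultaneously, since both are constraints on the single counter $n_j$ — and this max is what produces the correction term $\left(L_{\mathcal{G}'}^j(\delta/2;n_0)-\argmin_{n \in [n_0, \infty)}\bar{X}_j(n)\right)^+$ in the paper's $T_{GSR}$ and in the bound \eqref{eqn: bound_PGSR}. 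Your looser sum is still a valid sufficiency statement for the observation and still supports the downstream negligibility argument (since $L_{\mathcal{G}'}^j(\delta/2;n_0)<\infty$ almost surely and $|\mathcal{G}'|$ is bounded), so nothing breaks; it merely yields a slightly weaker, though asymptotically equivalent, version of the PGSR$_m$ lower bound.
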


From the observation above, the maximum number of top-$m$ greedy selection rounds, $T_{GSR}$, to deliver a good screening and ranking is 
\begin{eqnarray}
    \begin{aligned}
        T_{GSR}&=\ \sum_{j\in\mathcal{G}'} \max\left\{T_{\mathcal{G}'}^j,L_{\mathcal{G}'}^j(\delta/2;n_0)-n_0\right\}+T_{\mathcal{K}\setminus\mathcal{G}}+\sum_{l\in\mathcal{G}''}T_{\mathcal{G}''}^l\\
        &=\ \sum_{j\in\mathcal{G}'} \left(L_{\mathcal{G}'}^j(\delta/2;n_0)-T_{\mathcal{G}'}^j\right)^++\sum_{j\in\mathcal{G}'} T_{\mathcal{G}'}^j+T_{\mathcal{K}\setminus\mathcal{G}}+\sum_{l\in\mathcal{G}''}T_{\mathcal{G}''}^l,
    \end{aligned}
\end{eqnarray}
where  $x^+=\max\{x,0\}$. Notice that, in contrast to its analogous term $T_{GS}$ in Equation \eqref{eqn: T_GS}, $T_{GSR}$ involves a new component $\sum_{j\in\mathcal{G}'} \left(L_{\mathcal{G}'}^j(\delta/2;n_0)-\argmin_{n \in [n_0, \infty)}\bar{X}_j(n)\right)^+$. This component essentially represents the extra greedy rounds required to ensure good ranking within the selected subset $\mathcal{G}'$.

Furthermore, Observation \ref{obs: timelines_PGSR} shows that, within a limited evaluation budget $B$, a good screening and ranking is guaranteed if $B\geq n_0k+mT_{GSR}$. In other words,  the PGSR$_m$ can be expressed as
\begin{eqnarray}\label{eqn: bound_PGSR}
    \notag \mbox{PGSR}_m &\geq&\Pr\left\{ B-n_0k \geq m\left[\sum\limits_{j\in\mathcal{G}'} \left(L_{\mathcal{G}'}^j(\delta/2;n_0)-T_{\mathcal{G}'}^j\right)^++\sum\limits_{j\in\mathcal{G}'} T_{\mathcal{G}'}^j+T_{\mathcal{K}\setminus\mathcal{G}}+\sum\limits_{l\in\mathcal{G}''}T_{\mathcal{G}''}^l\right]\right\}\\
    \notag &\geq& \Pr\left\{ B-n_0k \geq m\left[\sum\limits_{j\in\mathcal{G}'} \left(L_{\mathcal{G}'}^j(\delta/2;n_0)-\underset{n \in [n_0, \infty)}{\argmin}\bar{X}_j(n)\right)^+ \right.\right. \\
     && \qquad\qquad\qquad\qquad \left.\left. +\sum\limits_{l\in\mathcal{G}}\underset{n \in [n_0, \infty)}{\argmin}\bar{X}_l(n)+\sum\limits_{i\in \mathcal{K}\setminus\mathcal{G}}N_i(\bar{X}^*_{(m)};n_0)-n_0k\right]\right\},
\end{eqnarray}
where the last inequality holds from Equations   \eqref{eqn: bound_K-G} and \eqref{eqn: bound_argmin}. It is worth noticing that the PGSR$_m$ lower bound is no larger than the PGS$_m$ lower bound presented in Equation \eqref{eqn: bound_PGS2}.  This supports the intuition that ensuring both good screening and ranking is often more challenging than achieving merely good screening.

\subsection{Sample Optimality and Consistency for the PGSR$_{\mathbf{m}}$}
By repeating the arguments in Section \ref{sec: sample-optimality}, we may easily prove the sample optimality and consistency of the EFG-$m$ algorithm in terms of PGSR$_m$. The key distinction in this analysis involves evaluating the newly added component $\sum_{j\in\mathcal{G}'} \left(L_{\mathcal{G}'}^j(\delta/2;n_0)-\argmin_{n \in [n_0, \infty)}\bar{X}_j(n)\right)^+$. Interestingly, as we demonstrate in the following lemma, this component becomes negligible as $k\to\infty$, suggesting that it may not significantly influence the asymptotic behavior of PGSR$_m$. The detailed proof of Lemma \ref{lem: negligible} is included in \ref{ec: lastexit}.

\begin{lemma}
\label{lem: negligible}
    Suppose that Assumptions \ref{assu: guassian} and \ref{assum: bounded_g} hold. Then we have
    \begin{eqnarray*}
    \limsup_{k \rightarrow \infty} \frac{1}{k}\sum\limits_{j\in\mathcal{G}'} \left(L_{\mathcal{G}'}^j(\delta/2;n_0)-\argmin_{n \in [n_0, \infty)}\bar{X}_j(n)\right)^+ = 0.
\end{eqnarray*} 
\end{lemma}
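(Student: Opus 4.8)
The plan is to show that the normalized quantity
$R_k := \sum_{j\in\mathcal{G}'}\bigl(L_{\mathcal{G}'}^j(\delta/2;n_0)-\argmin_{n\in[n_0,\infty)}\bar X_j(n)\bigr)^+$
is, almost surely, dominated by a single finite random variable that does not grow with $k$, so that dividing by $k$ and sending $k\to\infty$ annihilates it. This parallels the reasoning behind Argument \ref{argu4}, where a bounded number of almost surely finite quantities becomes negligible after normalization by $k$; the only genuinely new ingredient is a property of the \emph{last exit time} rather than of the $\argmin$.

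First I would discard the subtracted term. Since $\argmin_{n\in[n_0,\infty)}\bar X_j(n)\geq n_0>0$ is nonnegative, each summand obeys $\bigl(L_{\mathcal{G}'}^j(\delta/2;n_0)-\argmin_{n\in[n_0,\infty)}\bar X_j(n)\bigr)^+\leq L_{\mathcal{G}'}^j(\delta/2;n_0)$. As all summands are nonnegative, I would then enlarge the index set from $\mathcal{G}'$ to the full good set $\mathcal{G}$, defining the analogous last exit time $L^j(\delta/2;n_0)=\sup\{n\in[n_0,\infty):|\bar X_j(n)-\mu_j|\geq\delta/2\}$ for every $j\in\mathcal{G}$, to obtain $R_k\leq\sum_{j\in\mathcal{G}}L^j(\delta/2;n_0)$. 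Because the alternatives are sorted by mean, the good alternatives occupy the top positions $\{1,\dots,\bar g\}$, and under Assumption \ref{assum: bounded_g} this index block is fixed and finite for every $k$; hence $R_k\leq\sum_{j=1}^{\bar g}L^j(\delta/2;n_0)=:R^*$, where $R^*$ does not depend on $k$.

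The crux is then to verify that the last exit time is finite almost surely, so that $R^*<\infty$ almost surely as a finite sum of such terms. This is the dual of the boundary-crossing (hitting-time) analysis behind Lemma \ref{lem: boundary-crossing}: rather than asking when the running average first reaches a level, the last exit time asks when it permanently settles inside the band $[\mu_j-\delta/2,\mu_j+\delta/2]$. The strong law of large numbers gives $\bar X_j(n)\to\mu_j$ almost surely, so for almost every sample path there is a finite random $N$ with $|\bar X_j(n)-\mu_j|<\delta/2$ for all $n\geq N$, which forces $L^j(\delta/2;n_0)<\infty$. Given $R^*<\infty$ almost surely, the conclusion is immediate: $\limsup_{k\to\infty}\tfrac1k R_k\leq\limsup_{k\to\infty}\tfrac{R^*}{k}=0$ almost surely, and since $R_k\geq0$ the limsup equals $0$.

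The main obstacle I anticipate is precisely establishing almost sure finiteness of the last exit time in a way robust enough that one need not assume the underlying top-$\bar g$ alternatives are literally identical along the whole sequence. To make the argument quantitative and uniform, I would bound the tail $\Pr\{L^j(\delta/2;n_0)\geq n\}$ by a union bound over $n'\geq n$ combined with the Gaussian concentration estimate $\Pr\{|\bar X_j(n')-\mu_j|\geq\delta/2\}\leq 2\exp\!\bigl(-n'\delta^2/(8\bar\sigma^2)\bigr)$, which Assumption \ref{assu: guassian} supplies uniformly over good alternatives (the same device used in Lemma \ref{lem: hitting_time_prob}). Summing the resulting geometric series shows that $L^j(\delta/2;n_0)$ has a geometrically decaying tail, hence finite exponential moments bounded uniformly across all good alternatives; this uniform control, together with the bounded cardinality $|\mathcal{G}|\leq\bar g$, secures $R^*<\infty$ and therefore the claimed negligibility irrespective of how the identities of the good alternatives may shift with $k$.
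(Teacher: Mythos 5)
Your proof is correct, and its skeleton matches the paper's: bound each summand by a last exit time, enlarge $\mathcal{G}'$ to $\mathcal{G}$, use $|\mathcal{G}|\leq\bar g$ from Assumption \ref{assum: bounded_g}, and reduce the claim to almost-sure finiteness of $L^j(\delta/2;n_0)$. You diverge in two places. First, you discard the subtracted term outright via $(a-b)^+\leq a$ for $b\geq 0$, whereas the paper uses the looser bound $(L-\argmin)^+\leq L+\argmin$ and must then invoke Lemma \ref{lem: argmin} to kill the $\argmin$ sum; your version is tighter and needs one less lemma. Second, and more substantively, the finiteness of the last exit time is proved by different means: the paper re-runs the Brownian-motion machinery behind Lemma \ref{lem: hitting_time_prob} (symmetry to split the two-sided band into one-sided events, time inversion, and the reflection principle) to obtain $\Pr\{L_{\mathcal{G}}^j(\delta/2;n_0)<M\}\geq 1-4\left(1-\Phi\left(\sqrt{M}\delta/(2\sigma_j)\right)\right)$ and then lets $M\to\infty$, while you use the SLLN for the soft statement and a union bound over $n'\geq n$ with the normal tail estimate of Lemma \ref{lem: normal_cdf} for the quantitative one. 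Your route is more elementary and yields a geometrically decaying tail that is uniform over good alternatives (via Assumption \ref{assu: guassian}); combined with Borel--Cantelli, this covers the case where the identities of the good alternatives shift with $k$ --- a point that both the paper's proof and your own cruder bound $R_k\leq R^*$ handle only implicitly, through a fixed coupling of the top alternatives across $k$. Two minor touch-ups: the good alternatives occupy positions $\{1,\dots,g\}$ with $g\leq\bar g$, not necessarily $\{1,\dots,\bar g\}$, so you should say explicitly that padding the sum with the extra nonnegative last exit times of alternatives $g+1,\dots,\bar g$ is harmless; and your concentration step, like the paper's, relies on the normality in Assumption \ref{assu: normal}, so despite its robustness to shifting identities your argument is not distributionally more general than the paper's.
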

With this lemma, we can obtain the sample optimality and consistency of the EFG-$m$ algorithm in terms of PGSR$_m$, which are summarized in the following theorem. The detailed proof of the theorem is included in \ref{ec: PGSR}.

\begin{theorem}
    \label{thm: nonnormalPGSR}
   Suppose that Assumptions \ref{assu: guassian} and \ref{assum: bounded_g} hold. If the total evaluation budget $B=(n_0 + n_g) k$ and $n_g > m\left[C\left(\frac{\delta}{\bar \sigma} ; n_0\right)-n_0\right]$, the PGSR$_m$ of the EFG-$m$ algorithm satisfies
    \begin{eqnarray*}
    \liminf\limits_{k \to \infty} {\rm PGSR}_m  \geq \Pr\left\{\bar X_{(m)}^* \geq  \mu_{m} - \delta_0\right\} > 0,
    \end{eqnarray*}
    where $\delta_0 \in (0, \delta)$ is a constant such that $m\left[C\left(\frac{\delta - \delta_0}{\bar \sigma}; n_0\right)-n_0\right] = n_g$. Therefore, the EFG-$m$ algorithm is sample optimal in terms of PGSR$_m$. Furthermore, we can show that the EFG-$m$ algorithm is also consistent in terms of PGSR$_m$.
    \end{theorem}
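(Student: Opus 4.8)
The plan is to mirror the structure of the PGS$_m$ analysis in Section~\ref{sec: sample-optimality}, since the PGSR$_m$ lower bound in Equation~\eqref{eqn: bound_PGSR} differs from the PGS$_m$ lower bound in Equation~\eqref{eqn: bound_PGS2} only by the extra nonnegative term $\sum_{j\in\mathcal{G}'}\left(L_{\mathcal{G}'}^j(\delta/2;n_0)-\argmin_{n\in[n_0,\infty)}\bar{X}_j(n)\right)^+$. First I would rewrite the PGSR$_m$ lower bound of Equation~\eqref{eqn: bound_PGSR} in normalized form by dividing the greedy-budget inequality through by $mk$, exactly paralleling the passage from Equation~\eqref{eqn: bound_PGS2} to Equation~\eqref{eqn: bound_PGS3}. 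This isolates three groups of terms scaled by $1/k$: the minimum-running-average sums over $\mathcal{G}$, the boundary-crossing sum over $\mathcal{K}\setminus\mathcal{G}$, and the new last-exit-time component.

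The second and main step is to control each of these three groups in the limit $k\to\infty$. For the $\argmin$ sum over $\mathcal{G}$ and the boundary-crossing sum over $\mathcal{K}\setminus\mathcal{G}$, I would invoke Arguments~\ref{argu4} and~\ref{argu5} verbatim: Argument~\ref{argu4} (from Lemma~\ref{lem: argmin} and Assumption~\ref{assum: bounded_g}) gives $\sum_{j\in\mathcal{G}}\argmin_{n\in[n_0,\infty)}\bar{X}_j(n)<\infty$ almost surely, so this term vanishes after division by $k$; Argument~\ref{argu5} gives $\limsup_{k\to\infty}\frac{1}{k}\sum_{i\in\mathcal{K}\setminus\mathcal{G}}N_i(\bar{X}^*_{(m)};n_0)\leq C\!\left(\frac{\delta-\delta_0}{\bar\sigma};n_0\right)$ almost surely on the event $\{\bar{X}^*_{(m)}\geq\mu_m-\delta_0\}$. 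The genuinely new piece is the last-exit-time component, and here I would apply Lemma~\ref{lem: negligible}, which establishes precisely that $\limsup_{k\to\infty}\frac{1}{k}\sum_{j\in\mathcal{G}'}\left(L_{\mathcal{G}'}^j(\delta/2;n_0)-\argmin_{n\in[n_0,\infty)}\bar{X}_j(n)\right)^+=0$ almost surely. The key obstacle is thus entirely absorbed into Lemma~\ref{lem: negligible}: intuitively, because $|\mathcal{G}'|=m$ is a bounded constant and each last-exit time $L_{\mathcal{G}'}^j(\delta/2;n_0)$ is almost surely finite (the running average of a good alternative eventually stays within $\delta/2$ of its mean by the SLLN), the total extra budget for ranking is $\mathcal{O}(1)$ rather than $\mathcal{O}(k)$ and hence asymptotically negligible.

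Combining these three limits, the right-hand side of the normalized PGSR$_m$ inequality converges almost surely to the same bound $C\!\left(\frac{\delta-\delta_0}{\bar\sigma};n_0\right)$ that governs the PGS$_m$ case, on the event $\{\bar{X}^*_{(m)}\geq\mu_m-\delta_0\}$. Choosing $\delta_0\in(0,\delta)$ so that $m\left[C\!\left(\frac{\delta-\delta_0}{\bar\sigma};n_0\right)-n_0\right]=n_g$, and using the budget condition $n_g>m\left[C\!\left(\frac{\delta}{\bar\sigma};n_0\right)-n_0\right]$ together with the strict monotonicity of $C(\cdot\,;n_0)$ to guarantee such a $\delta_0$ exists, I would conclude via the same limiting argument as in the proof of Theorem~\ref{thm: nonnormalPGS} (deferred to~\ref{subsec: proof_nonnormal}) that $\liminf_{k\to\infty}\mathrm{PGSR}_m\geq\Pr\{\bar{X}^*_{(m)}\geq\mu_m-\delta_0\}>0$. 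This establishes sample optimality for PGSR$_m$.

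For consistency, I would repeat the chain of inequalities in Section~\ref{subsec: normal_properties}: since PGSR$_m$ is now bounded below by the identical quantity $\Pr\{\bar{X}^*_{(m)}\geq\mu_m-\delta_0\}$, I can reuse the bound $\Pr\{\bar{X}^*_{(m)}\geq\mu_m-\delta_0\}\geq\prod_{j=1}^{m}\Pr\{\min_{n\in[n_0,\infty)}\bar{X}_j(n)\geq\mu_j-\delta_0\}$ followed by Lemma~\ref{lem: hitting_time_prob}. Setting $\delta_0=\delta/2$ and $n_0=\frac{8\bar\sigma^2}{\delta^2}\log\frac{2m}{\alpha}$ as in Theorem~\ref{thm: consistency} then forces the product to exceed $1-\alpha$, yielding $\liminf_{k\to\infty}\mathrm{PGSR}_m\geq1-\alpha$. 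I expect no new difficulty in this part, as it is a verbatim transcription of the PGS$_m$ consistency argument once the sample-optimality lower bound is in hand; the only substantive work beyond Section~\ref{sec: sample-optimality} is Lemma~\ref{lem: negligible}, whose proof is deferred to~\ref{ec: lastexit}.
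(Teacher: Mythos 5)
Your proposal is correct and follows essentially the same route as the paper's proof: normalize the PGSR$_m$ lower bound of Equation \eqref{eqn: bound_PGSR}, control the $\argmin$ and boundary-crossing sums exactly as in the proof of Theorem \ref{thm: nonnormalPGS} (via Arguments \ref{argu4} and \ref{argu5}), absorb the new last-exit-time term with Lemma \ref{lem: negligible}, and then transplant the consistency argument of Theorem \ref{thm: consistency} unchanged since the limiting lower bound $\Pr\{\bar X_{(m)}^* \geq \mu_m - \delta_0\}$ is identical. The paper merely formalizes the limiting step you describe by introducing three events (each allotted $\lambda/3$ of the slack), applying Bonferroni's inequality, and letting $\lambda \to 0$, which is exactly the machinery you invoke by reference.
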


It is worth noticing from the comparison between Theorems  \ref{thm: nonnormalPGS} and \ref{thm: nonnormalPGSR} that the PGS$_m$ and PGSR$_m$ display the same limiting behavior as $k\to\infty$. This equivalence arises because the sampling cost of obtaining the IBR within the selected good subset $\mathcal{G}'$ is negligible as $k$ increases. Correspondingly, in our numerical experiments, the PGSR$_m$  of the EFG-$m$ algorithm may tend to become identical to the PGS$_m$ when $k$ is sufficiently large, demonstrating the \textit{free} ranking effect of the EFG-$m$ algorithm. 




\section{Algorithm Enhancements}
\label{sec: enhancements}
In this section, we focus on enhancing the practical efficiency of the EFG-$m$ algorithm in addressing large-scale virtual screening problems. Specifically, in Section \ref{subsec: top_M}, we improve the algorithm by refining the top-$m$ greedy phase. In Section \ref{subsec: adaptive_explore}, we improve the exploration phase. Then, in Section \ref{subsec: parallel}, we explore how to parallelize the proposed algorithms to utilize LLMs' capability of concurrent responses to improve computational efficiency. 


\subsection{Top-${M}$ Greedy Selection}
\label{subsec: top_M}


The top-$m$ greedy selection rule in the EFG-$m$ algorithm may bring inefficiency, consuming unnecessary evaluation budget in the greedy phase. To illustrate this inefficiency, let's first consider the simple case where $\mathcal{G}=S^*$ and the top-$m$ alternatives can be evaluated without any random noise, such that $\bar{X}_i(n)\equiv\mu_i$ for all $n \in [n_0, \infty)$ and all $i\in\mathcal{S}^*$. {For the general case where all alternatives are evaluated with random noise, the same intuition applies, though additional considerations arise. The more general discussion can be found in Section \ref{subsec: additional_topM}.} From the boundary-crossing perspective introduced in Section \ref{sec: top-m-boundary-crossing}, a correct screening is assured whenever the sample means of all inferior alternatives falls below the boundary $\mu_m$. Ideally, to achieve the highest efficiency, the greedy phase should  focus all the evaluation budget on the inferior alternatives, so that  this condition is met within limited evaluation budget. However, the top-$m$ greedy selection rule does not permit such targeted evaluation. Suppose that, at a certain round, the sample means of $q<m$ inferior alternatives are above $\mu_m$. The top-$m$ greedy selection rule will select and evaluate these $q$ alternatives along with $m-q$ alternatives from the top-$m$ alternatives, as they have the largest $m$ sample means. Clearly, these $m-q$ observations on the top-$m$ alternatives does not aid in  achieving a correct screening and thus leads to inefficiency. 
Moreover, this inefficiency accumulates as the selection rounds proceed.  Although this does not impact the sample optimality or consistency, it may compromise the practical efficiency of the EFG-$m$ algorithm.

To mitigate this inefficiency, we modify the greedy selection phase in the EFG-$m$ algorithm by extending the evaluation scope from the top $m$ to the top-$M$ alternatives at each round, where $M> m$. This strategic adjustment increases the chance of evaluation on the inferior alternatives whose sample means exceed the boundary at each round, thus accelerating the accomplishment of boundary crossings of these inferior alternatives. As a direct result, the necessary number of rounds required before achieving a correct screening is reduced, effectively alleviating the cumulative inefficiency over the greedy selection process. To better illustrate the impact of this adjustment, we also provide a visual comparison of sample paths under the original top $m$ and the revised top $M$ greedy selection rules in the following Figure \ref{fig: top_3_sampling}. From this figure, we can see that for selecting the top-2 alternatives in the example problem, using a top-3 selection rule reduces 4 rounds of greedy and thus saves 4 observations compared to using a top-2 selection rule. The new algorithm is referred to as the EFG-$M$ algorithm and we detail it in the following Algorithm \ref{algorithm: topM}. As we will see numerical experiments in  Section \ref{subsec: numerical_topm_enhancement} show that when $k$ is relatively large, the EFG-$M$ could improve the performance of the EFG-$m$ algorithm by a large margin.  

\begin{figure}[htbp]
    \begin{center}
        \includegraphics[width=0.99\textwidth]{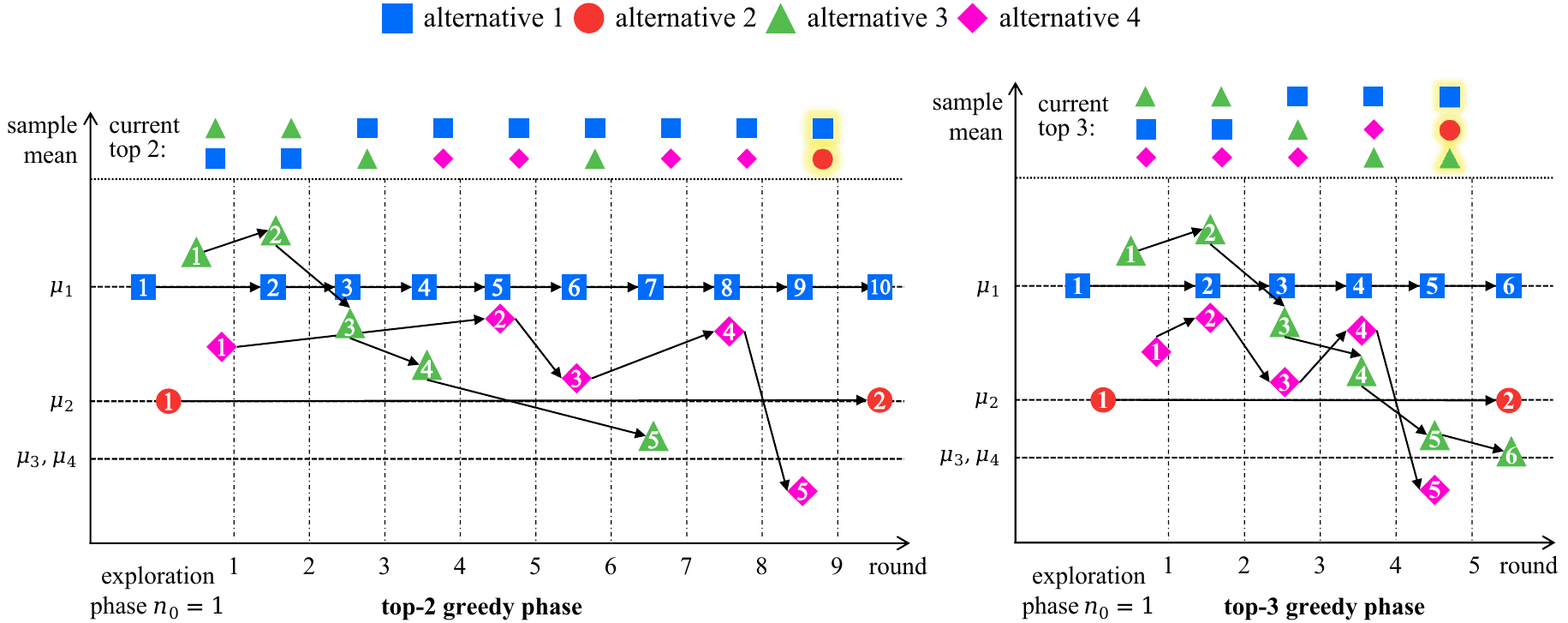}
    \end{center}
    \caption[]{A comparison between top-2 and top-3 greedy selection processes of an example problem with $4$ alternatives where $\bar X_1(n) = \mu_1$ and $\bar X_2(n) = \mu_2, \forall n \geq 1$. Numbers in the markers represent the total sample sizes.}
    \label{fig: top_3_sampling}
\end{figure}

\RestyleAlgo{ruled}
\LinesNumbered
\SetAlgorithmName{Algorithm}{Algorithm}{Algorithm}
\SetAlgoCaptionLayout{centerline}
\begin{algorithm}[hbtp]
\caption{\textbf{Explore-First Top-$M$ Greedy (EFG-$M$) Algorithm}}
    \label{algorithm: topM}
\KwIn{
$m$ and $M$, $k$ alternatives and their prompts, the total evaluation budget $B=ck$, the evaluation budget allocated to the exploration phase $n_0k (<B)$, and the LLM 
}
\For(\tcp*[f]{{Exploration Phase}}){$i=1$ \KwTo $k$ }{
prompt the LLM $n_0$ times for alternative $i$  to get $n_0$ observations $x_{i1},\ldots,x_{in_0}$\;
set $\bar X_i(n_0)=\frac{1}{n_0}\sum_{j=1}^{n_0} x_{ij}$ and let $n_i=n_0$\;
}
\While(\tcp*[f]{{Top-M Greedy Phase}}){$\sum_{i=1}^k n_i < B$ }{
let $\mathcal{S} = \,  \stackrel{1, \dots, M}{ \argmax}_{i \in \{1, \ldots, k\}} \bar X_i(n_i)$\;
  \For{$j \in\mathcal{S}$}{
  prompt the LLM for alternative $j$ to get one observation $x_{j}$;
  
  update $\bar X_{j}(n_j+1) = \frac{1}{n_j+1}\left[n_j\bar X_{j}(n_j) + x_j\right]$ and let $n_j = n_j+1$\;
  }
}
\KwOut{$\stackrel{1, \dots, m}{ \argmax}_{i \in \{1, \ldots, k\}} \bar X_i(n_i)$}
\end{algorithm}

Regarding the EFG-$M$ algorithm,  we would like to make several remarks. First, the choice of $M(> m)$ may significantly impact on the algorithm's performance. Based on extensive numerical experiments (see, e.g., Section \ref{subsec: numerical_topm_enhancement}), we recommend letting $M=2m$ or $M=3m$ to obtain a near-optimal performance.  Second, despite the structural similarity between top-\(M\) and top-\(m\) greedy selection, extending sample optimality to the EFG-\(M\) algorithm is not straightforward. 
Key results such as Observation~\ref{obs: boundary} in the boundary-crossing perspective of EFG-\(m\) may not hold for the top-\(M\) greedy phase. In the phase, even after all  inferior alternatives complete boundary crossing, some of them may still be selected in subsequent rounds as each round will always include at least \(M-m\) inferior alternatives,
making it difficult to ensure a correct screening event.  Resolving this issue is possible by checking the boundary-crossing event upon making the final screening decision; however, this complicates the algorithm, and we prefer to maintain the simplicity of EFG-\(M\). Third, due to the same issue—additional difficulty in securing a correct screening—the EFG-\(M\) algorithm is not always superior to EFG-\(m\). As shown in Section \ref{subsec: numerical_topm_enhancement}, when $k$ is relatively small (e.g., $k \leq 128$), EFG-\(M\) may underperform relative to EFG-\(m\), likely due to the limited number of rounds \(n_g k/M\) available for the top-$M$ greedy phase at this scale.

Additionally, top-\(M\) greedy selection is not the only way to enhance the top-\(m\) greedy phase. The top-\(m\) greedy phase is simple yet flexible, allowing for various modifications. One may explore alternative variants by incorporating ideas from existing budgeted learning algorithms, such as adjusting budget allocation based on sample variances, confidence bounds, or the distances to ``boundary'' alternatives \citep{chen2008efficient, gabillon2012best, bubeck2013multiple}. However, the impact of these variants on sample optimality remains unclear. We focus on the greedy selection mechanism due to its well-established theoretical understanding.





\subsection{Adaptive Exploration: Seeding and The Greedy-as-Remedy Approach}
\label{subsec: adaptive_explore}


We further improve the performance of top-$m$ greedy algorithms by refining the static and equal allocation rule used in the exploration phase. We consider two strategies: \emph{seeding} and the \emph{greedy-as-remedy} approach. The seeding approach \citep{hong2022solving} introduces an additional seeding phase before the exploration phase. In the seeding phase, a small number of observations are collected for each alternative to establish a rough ranking of all alternatives, which then may help allocate more exploration budget to alternatives having better performance. By adopting this strategy we develop the EFG-$M^+$ algorithm. We include a detailed description of this algorithm in \ref{subsec: EFG-$M$Plus} and demonstrate its superior performance in Section \ref{subsec: comparison}. Notice that beyond the improved performance, another advantage of the seeding approach is that it may facilitate the proof of sample optimality. Since the observations collected during seeding are not used in the exploration phase, the alternatives remain statistically independent after exploration. This independence simplifies the theoretical analysis and may allow for an extension of the results in Sections \ref{sec: sample-optimality} and \ref{sec: ranking}; see Section \ref{subsec: EFG-$M$Plus} for further discussion.

The greedy-as-remedy approach, first mentioned in \cite{li2023surprising} as a potential research direction, uses an existing small-scale algorithm in the exploration phase to allocate the budget. While these exploration algorithms alone may not be sample optimal, they can be more efficient than equal allocation. 
We numerically evaluate this approach using the well-known SAR algorithm \citep{bubeck2013multiple} in the exploration phase, as detailed in Section \ref{subsec: SAR-$M$Plus}. The results suggest that the greedy-as-remedy approach may also outperform EFG-$M$ while achieving sample optimality.
However, a major drawback of this approach is the complexity it may introduce in theoretical analysis. SAR’s adaptive budget allocation will create intricate statistical dependencies among the alternatives, 
making it particularly difficult to establish a rigorous analysis of the approach's performance. 
For this reason, we focus on the seeding approach.


\subsection{Parallel Implementation}
\label{subsec: parallel}

Modern LLM APIs and services support concurrent requests and batching, allowing users to query the model with multiple prompts simultaneously \citep{ollama_faq, openai_batch_api}. This built-in concurrency provides a natural opportunity to accelerate virtual screening tasks using parallel LLM calls. In this subsection, we describe how our algorithms can be adapted to exploit this capability and improve computational efficiency in practice. We focus on the EFG-\(M^+\) algorithm, though the same principles apply to the simpler EFG-\(m\) and EFG-\(M\) algorithms. EFG-\(M^+\) consists of three phases: seeding, exploration, and top-\(M\) greedy. The first two phases involve static allocation of evaluation budgets to alternatives, which can be efficiently parallelized by batching the prompt queries. The top-\(M\) greedy phase, however, requires additional consideration. While this phase is still inherently parallelizable (since the top-\(M\) alternatives can be queried simultaneously), it requires synchronization after each round to update the top-\(M\) set. To eliminate this bottleneck, we propose a simple asynchronous modification. At the start of the top-\(M\) greedy phase, several LLM query batches of the initial top-\(M\) alternatives are issued. As soon as responses are returned, the set of top-\(M\) alternatives is updated based on the latest estimates, and the next query for the updated top-\(M\) alternatives is issued immediately. This eliminates idle time and allows the algorithm to take full advantage of the LLM’s concurrent  capability. Note that at any given moment, different queries may target slightly different versions of the top-\(M\) set. We refer to this modified algorithm as EFG-\(M^{++}\). Numerical results in Section~\ref{subsec: additional_parallel} show that EFG-\(M^{++}\) may deliver similar PGS\(_m\) as EFG-\(M^+\) while successfully achieving asynchronization. Further implementation details and experimental results are provided in Section~\ref{subsec: additional_parallel}.

\section{Numerical Experiments}
\label{sec: numerical}

This section explores a case study on LLM-enabled product design screening to demonstrate the practical effectiveness of our algorithms. Before introducing the case study, we conduct a series of numerical experiments, following standard practices in the simulation optimization literature, to comprehensively investigate the performance of our algorithms. Key observations from these experiments are summarized in this section, while detailed numerical results are included in Sections \ref{sec: ec_num_justification} and \ref{sec: ec_num_justification2}.



\subsection{Summary of Numerical Justification for Theoretical Results and Proposed Algorithms}
\label{subsec:numerical_justification}

For ease of justification, we use synthetic problem configurations in which the random evaluation output \( X_i \) for each alternative \( i \) is generated according to a pre-specified distribution.
This synthetic setup allows us to flexibly and quickly assess our algorithms' performance under controlled conditions.
 We consider two types of configurations that differ in the true means of all \( X_i \).
 While we rely on the normality assumption in Assumption \ref{assu: normal} to establish the desired theoretical properties, we also evaluate the proposed algorithms' performance under non-normal observation distributions to assess robustness. 
 Furthermore, to assess and compare the algorithms’ performance across different problem sizes, we vary the number of alternatives and use a total sampling budget \( B = c k \), where \( c\) is a constant that may vary depending on the specific experiment.

\subsubsection{Demonstration of Sample Optimality and Consistency.}
We examine the sample optimality and consistency of the EFG-\(m\) algorithm under the \textcolor{black}{slippage configuration}, as it is often referred to as the worst-case mean configuration. Across the three distributional settings of observations \textcolor{black}{(i.e., normal, Parato and lognormal distributions)}, the PCS\(_m\) achieved by EFG-\(m\) stabilizes around a constant positive level (e.g., 60\%) as  \( k \) increases, demonstrating its sample optimality and robustness to non-normal observations. To assess the consistency, we fix a large $k$ and vary the sampling budget \( B = c k \). As \( c \) increases, the PCS\(_m\) improves monotonically and may converge to 1. Additionally, we investigate the sample complexity of EFG-$m$ concerning various problem parameters, and the results are consistent with the implications of Theorem \ref{thm: consistency}. Details can be found in Section~\ref{subsec: numerical_topm_sample_optimality}.

\subsubsection{Demonstration of Free Ranking Effect.}
We examine the free ranking effect under the \textcolor{black}{random-mean (RM)} configuration, as it offers a general and versatile testing scenario capable of representing a broad spectrum of possible mean configurations encountered in practice. Across all three distributional settings, the EFG-\(m\) algorithm achieves sample optimality for both PGS\(_m\) and PGSR\(_m\). Interestingly, when \( k \) is small, the PGSR\(_m\) of EFG-\(m\) may initially be lower than its  PGS\(_m\); however, as \( k \) increases, the two metrics converge. This result highlights the EFG-\(m\) algorithm’s ``free'' ranking ability in large-scale screening problems. Details can be found in Section~\ref{subsec: numerical_topm_ranking}.

\subsubsection{Performance Improvement by top-$M$ Greedy Selection.}
We use the RM configurations to demonstrate the performance enhancement of the EFG-\(M\) algorithm with setting \( M = 2m \). The EFG-\(M\) algorithm shows a marked improvement over EFG-\(m\) in both PGS$_m$ and PGSR$_m$ when \( k \) is relatively large, particularly for \( k \geq 2^{10} \), while still maintaining the free ranking effect. For instance, under the RM-Normal configuration with \( k = 2^{11} \), the PGS\(_m\) and PGSR\(_m\) of EFG-\(m\) increase from approximately 50\% to around 80\%. Additionally, we find that the performance of EFG-\(M\) is relatively insensitive to the specific choice of the ratio \( M/m \). Details can be found in Section~\ref{subsec: numerical_topm_enhancement} and Section~\ref{subsec: additional_select_M}.

\subsubsection{Comparison with Existing Algorithms.}
To provide a comparative evaluation, we include two well-known OSS algorithms -- OCBAm \citep{chen2008efficient} and SAR \citep{bubeck2013multiple} -- in our experiments and benchmark our EFG-\(M^+\) algorithm against them. Implementation details and a discussion of additional algorithms are provided in Section~\ref{subsec: additional_algorithms}. These algorithms are heuristics whose performance in large-scale settings has not been evaluated. In our experiments, OCBAm fails to achieve sample optimality in any of the tested cases. The SAR algorithm performs well under the normal configuration, but its performance deteriorates significantly under non-normal distributions, becoming non-sample-optimal. This highlights the need for caution when applying heuristics to large-scale screening problems. To further demonstrate the strength of our approach, we evaluate the algorithms on a practical simulation-based virtual screening problem.  EFG-\(M^+\) consistently outperforms both OCBAm and SAR across all problem instances for both PGS\(_m\) and PGSR\(_m\), which underscores its effectiveness. Details can be found in Sections~\ref{subsec: comparison} and \ref{subsec: numerical_practical}.

Given these justifications, we will focus on the EFG-\(M^+\) algorithm in the subsequent case study. Rather than further validating its properties or comparing it against heuristic algorithms, our objective is to investigate the feasibility of using LLMs as human evaluators for virtual screening and to assess whether the algorithm can enable cost-effective and scalable virtual screening.

\subsection{{Case Study: LLM-Based Product Design Screening}}
\label{subsec: numerical_LLM}
In this subsection, we conduct an illustrative case study on a product design screening problem, where the objective is to explore different product designs and identify the top configurations that maximize consumer willingness to pay (WTP). These top designs can then be further validated through market tests or real-world consumer studies. To illustrate this, we formulate a laptop design problem. A laptop consists of several key attributes that influence consumer preferences and willingness to pay, such as the CPU, RAM size, and storage drive. Each unique combination of these attributes constitutes a distinct design. The total number of alternative designs (shortly, alternatives) depends on the available attribute set and the choice set for each attribute. Given the large number of possible configurations, an efficient screening approach is required to identify the most promising designs for further evaluation.

Each product design alternative can, in principle, be evaluated through marketing research surveys that collect direct feedback from consumers. However, conducting large-scale surveys is costly and time-consuming. A promising alternative, enabled by recent advancements in LLM intelligence, is to use LLMs as human evaluators. This approach treats an LLM as a proxy for a customer population and conducts virtual surveys by querying the model with the same survey questions that would be posed to real consumers.  In this case study, we adopt this LLM-based evaluation approach to estimate WTP for different laptop designs. Notice that we select WTP as the evaluation metric of each alternative because it provides a natural and interpretable way to assess the reasonableness of LLM-generated responses. A reasonable LLM response should follow a fundamental principle: a laptop design with superior specifications (e.g., a better CPU or higher storage capacity) should correspond to a higher WTP than an inferior design.  In the following, we will introduce the experimental setup, illustrate the reasonableness of LLM-generated responses for the laptop design problem, and then solve the virtual screening problem using our proposed approach. Besides, we will also provide a cost analysis for LLM-based virtual screening.

\subsubsection{Experimental Setup and the Language Model.}
To conduct the experiments in this section, we use a computing server equipped with two Intel 64 Family 6 Model 85 CPUs with a total of 96 logical processors and 256 GB of RAM, along with two NVIDIA GeForce RTX 3090 GPUs. We run the LLM locally rather than using commercial APIs (e.g., OpenAI’s GPT-4), in order to reduce experimental costs and ensure flexibility. We use Ollama, a framework that provides an efficient environment for deploying and interacting with LLMs, and we select Llama 3.2, an open-source language model with 3 billion parameters. While this model is much smaller than commercial alternatives such as ChatGPT or DeepSeek, it is significantly less computationally demanding and can be efficiently executed on our server. The setup instructions for configuring the Llama 3.2 model as a customer simulator are provided in Section \ref{subsubsec: setup}. We interact with the LLM through survey-like prompts. In each prompt, a laptop design is presented to the LLM as part of a simulated survey, where the LLM is asked to provide a single WTP estimate for the given configuration. The prompts are included in Section \ref{subsubsec: prompt_WTP}.
From a simulation perspective, each LLM output can be interpreted as an observation of the mean WTP for that particular design. Notably, LLM outputs exhibit inherent randomness, which is controlled by the temperature parameter. In this study, we set the temperature to its default value of 1, allowing for a moderate level of variation in responses. This randomness is desirable in the context of virtual screening, as it naturally reflects the diversity of consumer preferences.

\subsubsection{The Reasonableness of Using LLM as Human Evaluators.}
To assess the feasibility of using LLMs as human evaluators for virtual screening, we conduct a small-scale experiment to evaluate whether LLM-generated responses exhibit reasonable consistency in estimating WTP. We generate a set of seven Lenovo laptop configurations by varying three key attributes: CPU, RAM size, and storage drive size. The configurations are listed in Table \ref{tab:laptop_designs}. For each configuration, we query the LLM 4,000 times to obtain 4,000 observations of the WTP. These observations are averaged to obtain the mean WTP estimate for each configuration. The rounded WTP estimates are presented in the last row of Table \ref{tab:laptop_designs}.

\begin{table}[htbp]
    \centering
    \caption{Laptop Design Configurations and LLM-Generated WTP Estimates}
    \label{tab:laptop_designs}
    \begin{tabular}{cccccccc}
        \hline
        
        \hline
        
        \hline
        {Attribute} & {Design 1} & {Design 2} & {Design 3} & {Design 4} & {Design 5} & {Design 6} & {Design 7} \\
        \hline

        \hline

        \hline
        CPU (Intel Core Series) & i5  & i7  & i9  & i9  & i9  & i9  & i9  \\
        RAM Size                    & 8 GB  & 8 GB  & 8 GB  & 16 GB & 32 GB & 64 GB & 64 GB \\
        Storage Drive Size          & 256 GB & 256 GB & 256 GB & 256 GB & 256 GB & 256 GB & 512 GB \\
        \hline

        \hline

        \hline
        Mean WTP Estimate (\$)           &  790 &  902 & 1070 & 1243 & 1456 &
       1699 & 1800 \\
            \hline

        \hline

        \hline
    \end{tabular}
\end{table}

Table \ref{tab:laptop_designs} demonstrates that the LLM-generated WTP estimates exhibit both price reasonableness and logical ranking behavior. The estimated WTP values align well with realistic market prices for Lenovo laptops, indicating that the LLM’s pricing intuition is consistent with consumer expectations. Furthermore, the WTP estimates follow a logical ordering, increasing as laptop specifications improve, confirming that the LLM correctly assigns higher value to superior configurations. Overall, these results support the feasibility of using LLMs as human evaluators for product design screening. Readers familiar with the laptop market may find the WTP increase for larger RAM sizes higher than expected. More accurate WTP estimates could potentially be obtained using a larger language model with more parameters. 

\subsubsection{LLM-Based Large-Scale Virtual Screening.}
\label{subsubsec: llm_screening}
Now, we evaluate the effectiveness of our EFG-$M^+$ algorithm in solving the product design screening problem. We formulate several problem instances, varying the total number of alternatives $k$ from 36 to 3,240 by adjusting six key attributes. 
The attribute sets and values for the problem instances are summarized in Section \ref{subsubsec: attributes}.   For each alternative in a given problem instance, we query the LLM 4,000 times to obtain observations of WTP and estimate the mean WTP. In the screening problem, we let \( m = 10 \) and \( \delta = 30 \) and use a total evaluation budget of \( B = 400k \). To estimate the PGS\(_m\) and PGSR\(_m\) of the EFG-$M^+$ algorithm, we conduct 500 macro independent replications for each $k$. Notice that instead of re-querying the LLM for each replication, we use the 4,000 observations as an empirical distribution and generate samples from it. This approach significantly reduces experimental costs (see the cost analysis below for a breakdown of a single macro replication).  Furthermore, in this experiment, we compare our EFG-$M^+$ algorithm with a direct querying approach (DIRECT), which queries LLM directly the top-\( m \) designs from listed alternatives. This comparison aims to verify whether the LLM-as-human-evaluator approach is necessary for LLM-based virtual screening. We explore several prompting strategies for DIRECT and select the most effective one (see Section \ref{subsubsec: direct prompt} for details). The PGS\(_m\) and PGSR\(_m\) of the EFG-$M^+$ algorithm and DIRECT across problem instances are summarized in Table \ref{table: LLM_PGS}.  

            \begin{table}[htbp]
                \centering
                \begin{tabular}{ccccccccc}
                    \\ \hline 
                    
                    \hline

                    \hline
                    Number of alternatives $k$      & \multicolumn{2}{c}{36}       & \multicolumn{2}{c}{360}       & \multicolumn{2}{c}{1,080}      & \multicolumn{2}{c}{3,240}      \\ \hline 
                    
                    \hline

                    \hline
                    Performance Measure  & PGS$_m$        & PGSR$_m$       & PGS$_m$        & PGSR$_m$       & PGS$_m$        & PGSR$_m$       & PGS$_m$        & PGSR$_m$          \\ \hline 
                    
                    \hline

                    \hline
                    EFG-$M^+$     & 0.976          & 0.916          & 1.000          & 0.950          & 1.000          &   1.000        &   1.000        & 1.000           \\
                    DIRECT     & 0.000          & 0.000          & 0.000          & 0.000          & 0.000          & 0.000          & 0.000          & 0.000            \\
                    \hline 

                    \hline

                    \hline
                    \end{tabular}
                    \caption{Estimated PGS$_m$ and PGSR$_m$ on LLM-based product design screening problem instances}
                    \label{table: LLM_PGS}
                    \end{table}

From Table \ref{table: LLM_PGS}, we observe that the EFG-\(M^+\) algorithm performs well in solving the LLM-based product design screening problem. Given a total evaluation budget of \( B = 400k \) (which, as discussed below, may not pose a computational or financial burden even for  large-scale problems), the PGS\(_m\) remains consistently strong as the number of alternatives increases. The free ranking effect also holds, as evidenced by the coincidence of PGS\(_m\) and PGSR\(_m\) for \( k \geq 1080 \).  In contrast, direct LLM queries alone may be ineffective for screening even in small-scale problems. The DIRECT approach fails entirely: both the PGS\(_m\) and PGSR\(_m\) remain near zero across all problem instances. These results suggest that while LLMs are powerful, they may not yet be reliable for directly selecting and ranking top-performing designs, let along achieving the sample optimality. Instead, treating LLMs as human evaluators and integrating them into a sample-optimal OSS algorithm may be essential for extracting meaningful screening results for large-scale problems.  

One might question why the DIRECT approach, despite the perceived ``intelligence'' of LLMs, may fail even in small-scale problems. We believe a key reason is the increased difficulty of selecting top-\( m \) alternatives (rather than top-1), a challenge that intensifies as \( m \) increases. Even humans may struggle with such multi-choice, multi-selection tasks. To further investigate this, we evaluate the performance of the DIRECT approach when \( m = 1 \) in Section \ref{subsubsec: direct_top1}. The results indicate that in this simpler scenario, when $k$ is small, the LLM is able to identify the top-1 design with nonzero probability. This suggests that LLMs may indeed possess a form of ``intelligence'' for small-scale, straightforward screening tasks. However, this intelligence may deteriorate as the screening problem grows larger and more complex, rendering them incapable of handling large-scale virtual screening problems directly. Beyond this degradation in accuracy, we also observe that the performance of the DIRECT approach is highly unstable, varying significantly across different problem instances and prompts. This further reinforces the necessity of a sample-optimal OSS framework in LLM-based virtual screening which may ensure both stability and superior performance. Interested readers may refer to Section \ref{subsubsec: direct_top1} for further details.


\subsubsection{Cost Analysis for LLM-Based Virtual Screening.}
A key advantage of LLM-based virtual screening is its cost-effectiveness, particularly compared to traditional human-driven marketing research. For a problem instance with 3,240 alternatives and a total evaluation budget of \( B = 400k \), we need to query the LLM 1,926,000 times. Running these queries on our computing server, equipped with two NVIDIA GeForce RTX 3090 GPUs (released in 2020 at \$1,500 each), we achieve a query rate of 40–50 queries per second. This allows us to complete the screening in only about 10 hours. While runtime may vary depending on prompt length, computing environment, and the deployed model, this execution time remains remarkably short for large-scale screening. If local computing resources are unavailable, commercial LLM APIs such as OpenAI’s GPT-4o mini or DeepSeek's DeepSeek V3 offer alternatives. These services charge based on tokens, where one token roughly corresponds to four characters or 0.75 words in English text. Each prompt in our experiment contains approximately 80 tokens (see Section \ref{subsubsec: prompt_WTP}), meaning the full screening across 3,240 alternatives requires processing 104 million tokens. According to OpenAI or DeepSeek's pricing, GPT-4o mini (DeepSeek V3) charges \$0.6 (\$0.27) per million tokens,  leading to a total experiment cost of \$62.4 (\$28.1). Apparently, compared to the high costs and time requirements of real consumer surveys, using LLMs as human evaluators, combined with our algorithms, may offer a scalable and cost-efficient alternative for early-stage product design screening. We are enthusiastic about the potential of this approach for other large-scale screening problems and leave its validation for future exploration.

\section{{Concluding Remarks}}
\label{sec: conclusion}
In this paper, we thoroughly investigate the budget allocation challenges in LLM-enabled virtual screening. We begin by defining the sample optimality in terms of the PGS$_m$, which denotes the probability of correctly selecting a good subset. Based on this definition, we propose the EFG-$m$ algorithm to achieve this sample optimality. From a boundary-crossing perspective, we derive a PGS$_m$ lower bound for the EFG-$m$ algorithm, based on which we then prove its sample optimality and consistency. Interestingly, the algorithm may also achieve the optimal sample complexity regarding various problem parameters. Furthermore, we consider rankings to provide decision makers with a deeper understanding into alternatives included in the selected subset. We for the first time introduce the concept of good ranking and define the PGSR$_m$. Surprisingly, we find that the EFG-$m$ algorithm has the additional benefit of achieving a good ranking for free in solving large-scale virtual screening problems. Again based on the boundary-crossing perspective, we prove the sample optimality and consistency of the EFG-$m$ algorithm for the PGSR$_m$. Moreover, we extend the EFG-$m$ algorithm to the EFG-$M$ and EFG-$M^+$ algorithms to enhance practical applicability. Finally, a comprehensive numerical study validates our theoretical results and demonstrates the efficiency of our algorithms in solving virtual screening problems. Additionally, a case study on LLM-based virtual screening demonstrates the reasonableness, efficacy, and cost-effectiveness of the LLM-as-human-evaluator approach when integrated with our sample-optimal algorithms.

{To conclude this paper, we point out several interesting directions for future research. First, numerical results in Section \ref{sec: numerical} demonstrate the robustness of our algorithms in solving virtual screening problems with non-normal observations. This sets a theoretical challenge to rigorously establish the sample optimality of the algorithms without relying on the normality assumption. Second, in our current setup, we adopt the default temperature setting of 1 when using LLMs as human evaluators. However, temperature plays a key role in controlling the variance of LLM outputs—lowering the temperature reduces randomness but may introduce bias. Exploring how to optimally tune the temperature to balance the bias–variance tradeoff under a fixed evaluation budget may offer meaningful improvements in screening efficiency. Third, we assume that LLM evaluations for a given alternative are i.i.d., which requires querying the LLM with the same prompt multiple times. However, many LLMs support conversational interactions, allowing the user to generate follow-up outputs without repeating the full prompt—potentially reducing cost. This approach, however, may introduce dependence among the generated responses. Understanding how such dependence affects screening accuracy and how to exploit its impact remains an intriguing and challenging problem.}

\bibliographystyle{informs2014}
\bibliography{ref.bib}

\ECSwitch
\EquationsNumberedBySection 
%
%
\ECHead{{E-Companion to \\ \vspace{0.2cm}
\textit{Efficient Budget Allocation for Large-Scale LLM-Enabled Virtual Screening}}}

\vspace{0.2cm}
\section{Technical Supplements for Section \ref{sec: problem}}
\label{subsec: proof_best_order}
In this section, we will prove that for a budget algorithm, as long as $\limsup_{k \rightarrow \infty} |\mathcal{G}| < \infty$, the total evaluation budget \(B\) required to keep a positive PGS$_m$ should  increase at least linearly in the total number of alternatives $k$. This result serves as the theoretical basis for Definition \ref{def: rate_optimality_PCS}. Inspired by \cite{itemZhong2022}, we introduce the following assumptions on the budget allocation behaviors of budget allocation algorithms. 
\begin{assumption}
\label{assu: random1}
Let \(\mathcal{A}_t\) denote the set of alternatives having no observation when the algorithm allocates the \(t\)-th observation. If the algorithm allocates the $t$-th observation to an alternative in \(\mathcal{A}_t\), then all alternatives in \(\mathcal{A}_t\) have an equal probability of being selected.
\end{assumption}
\begin{assumption}
\label{assu: random2}
 Let \(\mathcal{A}\) denote the set of alternatives having no observation when the total evaluation budget is exhausted.  If \(|\mathcal{A}|\geq m\)  and the algorithm chooses to select \(m\) alternatives from \(\mathcal{A}\), then all subsets of $\mathcal{A}$ containing $m$ alternatives have an equal probability of being selected.
\end{assumption}

Assumption \ref{assu: random1} requires that allocating observations to alternatives with observations do not provide any information on the alternatives without any observations. Assumption \ref{assu: random2} requires that after the total evaluation budget is exhausted, if the algorithm chooses to select $m$ alternatives from the set of alternatives without any observations, it can only randomly pick $m$ alternatives. Borrowing ideas from the proof of Theorem 1 in \cite{itemZhong2022}, we may prove the following theorem.
\begin{theorem}
Suppose that $\limsup_{k \rightarrow \infty} |\mathcal{G}| < \infty$ and Assumptions \ref{assu: random1} and \ref{assu: random2} hold. Then, for any budgete allocation algorithm, if $\lim_{k \rightarrow \infty}B/k = 0$,
\[
\limsup_{k \rightarrow \infty} {\rm PGS}_m = 0.
\]
\end{theorem}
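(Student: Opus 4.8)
The plan is to show that a sublinear budget forces the algorithm to leave almost all alternatives unobserved, so that the handful of good alternatives (bounded in number by hypothesis) are almost surely never sampled, and the forced random guessing among unobserved alternatives then succeeds with vanishing probability. Write $\mathcal{O}$ for the set of alternatives that receive at least one observation and $\mathcal{A} = \mathcal{K} \setminus \mathcal{O}$ for the unobserved set at termination, and let $\bar{g}$ be the constant bounding $|\mathcal{G}|$. The first, trivial, step is the deterministic inequality $|\mathcal{O}| \le B$, hence $|\mathcal{A}| \ge k - B$; since $B/k \to 0$ we have $k - B \to \infty$.

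The central step is a symmetry estimate: for every fixed alternative $i$, $\Pr\{i \in \mathcal{O}\} \le B/k$. To obtain this I would set up a coupling justified by Assumption \ref{assu: random1}. Generate a uniform random permutation $\pi$ of $\mathcal{K}$ in advance, and declare that whenever the algorithm samples a previously unobserved alternative it receives the next index dictated by $\pi$. Because Assumption \ref{assu: random1} makes the choice of a fresh alternative uniform over the currently unobserved set and independent of the observed data, this coupling reproduces the algorithm's law exactly: the distinct alternatives enter $\mathcal{O}$ in the order $\pi(1), \pi(2), \dots$. Consequently $\mathcal{O} = \{\pi(1), \dots, \pi(R)\}$ for the (data-dependent) count $R = |\mathcal{O}| \le B$, so $\mathcal{O} \subseteq \{\pi(1), \dots, \pi(B)\}$, a uniform random $B$-subset. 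This yields $\Pr\{i \in \mathcal{O}\} \le \Pr\{i \in \{\pi(1),\dots,\pi(B)\}\} = B/k$, and a union bound over the at most $\bar{g}$ good alternatives gives $\Pr\{\mathcal{G} \cap \mathcal{O} \ne \emptyset\} \le \bar{g}\, B/k \to 0$.

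It remains to control the good-screening probability on the complementary event $\{\mathcal{G} \cap \mathcal{O} = \emptyset\}$, where every good alternative is unobserved, i.e. $\mathcal{G} \subseteq \mathcal{A}$. On this event a good screening $\mathcal{S} \subseteq \mathcal{G}$ forces $\mathcal{S} \subseteq \mathcal{A}$, since any selected observed alternative would be non-good; thus the algorithm selects all $m$ of its alternatives from $\mathcal{A}$, whose size exceeds $m$ for large $k$. Assumption \ref{assu: random2} then makes this selection uniform over the $m$-subsets of $\mathcal{A}$, so that, conditional on the realized $\mathcal{A}$, $\Pr\{\mathcal{S} \subseteq \mathcal{G}\} = \binom{g}{m}/\binom{|\mathcal{A}|}{m} \le \binom{\bar{g}}{m}/\binom{k-B}{m}$, which tends to $0$ because $m$ is a fixed constant and $k - B \to \infty$. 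Combining the two pieces through $\Pr\{\mathcal{S} \subseteq \mathcal{G}\} \le \Pr\{\mathcal{G}\cap\mathcal{O}\ne\emptyset\} + \binom{\bar{g}}{m}/\binom{k-B}{m}$ and taking $\limsup_{k\to\infty}$ gives ${\rm PGS}_m \to 0$.

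The step I expect to be the main obstacle is the symmetry estimate $\Pr\{i\in\mathcal{O}\}\le B/k$. The difficulty is that both the total number of explored alternatives and the decision of when to stop exploring are data-dependent, and the data are informative about which alternatives are good; a naive argument through $\mathrm{E}[|\mathcal{O}|]/k$ fails precisely because good alternatives can be oversampled relative to the uniform rate. The permutation coupling circumvents this by fixing the \emph{identities} of freshly explored alternatives as a uniform permutation up front, leaving only the \emph{stopping} to depend on the data; since the stopping can never push the count past $B$, the observed set stays inside a uniform random $B$-subset no matter how adaptively the budget is spent. I would also take care to state Assumption \ref{assu: random2} as holding conditionally on the observation history (equivalently, on the realized $\mathcal{A}$), which is exactly what the final conditional computation requires.
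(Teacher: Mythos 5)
Your proof is correct, and at the structural level it takes the same route as the paper's: both decompose $\mathrm{PGS}_m$ according to whether any good alternative ever receives an observation, use Assumption \ref{assu: random1} to show that event is negligible under a sublinear budget, and use Assumption \ref{assu: random2} to show that the forced random guess among unobserved alternatives succeeds with vanishing probability (cf. Equations \eqref{eq: OSS_bound}--\eqref{eq: PGS_bound_2}). The differences lie in execution, and they are worth noting. The paper argues by contradiction: it supposes $\limsup_{k}\mathrm{PGS}_m=\alpha>0$, fixes $c_\alpha = 1-\sqrt[\bar g]{1-\alpha/4}$ so that $B\le c_\alpha k$ eventually, and bounds the probability that $\mathcal{G}$ is ever sampled by a telescoping product of conditional probabilities, Equations \eqref{eq: first}--\eqref{eq: PGS_bound_1}; evaluated at $B=c_\alpha k$ this bound converges to the constant $\alpha/4$ rather than to zero, so the contradiction $\mathrm{PGS}_m\le\alpha/2<\alpha$ is what closes the argument. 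Your permutation coupling replaces that product with the sharper symmetry estimate $\Pr\{i\in\mathcal{O}\}\le B/k$, whence $\Pr\{\mathcal{G}\cap\mathcal{O}\neq\emptyset\}\le \bar g\,B/k\to 0$, and the conclusion follows directly with an explicit rate and no contradiction scaffolding; the price is that you must verify the coupling faithfully reproduces the algorithm's law (which your argument does: conditional on the identities and values observed so far, the next fresh index is uniform on the complement, matching Assumption \ref{assu: random1}). The two devices encode the same probabilistic content, so neither is more general, but yours is cleaner and quantitative while the paper's avoids any coupling construction. Finally, your closing remark that Assumption \ref{assu: random2} must be read conditionally on the observation history is the right reading; it is exactly what the paper's step \eqref{eq: PGS_bound_2} uses implicitly, and your bound $\binom{\bar g}{m}\big/\binom{k-B}{m}$ plays the same role as the paper's $\bar g/(k-c_\alpha k)$ there.
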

\begin{proof}{Proof.}
We will prove the theorem by contradiction. Assume that 
\begin{eqnarray}
    \label{eq: assum}
    \limsup_{k \rightarrow \infty} \mbox{PGS}_m = \alpha
\end{eqnarray} for some constant $\alpha \in (0, 1]$. Let $g(k)=|\mathcal{G}|$ and $\bar g = \sup_{k \geq 1} g(k)$. Notice that since $\limsup_{k \rightarrow \infty} |\mathcal{G}| < \infty$, $\bar g < \infty$. Then, let \(c_{\alpha} = 1-\sqrt[\bar g]{1-{\alpha}/{4}}\). Furthermore, since $\lim_{k \rightarrow \infty}B/k = 0$, there must exist a finite $K_0$ such that as long as $k \geq K_0$, $B \leq c_{\alpha}k$. In the following, we consider \(k \geq \max\left\{{\bar g}/(1 - c_{\alpha}), K_0\right\}\).
For a budget allocation algorithm, the PGS$_m$, i.e., the probability that the selected subset $\mathcal{S}$ includes $m$ good alternatives, can be decomposed as
\begin{eqnarray}
    \label{eq: OSS_bound}
    \notag \mbox{PGS}_m  = && \Pr \left\{ \{\mathcal{S} \subseteq \mathcal{G} \} \cap \{ \mathcal{G}\text{ has at least one obs.}\}\right\}  + \Pr \left\{ \{\mathcal{S} \subseteq \mathcal{G}\} \cap \{ \mathcal{G}\text{ has no obs.}\}\right\} \\
     && \leq \Pr \left\{ \mathcal{G}\text{ has at least one obs.}\right\}  + \Pr \left\{ \mathcal{G}\text{ has no obs.}\right\}, 
\end{eqnarray}
where \(\{\mathcal{G}\text{ has at least one obs.}\}\) means that there is at least one good alternative having observations, while \(\{\mathcal{G}\text{ has no  obs.}\}\) means that none of the good alternatives has any observations. We then analyze the two probabilities on the right-hand side (RHS) of Equation \eqref{eq: OSS_bound} separately.

Let's denote the event that the algorithm allocates the \(t\)-th observation to an alternative not belonging to  \(\mathcal{G}\) as  \(\Omega_t\).  Then, for the first term on the RHS of Equation \eqref{eq: OSS_bound}, we have
\begin{eqnarray}
    \label{eq: first}
      \Pr\left\{\mathcal{G}\text{ has at least one obs.}\right\} & \leq  & 1-\Pr\left\{\bigcap_{t=1}^{B} \Omega_t\right\} = 1-\Pr\left\{\Omega_1\right\} \times \prod_{t=2}^{B} \Pr\left\{\Omega_t \mid \Omega_1, \ldots, \Omega_{t-1}\right\}. \quad \quad \quad  \quad
\end{eqnarray} 
Under Assumption \ref{assu: random1}, we have
\begin{eqnarray}
    \label{eq: first_p}
    \Pr\left\{\Omega_1\right\}  \geq \frac{k  - g(k)}{k} \text {  and  } 
    \Pr\left\{\Omega_t \mid \Omega_1, \ldots, \Omega_{t-1}\right\} \geq \frac{k - t + 1 - g(k)}{k - t + 1} \quad \forall t \geq 2,
\end{eqnarray}
because  there are at least \(k - (t - 1)\) alternatives having no any observation when the algorithm allocates the \(t\)-th observation.  Combining Equations  \eqref{eq: first} and \eqref{eq: first_p}, we can further derive
\begin{eqnarray}
\label{eq: PGS_bound_1}
      \Pr\left\{\mathcal{G}\text{ has at least one obs.}\right\} 
    \notag & \leq & 1-\frac{k-g(k)}{k} \cdot \frac{k-1-g(k)}{k-1} \cdot \cdots \cdot \frac{k-B+1-g(k)}{k-B+1} \\
   \notag & \leq & 1-\frac{k-\bar g}{k} \cdot \frac{k-1-\bar g}{k-1} \cdot \cdots \cdot \frac{k-c_{\alpha}k+1-\bar g}{k-c_{\alpha}k+1}  \\
   \notag & = & 1-  \prod_{i=1}^{\bar g} \frac{k-c_{\alpha}k-(i-1)}{k-(i-1)} \\
   & \leq & 1-   \left(\frac{k-c_{\alpha}k -(\bar g-1) }{k-(\bar g-1)}\right)^{\bar g} 
     =:  f_1(k; c_{\alpha}, \bar g), \qquad \qquad \qquad
\end{eqnarray} 
where the second inequality holds because $g(k) \leq \bar g$ and $B\leq c_{\alpha} k$, and the last inequality holds because $ k \geq {\bar g}/(1 - c_{\alpha})$. 

For the second term on the RHS of  Equation \eqref{eq: OSS_bound}, notice that since $k \geq \bar g/(1-c_{\alpha})$, there are at least \(k - c_{\alpha}k \geq \bar g \geq m\) alternatives having no observation. Then, according to Assumption \ref{assu: random2},
\begin{eqnarray}
\label{eq: PGS_bound_2}
    \notag \Pr  \left\{ \{\mathcal{G}\text{ has no obs.}\} \right\} & = & \Pr  \left\{ \{\text{at least } k-c_{\alpha}k \text{ alternatives has no obs.}\}  \cap \{\mathcal{G}\text{ has no obs.}\} \right\}  \\
     & \leq & \frac{1}{\tbinom{k-c_{\alpha}k}{g(k)}} \leq  \frac{g(k)}{k-c_{\alpha}k}  \leq   \frac{\bar g}{k-c_{\alpha}k} =: f_2(k; c_{\alpha}, \bar g).
\end{eqnarray}

Combining Equations \eqref{eq: PGS_bound_1} and \eqref{eq: PGS_bound_2} and the fact that \(\lim_{k \to \infty} f_1(k; c_{\alpha}, \bar g) = 1 - (1 - c_{\alpha})^{\bar g} = \alpha/4\) and \(\lim_{k \to \infty} f_2(k; c_{\alpha}, \bar g) = 0\), there must exist a finite \(K_1\) such that when \(k \geq K_1\),
$$ \mbox{PGS}_m  \leq  f_1(k; c_{\alpha}, \bar g) + f_2(k; c_{\alpha}, \bar g) \leq  \frac{\alpha}{2},$$
which contradicts with Equation \eqref{eq: assum}. The conclusion of interest is now proved.   \hfill \Halmos
\end{proof}

\section{Technical Supplements for Section \ref{sec: sample-optimality}}

\subsection{Proof of Theorem \ref{thm: nonnormalPGS}}
\label{subsec: proof_nonnormal}
The proof presented here follows the proof of Theorem 3 in \cite{li2023surprising}. To maintain self-containment and clarity, we provide below a detailed argument.
\begin{proof}{\textbf{Proof of Theorem \ref{thm: nonnormalPGS}}:}
    Given a total evaluation budget $B=(n_0+n_g) k$, the PGS$_m$ lower bound of the EFG-$m$ algorithm stated in Equation \eqref{eqn: bound_PGS3} can be rewritten as
    \begin{eqnarray}
        \label{lem: proof_PACS_formula}
        \notag \mbox{PGS}_m &\geq & \Pr \left\{ \frac{n_g}{m} + n_0 \geq  \frac{1}{k} \sum_{j\in\mathcal{G}}\argmin_{n \in [n_0, \infty)}  \bar{X}_j(n)
        + \frac{1}{k} \sum_{i\in\mathcal{K}\setminus\mathcal{G}}  N_i \left(\bar X^*_{(m)}; n_0\right)\right\}.
        \end{eqnarray}
  Since $n_g > m\left[C(\delta/\bar{\gamma}; n_0)-n_0\right]$, we can arbitrarily fix a constant $\lambda$ such that $0<\lambda<n_g/m+n_0-C(\delta/\bar\gamma;n_0)$. Then, we may select a positive $\delta_\lambda\in (0,\delta)$ such that 
   \begin{eqnarray}\label{eqn: proof1_delta_lambda}
       C\left(\frac{\delta-\delta_\lambda}{\bar\sigma};n_0\right) = n_g/m+n_0-\lambda >0.
   \end{eqnarray}
   The existence of such $\delta_{\gamma}$ can be ensured due to that by the virtue of definition, $C(x; n_0)$ is a continuous and monotonically decreasing function on $x \in (0, \infty)$, and $C(x;n_0)\to 0$ as $x\to\infty$ for any fixed $n_0$. For the given $\gamma$, we may further derive the PGS lower bound in Equation \eqref{lem: proof_PACS_formula} as
       \begin{eqnarray}
        \label{lem: proof_PACS_formula2}
        \notag  \mbox{PGS}_m  \geq & & \Pr \Bigg\{\Bigg\{\frac{n_g}{m} + n_0 \geq  \frac{1}{k} \sum_{j\in\mathcal{G}}\argmin_{n \in [n_0, \infty)}  \bar{X}_j(n)
        + \frac{1}{k} \sum_{i\in\mathcal{K}\setminus\mathcal{G}}  N_i \left(X^*_{(m)}; n_0\right)\Bigg\} \qquad \qquad \qquad \qquad \\ 
        & & \qquad \qquad \qquad \qquad \qquad \qquad \qquad \qquad  \qquad \qquad \qquad \cap \left\{\bar X^*_{(m)}\geq \mu_m-\delta_{\gamma}\right\}\Biggr\}.
        \end{eqnarray}

Now we proceed to study the limiting behavior of the PGS$_m$ bound presented in Equation 
\eqref{lem: proof_PACS_formula2}. Firstly,
according to Argument \ref{argu4}, we have  $\sum_{j\in\mathcal{G}}\argmin_{n \in [n_0, \infty)}  \bar{X}_j(n)<\infty$ almost surely. This indicates
\begin{eqnarray}\label{eqn: proof1_argmin}
\Pr\left\{\frac{1}{k}\sum_{j\in\mathcal{G}}\argmin_{n \in [n_0, \infty)}\bar{X}_j(n)\leq \frac{\lambda}{2}\right\} \nearrow\Pr\left\{\sum_{j\in\mathcal{G}}\argmin_{n \in [n_0, \infty)}\bar{X}_j(n)<\infty\right\}=1, \mbox{ as } k\to\infty.
\end{eqnarray}
Additionally, Argument \ref{argu5} states that, under the condition $\bar X^*_{(m)}\geq \mu_m-\delta_{\gamma}$,
\[
\liminf_{k\to\infty}\ \frac{1}{k}\sum_{i\in\mathcal{K}\setminus\mathcal{G}}  N_i \left(X^*_{(m)};n_0\right) \leq 
\limsup_{k\to\infty}\ \frac{1}{k}\sum_{i\in\mathcal{K}\setminus\mathcal{G}}  N_i \left(X^*_{(m)};n_0\right)\leq C\left(\frac{\delta-\delta_{\gamma}}{\bar\sigma};n_0\right), \mbox{ almost surely}.
\]
Therefore, it is clear that, for the given $\lambda$, 
\begin{eqnarray}\label{eqn: proof1_N}
   \liminf_{k\to\infty}\ \Pr\left\{ \frac{1}{k}\sum_{i\in\mathcal{K}\setminus\mathcal{G}}  N_i \left(X^*_{(m)};n_0\right)\leq C\left(\frac{\delta-\delta_{\gamma}}{\bar\sigma};n_0\right)+\frac{\lambda}{2}\right\}= 1.
\end{eqnarray}
For the ease of presentation, we define two events
\begin{eqnarray*}
    \Omega_{\argmin} = \left\{\frac{1}{k}\sum_{j\in\mathcal{G}}\argmin_{n \in [n_0, \infty)}\bar{X}_j(n)\leq \frac{\lambda}{2}\right\}\mbox{ and }\Omega_{N}=\left\{ \frac{1}{k}\sum_{i\in\mathcal{K}\setminus\mathcal{G}}  N_i \left(X^*_{(m)};n_0\right)\leq C\left(\frac{\delta-\delta_{\gamma}}{\bar\sigma};n_0\right)+\frac{\lambda}{2}\right\}.
\end{eqnarray*}
By Equations \eqref{eqn: proof1_argmin} and \eqref{eqn: proof1_N}, they satisfy
\begin{eqnarray}\label{eqn: proof1_limit}
    \lim_{k\to\infty}\Pr\{\Omega_{\argmin}\}= 1 \mbox{ and } \liminf_{k\to\infty}\ \Pr\{\Omega_{N}|\bar X^*_{(m)}\geq \mu_m-\delta_{\gamma}\}= 1.
\end{eqnarray}

Now we are ready to study the limiting behavior of PGS$_m$. Notice that Equation \eqref{lem: proof_PACS_formula2} can be further reformulated as
\begin{eqnarray}\label{eqn: proof_PGS_1}
        \notag \mbox{PGS}_m  &\geq& \Pr \Bigg\{\Bigg\{\frac{n_g}{m} + n_0 \geq  \frac{1}{k} \sum_{j\in\mathcal{G}}\argmin_{n \in [n_0, \infty)}  \bar{X}_j(n)
        + \frac{1}{k} \sum_{i\in\mathcal{K}\setminus\mathcal{G}}  N_i \left(\bar{X}^*_{(m)}; n_0\right)\Bigg\}\cap\{\bar X^*_{(m)}\geq \mu_m-\delta_{\gamma}\}\Bigg\}  \qquad\\
      \notag  &\geq & \Pr \Bigg\{\Bigg\{\frac{n_g}{m} + n_0 \geq  \frac{1}{k} \sum_{j\in\mathcal{G}}\argmin_{n \in [n_0, \infty)}  \bar{X}_j(n)
        + \frac{1}{k} \sum_{i\in\mathcal{K}\setminus\mathcal{G}}  N_i \left(\bar{X}^*_{(m)}; n_0\right)\Bigg\}  \\
        \notag && \qquad \qquad \qquad \qquad  \qquad \qquad \qquad \quad \cap \,\Omega_{\argmin}\cap\Omega_{N}  \cap \left\{\bar X^*_{(m)}\geq \mu_m-\delta_{\gamma}\right\}\Bigg\}\\
        \notag &\geq& \Pr \left\{\left\{\frac{n_g}{m} + n_0\geq  \frac{\lambda}{2}
        + C\left(\frac{\delta-\delta_{\gamma}}{\bar\sigma};n_0\right)+\frac{\lambda}{2}\right\}\cap \Omega_{\argmin}\cap\Omega_{N}\cap\{\bar X^*_{(m)}\geq \mu_m-\delta_{\gamma}\}\right\}\\
        &=& \Pr \left\{\Omega_{\argmin}\cap\Omega_{N}\cap\{\bar X^*_{(m)}\geq \mu_m-\delta_{\gamma}\}\right\},
        \end{eqnarray}
where the second inequality holds because the additional condition $\Omega_{\argmin}\cap\Omega_{N}$ is imposed, the third inequality holds by the definition of $\Omega_{\argmin}$ and $\Omega_{N}$, and the last equality holds from the choice of $\delta_\lambda$ in Equation \eqref{eqn: proof1_delta_lambda}. 
By using the Bonferroni's inequality in Equation \eqref{eqn: proof_PGS_1}, we may further have
\begin{eqnarray*}
    \mbox{PGS}_m 
    &\geq& \Pr \left\{\Omega_{N}\cap\{\bar X^*_{(m)}
    \geq \mu_m-\delta_{\gamma}\}\right\}-\Pr\left\{\Omega_{\argmin}^c\right\} \\
    &=& \Pr \left\{\Omega_{N}|\bar X^*_{(m)}\geq \mu_m-\delta_{\gamma}\right\}\Pr\{\bar X^*_{(m)}\geq \mu_m-\delta_{\gamma}\}-\Pr\left\{\Omega_{\argmin}^c\right\}.
\end{eqnarray*}
Then, according to Equation \eqref{eqn: proof1_limit},
\begin{eqnarray}\label{eqn: proof_PGS_3}
    \notag \liminf_{k\to\infty} \mbox{PGS}_m 
    &\geq&\ \Pr\{\bar X^*_{(m)}\geq \mu_m-\delta_{\gamma}\} \liminf_{k\to\infty}\ \Pr \left\{\Omega_{N}|\bar X^*_{(m)}\geq \mu_m-\delta_{\gamma}\right\}-\limsup_{k\to\infty}\Pr\left\{\Omega_{\argmin}^c\right\} \\
    &=&\ \Pr\{\bar X^*_{(m)}\geq \mu_m-\delta_{\gamma}\}.
\end{eqnarray}
Lastly, by the definition of $\delta_{\gamma}$, we can see that $\delta_\lambda\to \delta_{0}$ as $\lambda\to 0$. Meanwhile, it is important to notice that Equation \eqref{eqn: proof_PGS_3} is true for any $\lambda\in (0,n_g/m+n_0-C(\delta/\bar\sigma;n_0)) $ and its corresponding choice of $\delta_\lambda$. Therefore, we can let $\lambda\to 0$ in Equation \eqref{eqn: proof_PGS_3} to further get 
\begin{eqnarray}\label{eqn: proof_PGS_4}
    \notag \liminf_{k\to\infty} \mbox{PGS}_m 
    &\geq&  \Pr\{\bar X^*_{(m)}\geq \mu_m-\delta_{0}\}.
\end{eqnarray}
    This concludes the proof of the theorem. \hfill\Halmos
    \end{proof}

\subsection{Proof of Lemma \ref{lem: hitting_time_prob}}
To prove the lemma, we first prepare the following lemma.
\label{subsec: proof_hitting_time_prob}
\begin{lemma}
    \label{lem: normal_cdf}
        Let $\Phi(x)$ be the cumulative distribution function of the standard normal distribution. Then, for any $ x > 0$,
        $$1 - \Phi(x) \leq \exp\left(-\frac{x^2}{2}\right).$$
    \end{lemma}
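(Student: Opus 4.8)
The plan is to establish this Gaussian tail bound through a Chernoff-type argument, which is the most direct route. Writing $Z$ for a standard normal random variable, note that $1 - \Phi(x) = \Pr\{Z > x\}$. For any $\lambda > 0$, Markov's inequality applied to $e^{\lambda Z}$ gives
\[
\Pr\{Z > x\} = \Pr\{e^{\lambda Z} > e^{\lambda x}\} \leq e^{-\lambda x}\,\mathrm{E}[e^{\lambda Z}].
\]
Since the moment generating function of the standard normal is $\mathrm{E}[e^{\lambda Z}] = e^{\lambda^2/2}$, the bound becomes $\exp(-\lambda x + \lambda^2/2)$, valid for every $\lambda > 0$.

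The second step is to optimize over the free parameter $\lambda$. The exponent $-\lambda x + \lambda^2/2$ is a convex quadratic minimized at $\lambda = x$, which is an admissible choice because $x > 0$. Substituting $\lambda = x$ yields exponent value $-x^2/2$, and hence $1 - \Phi(x) \leq e^{-x^2/2}$, which is precisely the claim.

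I expect no substantive obstacle, as this is a classical sub-Gaussian tail estimate; the only routine items to verify are the identity $\mathrm{E}[e^{\lambda Z}] = e^{\lambda^2/2}$ (a standard Gaussian integral) and the positivity of the minimizer. If one prefers to avoid the moment generating function, an equally elementary route uses the algebraic identity $t^2/2 = x^2/2 + x(t-x) + (t-x)^2/2$ to factor $e^{-t^2/2} = e^{-x^2/2}\,e^{-x(t-x)}\,e^{-(t-x)^2/2}$ inside the tail integral $\frac{1}{\sqrt{2\pi}}\int_x^\infty e^{-t^2/2}\,dt$; after the substitution $u = t - x$ and the bound $e^{-xu} \leq 1$ for $u \geq 0$, one obtains the even sharper estimate $\tfrac{1}{2}e^{-x^2/2}$, from which the stated inequality follows at once. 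Either argument completes the lemma in a few lines, and the sharper constant makes clear that the bound as stated is comfortably satisfied.
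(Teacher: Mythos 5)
Your proof is correct. The paper itself states this lemma without proof, treating it as a standard fact, so there is no in-paper argument to compare against; the Chernoff bound you give (Markov's inequality applied to $e^{\lambda Z}$, using $\mathrm{E}[e^{\lambda Z}] = e^{\lambda^2/2}$ and optimizing at $\lambda = x$) is precisely the canonical justification, and your alternative integral decomposition yielding the sharper constant $\tfrac{1}{2}e^{-x^2/2}$ is also valid and confirms the stated bound with room to spare.
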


\begin{proof}{\textbf{Proof of  Lemma \ref{lem: hitting_time_prob}}:}Let $B(t)$ denote a standard Brownian motion with zero drift and volatility one. Notice that the running average process $\{n(\bar X_j(n)-\mu_j)/\sigma_j: n=n_0,n_0+1,\dots\}$ has the same joint distribution with the discrete-time process $\{B(t): t=n_0,n_0+1, \dots\}$. Then, 
 \begin{eqnarray}\label{eqn: proof_lemma3_1}
        \notag \Pr\left\{\min_{n\ge n_0} \bar X_j(n) > \mu_j-\delta_0\right\}  
        \notag &=&  \Pr\left\{\forall\,n_0\leq n<\infty, \bar X_j(n) > \mu_j-\delta_0 \right\}\\
         \notag &=&  \Pr\left\{\forall\,n_0\leq n<\infty, n(\bar X_j(n)-\mu_j)/\sigma_j > -n\delta_0/\sigma_j\right\}\\
        \notag &=& \Pr\left\{\forall\,n_0\leq n<\infty, B(n) > -n\delta_0/\sigma_j\right\}\\
        &\geq& \Pr\left\{\forall\,n_0\leq t<\infty, B(t) > -t\delta_0/\sigma_j\right\}.
    \end{eqnarray}
   By the time-inversion property of standard Brownian motion $\{B(t):t\geq 0\}$, the new stochastic process $\{\tilde{B}(t): t\geq 0\}$  defined as $\tilde{B}(t):=tB(1/t)$ for $t>0$ and $\tilde{B}(0)=0$ for $t=0$ is also a standard Brownian motion. Therefore, we may further derive Equation \eqref{eqn: proof_lemma3_1} as
   \begin{eqnarray}\label{eqn: proof_lemma3_2}
        \notag \Pr\left\{\,\min_{n\ge n_0} \bar X_j(n) > \mu_j-\delta_0\right\}  
        \notag &\geq& \Pr\left\{\,\forall\,n_0\leq t<\infty, B(t) > -t\delta_0/\sigma_j\right\}\\
        \notag &=& \Pr\left\{\,\forall\,n_0\leq t<\infty, s=1/t, sB(1/s) > -\delta_0/\sigma_j\right\}\\
        \notag &= & \Pr\left\{\,\forall\,0< s\leq 1/n_0, \tilde{B}(s) > -\delta_0/\sigma_j\right\}\\
        \notag &=& \Pr\left\{\inf_{0\leq s\leq 1/n_0}  \tilde{B}(s) >-\delta_0\sigma_j\right\}\\
        &=& \Pr\left\{\sup_{0\leq s\leq 1/n_0}  \tilde{B}(s) \leq \delta_0/\sigma_j\right\},
    \end{eqnarray}
    where the last equality holds from the space symmetry of standard Brownian motion. Furthermore, considering the reflection principle of standard Brownian motion, we see that
    \begin{eqnarray}\label{eqn: proof_lemma3_3}
        \Pr\left\{\sup_{0\leq s\leq 1/n_0}  \tilde{B}(s) \geq \delta_0/\sigma_j\right\} = 2\Pr\{\tilde{B}(1/n_0) \geq \delta_0/\sigma_j\} = 2\left(1-\Phi\left(\frac{\sqrt{n_0}\delta_0}{\sigma_j}\right)\right). \quad \quad
    \end{eqnarray}
    Plugging Equation \eqref{eqn: proof_lemma3_3} into Equation \eqref{eqn: proof_lemma3_2} results in 
    \begin{eqnarray}\label{eqn: proof_min}
        \Pr\left\{\,\min_{n\ge n_0} \bar X_j(n) > \mu_j-\delta_0\right\}\geq 1-2\left(1-\Phi\left(\frac{\sqrt{n_0}\delta_0}{\sigma_j}\right)\right)\geq 1-2\left(1-\Phi\left(\frac{\sqrt{n_0}\delta_0}{\bar\sigma}\right)\right), \quad \quad
    \end{eqnarray}
where the the last inequality arises from Assumption \ref{assu: guassian}.
Then, by Lemma  \ref{lem: normal_cdf}, it is straightforward to draw the desired conclusion.
    \hfill \Halmos
    \end{proof}

    \subsection{Proof of Theorem \ref{thm: consistency}}
    \label{subsec: proof_consistency}

To prove the theorem, we first prepare the following lemma.
\begin{lemma}[Lemma 6 of \citealt{itemLi2024}]
    \label{lem: bct_mean_convergence}
    For any positive constant $ b > 0$, there exist positive constants $\beta$ and $\kappa$ such that $C\left(b; n_0\right) - n_0 \le \beta e^{-\kappa n_0}$, where $\kappa= \frac{b^2}{2}$ and $\beta=\left(1-e^{-\kappa}\right)^{-1}$.
    \end{lemma}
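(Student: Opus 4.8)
The plan is to read $C(b;n_0)-n_0$ as the expected \emph{overshoot} of a first-passage time and to bound it by a single geometric series built from the one-dimensional Gaussian tail; pleasantly, this crude bound already reproduces the exact constants $\kappa$ and $\beta$ claimed. Recall from Lemma~\ref{lem: boundary-crossing} that $C(b;n_0)=\mathrm{E}[N]$ with $N:=\inf\{n\in[n_0,\infty):\bar Z(n)\le b\}$, where $\{\bar Z(n)\}$ is the running average of i.i.d.\ standard normals; since $N\geq n_0$ always, the quantity to control is $\mathrm{E}[N-n_0]$.

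First I would write the overshoot as a tail sum. Because $N-n_0$ is a nonnegative integer-valued random variable,
$$\mathrm{E}[N-n_0]=\sum_{n=n_0}^{\infty}\Pr\{N>n\}.$$
The crucial (and essentially only nonroutine) step is to discard the path constraint: the event $\{N>n\}$ forces $\bar Z(j)>b$ for \emph{every} $j\in\{n_0,\dots,n\}$, so it is contained in the single-time event $\{\bar Z(n)>b\}$. Hence $\Pr\{N>n\}\le\Pr\{\bar Z(n)>b\}$, trading the intractable first-passage probability for a tractable marginal one.

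Next I would evaluate the marginal tail. Since $\bar Z(n)\sim \mathrm{Normal}(0,1/n)$, we have $\Pr\{\bar Z(n)>b\}=1-\Phi(\sqrt{n}\,b)$, and the Gaussian tail bound of Lemma~\ref{lem: normal_cdf} gives $1-\Phi(\sqrt{n}\,b)\le\exp(-nb^2/2)=e^{-\kappa n}$ with $\kappa=b^2/2$. Summing the resulting geometric series,
$$\mathrm{E}[N-n_0]\le\sum_{n=n_0}^{\infty}e^{-\kappa n}=\frac{e^{-\kappa n_0}}{1-e^{-\kappa}}=\beta e^{-\kappa n_0},\qquad \beta:=(1-e^{-\kappa})^{-1},$$
which is exactly the claimed inequality $C(b;n_0)-n_0\le\beta e^{-\kappa n_0}$.

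The computation is short, so the only place needing care is the containment $\{N>n\}\subseteq\{\bar Z(n)>b\}$, which is where all the path dependence is thrown away; I would emphasize that this one-sided relaxation is harmless precisely because we seek only an upper bound, and note the pleasant coincidence that the marginal estimate is already sharp enough to yield the stated constants $\kappa=b^2/2$ and $\beta=(1-e^{-\kappa})^{-1}$ with no further slack. A minor point worth recording is that finiteness of the geometric sum simultaneously certifies $\mathrm{E}[N]<\infty$, hence $N<\infty$ almost surely, consistent with Lemma~\ref{lem: boundary-crossing}.
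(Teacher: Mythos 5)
Your proof is correct: the tail-sum identity for $\mathrm{E}[N-n_0]$, the relaxation $\{N>n\}\subseteq\{\bar Z(n)>b\}$, the Gaussian tail bound of Lemma~\ref{lem: normal_cdf}, and the geometric series are all valid and yield exactly the stated constants $\kappa=b^2/2$ and $\beta=(1-e^{-\kappa})^{-1}$. Note that the paper does not prove this lemma itself but imports it from Lemma~6 of \citealt{itemLi2024}; your argument is the standard one that produces precisely these constants, so it is essentially the same approach as the cited proof.
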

    
    \begin{proof}{\textbf{Proof of Theorem \ref{thm: consistency}}:}
  It follows from Lemma \ref{lem: bct_mean_convergence} and the choice of $n_g$ that
        \begin{eqnarray}
        \label{eq: pac_1}
           n_g=  m\left[C\left(\frac{\delta}{2\bar \sigma}; n_0\right)-n_0\right]  
            \leq  m\beta \exp\left(-\frac{\delta^2}{8\bar\sigma^2} n_0\right),
        \end{eqnarray}
where $\beta=\left(1-e^{-\frac{\delta^2}{8\bar\sigma^2}}\right)^{-1}$. By the choice of $n_0$, namely, $n_0=\frac{8 \bar \sigma^2}{\delta^2}\log\frac{2m}{\delta}$, we further derive Equation \eqref{eq: pac_1} as
\begin{eqnarray}
        \label{eq: pac_2}
           n_g \leq  m\beta \exp\left(-\frac{\delta^2}{8\bar\sigma^2} n_0\right)=\frac{\beta\alpha}{2}.
        \end{eqnarray}
Next, notice that $e^{-x}\leq 1/(1+x)$ for any $x>0$, so we have,
 \begin{eqnarray}\label{eq: pac_3}
     \beta=\left(1-e^{-\frac{\delta^2}{8\bar\sigma^2}}\right)^{-1}\leq \left(1- \frac{1}{\frac{\delta^2}{8\bar\sigma^2}+1}\right)^{-1} = \frac{\frac{\delta^2}{8\bar\sigma^2}+1}{\frac{\delta^2}{8\bar\sigma^2}} = 1+\frac{8\bar\sigma^2}{\delta^2}.
 \end{eqnarray}
 Combining Equation \eqref{eq: pac_2} and Equation \eqref{eq: pac_3} leads to the conclusion.
         \hfill \Halmos\end{proof}

\subsection{Proof of Lemma \ref{lem: negligible}}\label{ec: lastexit}

\begin{proof}{Proof.}
    In this proof, we first demonstrate that $L_{\mathcal{G}}^j(\delta/2;n_0)=\sup \left\{n \in [n_0, \infty) : |\bar X_j(n)  - \mu_{j}| \geq \delta/2\right\} $ is finite almost surely for any alternative $j\in\mathcal{G}$. For any sufficiently large  integer $M>n_0$, we have
    \begin{eqnarray}\label{eqn: proof_lastexit_1}
       \notag \Pr\left\{L_{\mathcal{G}}^j(\delta/2;n_0)< M\right\}
     \notag   &=& \Pr\left\{\forall\, n\geq M, \, |\bar{X}_j(n)-\mu_j|\leq \delta/2\right\}\\
   \notag     &=& \Pr\left\{\min_{n\geq M} \bar{X}_j(n)\geq \mu_j- \delta/2,\max_{n\geq M} \bar{X}_j(n)\leq \mu_j+\delta/2\right\}\\
       &\geq& \Pr\left\{\min_{n\geq M} \bar{X}_j(n)\geq \mu_j- \delta/2\right\}-\Pr\left\{\max_{n\geq M} \bar{X}_j(n)\geq \mu_j+\delta/2\right\}, \quad \quad \quad \quad \quad
    \end{eqnarray}
where the inequality holds because $\Pr\{A\cap B\}=\Pr\{A\}-\Pr\{A\cap B^c\}\geq \Pr\{A\}-\Pr\{ B^c\}$ for any two events $A$ and $B$.

Notice that the second term in Equation \eqref{eqn: proof_lastexit_1} satisfies
\begin{eqnarray}\label{eqn: proof_lastexit_3}
   \notag \Pr\left\{\max_{n\geq M} \bar{X}_j(n)\geq \mu_j+\delta/2\right\}
    \notag &=& 1-\Pr\left\{\max_{n\geq M} \bar{X}_j(n)\leq \mu_j+\delta/2\right\}\\
    \notag &=& 1-\Pr\left\{\max_{n\geq M}\, (\bar{X}_j(n)-\mu_j)\leq \delta/2\right\}\\
    \notag &=& 1-\Pr\left\{\min_{n\geq M}\, (\bar{X}_j(n)-\mu_j)\geq -\delta/2\right\}\\
  &=& 1-\Pr\left\{\min_{n\geq M}\, \bar{X}_j(n)\geq \mu_j-\delta/2\right\},
\end{eqnarray}
where the third equality holds because $\{\bar{X}_j(n)-\mu_j: n\geq 1\}$ and $\{-(\bar{X}_j(n)-\mu_j): n\geq 1\}$ have the same joint probability density function. Plugging Equation \eqref{eqn: proof_lastexit_3} into \eqref{eqn: proof_lastexit_1} leads to 
\begin{eqnarray*}
    \Pr\left\{L_{\mathcal{G}}^j(\delta/2;n_0)< M\right\}\geq 2\Pr\left\{\min_{n\geq M}\, \bar{X}_j(n)\geq \mu_j-\delta/2\right\}-1.
\end{eqnarray*}
    Then, similar to Equation \eqref{eqn: proof_min} in the proof of Lemma \ref{lem: hitting_time_prob}, we have
\begin{eqnarray*}
        \Pr\left\{\,\min_{n\geq M} \bar X_j(n) > \mu_j-\delta/2\right\}\geq 1-2\left(1-\Phi\left(\frac{\sqrt{M}\delta}{2\sigma_i}\right)\right),
    \end{eqnarray*}
where $\Phi(\cdot)$ denotes the cumulative distribution function of a standard normal distribution. Therefore, 
\begin{eqnarray*}
    \Pr\left\{L_{\mathcal{G}}^j(\delta/2;n_0)< M\right\}\geq 1-4\left(1-\Phi\left(\frac{\sqrt{M}\delta}{2\sigma_i}\right)\right)
\end{eqnarray*}
Letting $M\to\infty$ in the equation above, we may conclude that
\begin{eqnarray}\label{eqn: proof_lastexit_4}
    \Pr\left\{L_{\mathcal{G}}^j(\delta/2;n_0)<\infty\right\}=1.
\end{eqnarray}

Now we move to demonstrate the desired conclusion. Observe that
 \begin{eqnarray*}
    &&\limsup_{k \rightarrow \infty} \frac{1}{k} \sum\limits_{j\in\mathcal{G}'} \left(L_{\mathcal{G}'}^j(\delta/2;n_0)-\argmin_{n \in [n_0, \infty)}\bar{X}_j(n)\right)^+\\
    & \leq & \limsup_{k \rightarrow \infty} \frac{1}{k} \sum\limits_{j\in\mathcal{G}} \left(L_{\mathcal{G}}^j(\delta/2;n_0)-\argmin_{n \in [n_0, \infty)}\bar{X}_j(n)\right)^+\quad (\mbox{because } \mathcal{G}'\subseteq\mathcal{G})\\
    & \leq &  \limsup_{k \rightarrow \infty} \frac{1}{k} \sum\limits_{j\in\mathcal{G}} L_{\mathcal{G}}^j(\delta/2;n_0)+\limsup_{k \rightarrow \infty} \frac{1}{k} \sum\limits_{j\in\mathcal{G}} \argmin_{n \in [n_0, \infty)}\bar{X}_j(n)\\
    &=&0, \mbox{ almost surely.}
\end{eqnarray*} 
The last equality holds from Equation \eqref{eqn: proof_lastexit_4} and Lemma \ref{lem: argmin}. The proof is completed. \hfill\Halmos
\end{proof}

\subsection{Proof of Theorem \ref{thm: nonnormalPGSR}}
\label{ec: PGSR}
\begin{proof}{Proof.}
Given the similarity between the lower bounds for PGS$_m$ and PGSR$_m$ as listed in Equations \eqref{eqn: bound_PGS2} and \eqref{eqn: bound_PGSR},  we may follow the proofs of Theorem \ref{thm: nonnormalPGS} and Theorem \ref{thm: consistency} to complete this proof, by specifically dealing with the additional term $\sum_{j\in\mathcal{G}'} \left(L_{\mathcal{G}'}^j(\delta/2;n_0)-\argmin_{n \in [n_0, \infty)}\bar{X}_j(n)\right)^+$ in Equation \eqref{eqn: bound_PGSR}.

When the total evaluation budget $B=(n_0 + n_g) k$, the PGSR$_m$ of the EFG-$m$ algorithm  in Equation \eqref{eqn: bound_PGSR} can be rewritten as
    \begin{eqnarray}\label{eqn: proof3_PGSR}
      \notag \mbox{PGSR}_m 
         &\geq& \Pr\left\{  \frac{n_g}{m} + n_0\geq \frac{1}{k}\left[\sum\limits_{j\in\mathcal{G}'} \left(L_{\mathcal{G}'}^j(\delta/2;n_0)-\underset{n \in [n_0, \infty)}{\argmin}\bar{X}_j(n)\right)^+ \right.\right. \\
        && \qquad \qquad\qquad\qquad\qquad\left.\left. +\sum\limits_{l\in\mathcal{G}}\underset{n \in [n_0, \infty)}{\argmin}\bar{X}_l(n)+\sum\limits_{i\in \mathcal{K}\setminus\mathcal{G}}N_i(\bar{X}^*_{(m)};n_0)\right]\right\}. \qquad
        \end{eqnarray}
Inspired by Equation \eqref{eqn: proof1_delta_lambda}, we can arbitrarily fix a constant $\lambda\in(0,n_g/m+n_0-C(\delta/\bar{\sigma};n_0))$ and accordingly set $\delta_\lambda\in(0,\delta)$ such that
\begin{eqnarray}\label{eqn: proof3_delta_lambda}
       C\left(\frac{\delta-\delta_\lambda}{\bar\sigma};n_0\right) = n_g/m+n_0-\lambda >0.
   \end{eqnarray}
   Meanwhile, for the ease of presentation, we define the events
   \begin{eqnarray*}
           \Omega_{L} =\ \left\{\frac{1}{k}\sum\limits_{j\in\mathcal{G}'} \left(L_{\mathcal{G}'}^j(\delta/2;n_0)-\underset{n \in [n_0, \infty)}{\argmin}\,\bar{X}_j(n)\right)^+\leq \frac{\lambda}{3}\right\}, \,  \Omega_{\argmin} =\ \left\{\frac{1}{k}\sum\limits_{l\in\mathcal{G}}\underset{n \in [n_0, \infty)}{\argmin}\,\bar{X}_l(n)\leq \frac{\lambda}{3}\right\},
   \end{eqnarray*}
   and for the chosen $\delta_{\gamma}$, 
   \begin{eqnarray*}
    \Omega_{N}= \left\{ \frac{1}{k}\sum_{i\in\mathcal{K}\setminus\mathcal{G}}  N_i \left(X^*_{(m)};n_0\right)\leq C\left(\frac{\delta-\delta_{\gamma}}{\bar\sigma};n_0\right)+\frac{\lambda}{3}\right\}.
   \end{eqnarray*}
   Then, the PGS$_m$ lower bound in Equation \eqref{eqn: proof3_PGSR} can be  further derived as
   \begin{eqnarray}\label{eqn: proof3_PGSR_1}
         \notag  \mbox{PGSR}_m  &\geq& \Pr\Bigg\{ \Bigg\{ \frac{n_g}{m} + n_0\geq \frac{1}{k}\Bigg[\sum\limits_{j\in\mathcal{G}'} \left(L_{\mathcal{G}'}^j(\delta/2;n_0)-\underset{n \in [n_0, \infty)}{\argmin}\bar{X}_j(n)\right)^+ \\
         \notag  && \qquad \qquad\qquad\qquad +\sum\limits_{l\in\mathcal{G}}\underset{n \in [n_0, \infty)}{\argmin}\bar{X}_l(n)+\sum\limits_{i\in \mathcal{K}\setminus\mathcal{G}}N_i(\bar{X}^*_{(m)};n_0)\Bigg]\Bigg\}  \cap\, \Omega_L\cap \,\Omega_{\argmin}\cap\, \Omega_N\Bigg\}\\
       \notag &\geq& \Pr\left\{ \left\{ \frac{n_g}{m} + n_0\geq C\left(\frac{\delta-\delta_{\gamma}}{\bar\sigma};n_0\right)+\lambda\right\}\cap\, \Omega_L\cap \,\Omega_{\argmin}\cap\, \Omega_N\right\}\\
      \notag &=& \Pr\left\{ \Omega_L\cap \,\Omega_{\argmin}\cap\, \Omega_N\right\} \\
      \notag &\geq & \Pr\left\{\Omega_N\right\}
      -\Pr\left\{\Omega_{\argmin}^c\right\}
      -\Pr\left\{\Omega_L^c\right\}\\
      &\geq & \Pr\left\{ \Omega_N \mid \{\bar X^*_{(m)}\geq \mu_m-\delta_{\gamma}\}\right\}\Pr\left\{\bar X^*_{(m)}\geq \mu_m-\delta_{\gamma}\right\} -\Pr\left\{\Omega_{\argmin}^c\right\}
      -\Pr\left\{\Omega_L^c\right\},
     \end{eqnarray}
   where the second inequality holds because of the definitions of the events $ \Omega_{L}$, $ \Omega_{N}$ and $ \Omega_{\argmin}$, the first equality holds due to Equation \eqref{eqn: proof3_delta_lambda}, and the third inequality holds because of the Bonferroni's inequality. 
   Notice that Equation \eqref{eqn: proof_PGS_3} gives the limiting behavior of the first two probability terms in \eqref{eqn: proof3_PGSR_1}. More specifically, we have
     \begin{eqnarray*}
         \liminf_{k\to\infty}\mbox{PGSR}_m\geq \Pr\left\{\bar X^*_{(m)}\geq \mu_m-\delta_{\gamma}\right\}-\limsup_{k\to\infty} \Pr\left\{\Omega_L^c\right\}.
     \end{eqnarray*}  
Then, it readily follows from Lemma \ref{lem: negligible} that $\limsup_{k\to\infty} \Pr\left\{\Omega_L^c\right\}=0$.
As the inequality above holds for any $\lambda\in(0,n_g/m+n_0-C(\delta/\bar{\sigma};n_0))$, by letting $\lambda\to 0$, we further have $\liminf_{k\to\infty}\mbox{PGSR}_m\geq \Pr\left\{\bar X^*_{(m)}\geq \mu_m-\delta_{0}\right\}$. The sample optimality is now proved.
     
     For the consistency, notice that PGSR$_m$ shares the same limiting lower bound as the PGS$_m$. Then, following exactly the same logic in the proof of Theorem \ref{thm: consistency}, it is straightforward to show that the EFG-$m$ algorithm is also consistent in terms of PGSR$_m$. The proof is completed.
   \hfill\Halmos
   \end{proof}
         \label{subsec: properties_pcs}
             \label{cor: nonnormalPCS}
         
    \newpage

\section{Supplements to Section \ref{sec: enhancements}}


\subsection{Further Discussion on Top-$M$ Greedy Selection}
\label{subsec: additional_topM}

We extend the discussion from Section \ref{subsec: top_M} to analyze the inefficiency of the top-\(m\) greedy phase in the general case where the observations of all alternatives are subject to random noise and illustrate the benefits of top-\(M\) greedy selection. In the top-\(m\) greedy phase, recall from Section \ref{sec: top-m-boundary-crossing} that, from a sample-path viewpoint, \( \bar{X}^{**} = \min_{j\in\mathcal{S}^*} \bar{X}_j^* \), where \( \bar{X}_j^* = \min_{n \in [n_0, \infty)} \bar{X}_j(n) \) denotes the minimum running average for each alternative \( j \in \mathcal{S}^* \), serves as a common boundary for all inferior alternatives. A correct screening event is ensured \emph{once} the sample means of all inferior alternatives fall below this boundary. Therefore, \emph{any additional sampling of the top-\(m\) alternatives after they reach the boundary $\bar{X}^{**}$ is unnecessary} and leads to inefficiency.  
To illustrate this inefficiency in the general case, we visualize the top-2 greedy selection process for an example problem with 4 alternatives in the left panel of Figure \ref{fig: top_3_sampling_general}. In this process, \( \bar{X}_2^* \) acts as the boundary for the inferior alternatives (3 and 4). After the 3rd round of the top-2 greedy phase, alternative 2 reaches the boundary \( \bar{X}_2^* \). Ideally, after this round, no additional observations from alternatives 1 and 2 are required. However, because alternative 1 remains in the current top-2 set in subsequent rounds, it continues to be selected and evaluated unnecessarily until alternatives 3 and 4 fall below the boundary at round 9. These redundant evaluations of alternative 1 do not contribute to the boundary-crossing event with respect to \( \bar{X}_2^* \).

\begin{figure}[htbp]
    \begin{center}
        \includegraphics[width=0.99\textwidth]{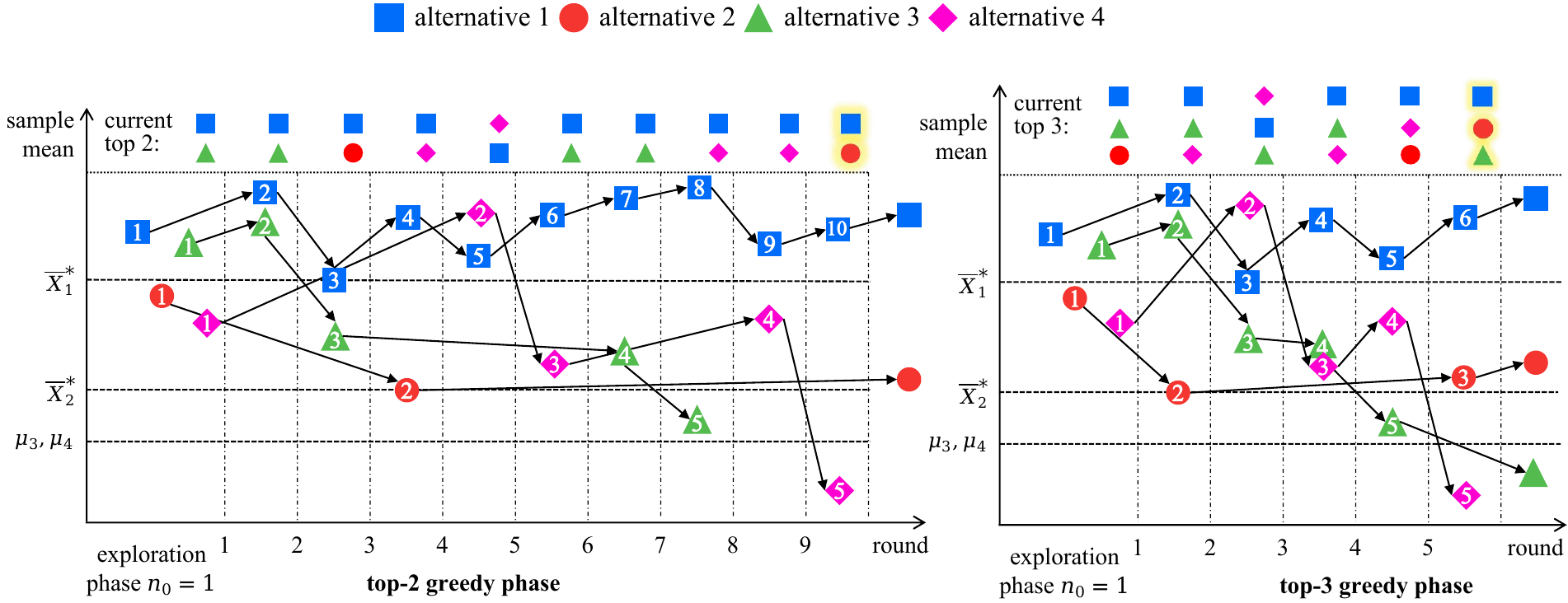}
    \end{center}
    \caption[]{A comparison between top-2 and top-3 greedy selection processes of an example problem with $4$ alternatives. Numbers in the markers represent the total sample sizes.}
    \label{fig: top_3_sampling_general}
\end{figure}

The top-\(M\) greedy selection mitigates this inefficiency by accelerating the entire boundary-crossing process, effectively reducing the number of rounds required for all inferior alternatives to fall below the boundary. For example, in the top-3 greedy selection process shown in the right panel of Figure \ref{fig: top_3_sampling_general}, it takes only 2 rounds for the top-3 greedy phase to reach \( \bar{X}_2^* \), and alternatives 3 and 4 fall below the boundary after round 5. Compared to the top-2 greedy selection process, this saves 4 rounds and eliminates 4 unnecessary observations from alternative 1.  Furthermore, it is intuitive to see that this benefit will become more and more evident as the total number of inferior alternatives increases.

\subsection{The EFG-$M$$^+$ Algorithm}
\label{subsec: EFG-$M$Plus}
As introduced in Section \ref{subsec: adaptive_explore}, to further enhance the EFG-$M$ algorithm, we add a seeding phase before the exploration phase and obtain the EFG-$M^+$ algorithm.  Specifically, the EFG-$M^+$ algorithm consists of three phases: the seeding phase, the exploration phase, and the top-$M$ greedy phase. In the seeding phase, the algorithm evaluates each alternative \( n_{sd} \) times to obtain \( n_{sd} \) observations and then computes its sample mean; based on the sample information, the algorithm then ranks the alternatives in descending order and divides the ranked alternatives into $G$ disjoint groups so that group 1 includes the fewest alternatives, group 2 doubles group 1, group 3 doubles group 2, and so on so force. Subsequently, in the exploration phase, the algorithm equally allocates the exploration budget $n_0 k$ to the $G$ groups and for each group, it equally allocates the group budget to the alternatives inside the group. By doing so, the algorithm will allocate more budget to the alternatives in top groups and that are likely to have top mean performances, which introduces adaptation to exploration. Then, in the top-$M$ greedy phase, the algorithm selects the current top-$M$ alternatives at each round. Finally, it selects the top-$m$ alternatives after the total evaluation budget is exhausted. See Algorithm \ref{algorithm: topMPlus} for a detailed description of the algorithm.

As discussed in Section \ref{subsec: adaptive_explore}, a key advantage of the seeding approach is that it allows us to establish the sample optimality and consistency. To illustrate this, we consider the case of \(M = m\) and analyze the sample optimality for the PCS\(_m\). Let \( e_i \) denote the allocated number of observations for alternative \( i \) in the exploration phase, and define \( \mathbf{e} = [e_1, \ldots, e_k] \). Using the boundary-crossing perspective introduced in Section \ref{subsec: boundary_crossing_PCS}, we may obtain that the PCS\(_m\) of the EFG-\(M^+\) algorithm satisfies
\begin{eqnarray}
    \label{eq: general_PCS2lbound_Plus}
\mbox{PCS}_m &\geq & \E_{\mathbf{e}} \left[ \Pr \left\{ B-n_0k \geq m \left[\sum_{i\in\mathcal{K}\setminus\mathcal{S}^*} N_i \left(\min_{j\in\mathcal{S}^*} \min_{n \geq e_j} \bar{X}_j(n), e_i\right) + \sum_{j\in\mathcal{S}^*} \argmin_{n\geq e_j} \bar{X}_j(n)-n_0k\right] \Bigm| \mathbf{e} \right\} \right]. \quad \quad \quad 
\end{eqnarray}
To simplify the analysis, we remove the dependence on \( \mathbf{e} \) inside the expectation. In the EFG-\(M^+\) algorithm, \( n^g \) denotes the allocated sample size for alternatives in group \( 1 \leq g \leq G \), and we have $
n^G \leq e_i \leq n^1$ for every $i=1, \cdots, k$.
From a sample-path viewpoint, this implies $
\argmin_{n\geq e_j}\bar X_j(n) \leq \argmin_{n \geq n^1}\bar X_j(n)
$ and  
\[
N_i \left(\min_{j\in\mathcal{S}^*} \min_{n \geq e_j} \bar{X}_j(n), e_i\right) \leq N_i \left(\min_{j\in\mathcal{S}^*} \min_{n \geq e_j} \bar{X}_j(n), n^1\right) \leq N_i \left(\min_{j\in\mathcal{S}^*} \min_{n \geq n^G} \bar{X}_j(n), n^1\right).
\]
Consequently,  the PCS\(_m\) lower bound in Equation \eqref{eq: general_PCS2lbound_Plus} can be further refined as
\begin{eqnarray}
    \label{eq: general_PCS2lbound_Plus2}
\notag \mbox{PCS}_m &\geq &  \Pr \left\{ B-n_0k \geq m \left[\sum_{i\in\mathcal{K}\setminus\mathcal{S}^*} N_i \left(\min_{j\in\mathcal{S}^*} \min_{n \geq n^G} \bar{X}_j(n), n^1\right) + \sum_{j\in\mathcal{S}^*} \argmin_{n\geq n^1} \bar{X}_j(n)-n_0k\right]  \right\}. \quad 
\end{eqnarray}
This lower bound has the same structure as the PCS\(_m\) lower bound for the EFG-\(m\) algorithm in Equation \eqref{eq: general_PCS2lbound}. Therefore, we can follow the proof strategy in Section \ref{sec: sample-optimality} to establish the sample optimality.

Notice that in the EFG-$M^+$ algorithm, the number of groups $G$ is a user-specified parameter. In our experiments, we let $G=\left\lfloor \log_2 (k/m)\right\rfloor$ to ensure that each group includes at least one alternative. Numerical results in Section \ref{subsec: numerical_practical} show that this algorithm performs very well in solving large-scale virtual screening problems.

\RestyleAlgo{ruled}
\LinesNumbered
\SetAlgorithmName{Algorithm}{Algorithm}{Algorithm}
\SetAlgoCaptionLayout{centerline}
\begin{algorithm}[hbtp]
\caption{\textbf{Enhanced Explore-First Top-${M}$ Greedy (EFG-$M$$^{\mathbf+}$) Algorithm}}
    \label{algorithm: topMPlus}
\KwIn{$m$ and $M$, $k$ alternatives and their prompts, the total evaluation budget $B=(n_{sd} + n_0 + n_g) k $, $n_{sd}$, $n_0$, $n_g$, the total number of groups $G$, and the LLM}

\For(\tcp*[f]{{Seeding Phase}}){$i=1$ \KwTo $k$ }{
prompt the LLM $n_{sd}$ times for alternative $i$  to get $n_{sd}$ observations $x_{i1},\ldots,x_{in_{sd}}$\;
set $\bar X^{sd}_i =\frac{1}{n_{sd}}\sum_{j=1}^{n_{sd}} x_{ij}$\;
}
according to $\bar X^{sd}_i$, sort the alternatives in descending order as $\{ (1), (2), \ldots, ( k) \}$\;
    
let $I^r$ denote the group of alternatives for $r=1,\ldots, G$. Set $\Delta = 2^G-1$. Then, let $I^{1} = \{ (1), (2), \ldots, ( \lfloor  k / \Delta \rfloor ) \}$,  $I^{r} = \{ (\lfloor k 2^{r-2} / \Delta \rfloor+1), \ldots, ( \lfloor k 2^{r-1} / \Delta \rfloor) \}$ for $r=2, \ldots, G-1$, and $I^{G} = \{ (\lfloor k 2^{G-2} / \Delta \rfloor+1), \ldots, (k) \}$\;

    \For(\tcp*[f]{{Exploration Phase}}){$r=1$ \KwTo $G$}{
        set the sample size $n^r = \left\lfloor \frac{n_0 (2^G-1)}{G 2^{r-1} } \right\rfloor$\;
        \For{ $i \in I^r$}{
            prompt the LLM $n^r$ times for alternative $i$  to get $n^r$ observations $x_{i1},\ldots,x_{in^r}$\;
            
             set $\bar X_i(n^r)=\frac{1}{n^r}\sum_{j=1}^{n^r} x_{ij}$ and  let $n_i=n^r$\;
        }
    }
\While(\tcp*[f]{{Top-M Greedy Phase}}){$\sum_{i=1}^k n_i < B$ }{
let $\mathcal{S} = \,  \stackrel{1, \dots, M}{ \argmax}_{i \in \{1, \ldots, k\}} \bar X_i(n_i)$\;
  \For{$j \in\mathcal{S}$}{
  prompt the LLM for alternative $j$ to get one observation $x_{j}$; 
  
  update $\bar X_{j}(n_j+1) = \frac{1}{n_j+1}\left[n_j\bar X_{j}(n_j) + x_j\right]$ and let $n_j = n_j+1$\;
  }
}
\KwOut{$\stackrel{1, \dots, m}{ \argmax}_{i \in \{1, \ldots, k\}} \bar X_i(n_i)$}
\end{algorithm}

\subsection{The Greedy-as-Remedy Approach for the SAR Algorithm}
\label{subsec: SAR-$M$Plus}
In this subsection, we evaluate the greedy-as-remedy approach  by using the SAR algorithm in the exploration and comparing it to  EFG-\(M\) and SAR. While SAR may be not sample optimal, its adaptive budget allocation strategy could be more efficient than the equal allocation used in the exploration phase of EFG-\(M\). By incorporating a top-\(M\) greedy phase  after SAR, we introduce a new algorithm, SAR-$M$, and examine whether this combination can improve performance over EFG-\(M\) while achieving sample optimality for SAR.   To assess this, we conduct numerical experiments comparing  SAR-$M$ with EFG-\(M\) using the RM-Normal, RM-LogNormal, and RM-Pareto configurations listed in Table \ref{table: topm_confs} and the same experimental settings as in Section \ref{subsec: numerical_topm_enhancement}. For the EFG-$M$ and SAR-$M$ algorithms we let $M=2m$. The following Figure \ref{figure: greedy_remedy_SAR} plots the PCS\(_m\) curves of the SAR, SAR-\(M\) and EFG-\(M\) algorithms under the three configurations against \( k \).
    \begin{figure}[ht]
        \centering
        \includegraphics[width=0.8 \textwidth]{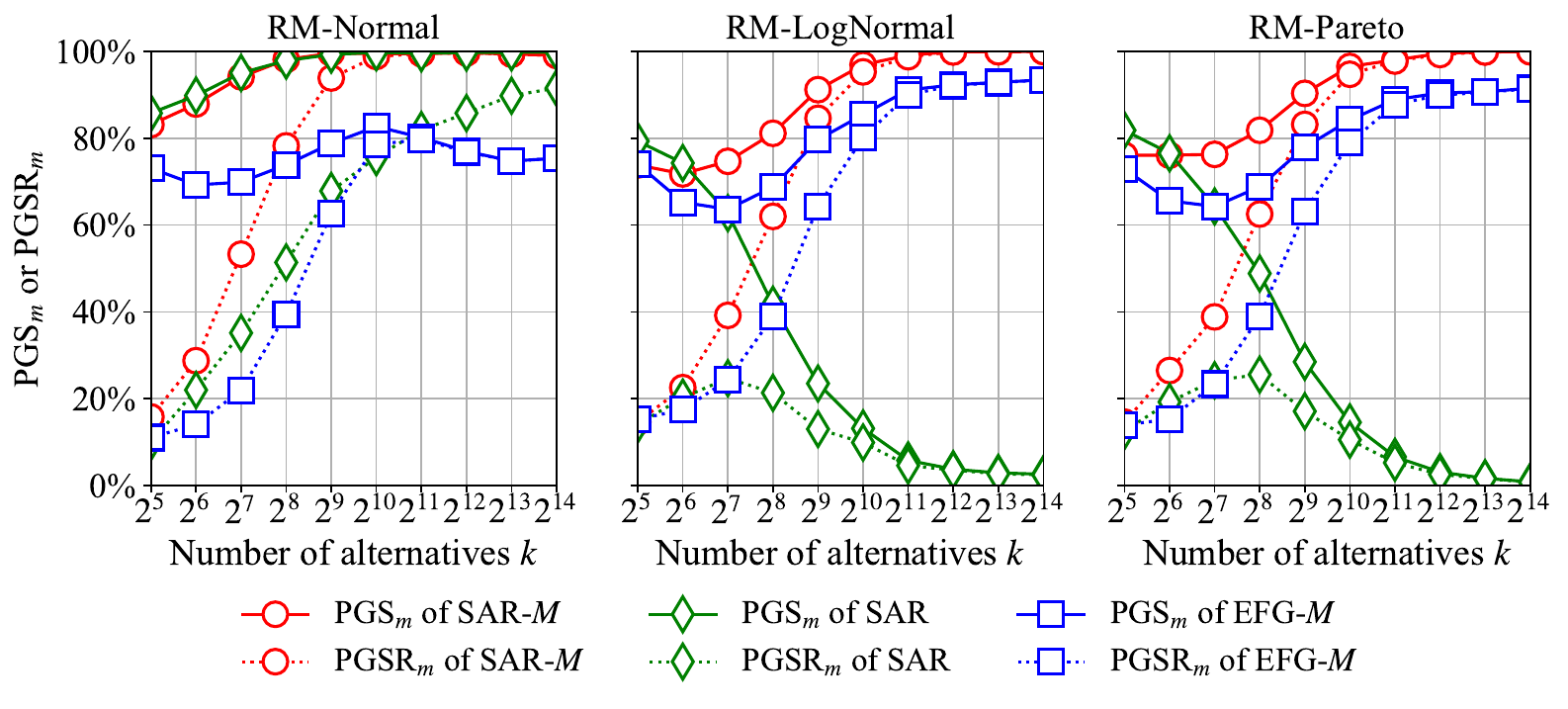}
        \caption{A comparison between the PGS$_m$ of the EFG-$M$, SAR, and SAR-$M$ algorithms}
        \label{figure: greedy_remedy_SAR}
        \end{figure}

From Figure \ref{figure: greedy_remedy_SAR}, we observe several key findings. First, the SAR-\(M\) algorithm consistently outperforms EFG-\(M\) across all configurations, demonstrating that using SAR in the exploration phase leads to a more effective budget allocation than the equal allocation used in EFG-\(M\). Second, while SAR alone is not sample optimal in non-normal configurations, adding a top-\(M\) greedy phase corrects this limitation. The SAR-\(M\) algorithm appears sample optimal and achieves significantly better performance than SAR. Third, under the RM-Normal configuration, where SAR alone may already perform well, SAR-\(M\) maintains at least comparable performance, indicating that the greedy-as-remedy approach does not introduce inefficiencies. Overall, these findings validate the potential of the greedy-as-remedy approach in enhancing the performance of EFG-\(M\) while achieving sample optimality.  

\subsection{Parallelization of the EFG-$M$$^+$ Algorithm}
\label{subsec: additional_parallel}

In this subsection, we present additional numerical results to evaluate the screening accuracy and parallel efficiency of the asynchronous EFG-\(M^{++}\) algorithm, using the non-parallel EFG-\(M^+\) as the benchmark comparator. 
We adopt a client-server framework for parallel prompting, in which a central server assigns evaluation tasks to multiple LLM instances (workers), tracks updates, and maintains sample information of all alternatives.
Our objective is to analyze how the screening accuracy and parallel efficiency of EFG-\(M^{++}\) vary with different numbers of workers \( q \).  
For the experiments, we use the RM-Normal configuration listed in Table \ref{table: topm_confs}, set \( k = 8,192\) and $m=10$, and let the total evaluation budget \( B = 100k \). For both EFG-\(M^+\) and EFG-\(M^{++}\), 20\% of the budget is allocated to the top-\(M\) greedy phase and $M=2m$. To model practical scenarios with varied response times of LLMs, we introduce a time stop distributed as Uniform(0, 1ms) for the generation of each observation. Then, we perform 200 independent macro replications for EFG-\(M^{++}\) to estimate its PGS\(_m\), PGSR\(_m\) and average wall-clock time per replication, while for non-parallel EFG-\(M^+\), we conduct 10 macro replications to estimate its average wall-clock time per replication. Additionally, we run 5,000 replications of EFG-\(M^+\) to estimate its PGS\(_m\) and PGSR$_m$, which are both 0.913. To assess the parallel efficiency of EFG-\(M^{++}\), we use two key metrics: speedup, defined as the ratio of the wall-clock time per replication of EFG-\(M^+\) to that of EFG-\(M^{++}\), and utilization ratio, which is speedup divided by $q$, measuring how efficiently computational resources are utilized. The wall-clock times, PGS\(_m\) and PGSR\(_m\) values for the EFG-$M^{++}$ algorithm, speedup, and utilization ratios are summarized in Table \ref{tab: parallel_efficiency}.

\begin{table}[htbp]
\centering
\begin{tabular}{ccccccc}
\hline

\hline

\hline
 $\#$ of workers $q$ & $\begin{array}{c}\text {PGS}_m \text{ of}\\
    \text {EFG$-M^{++}$}\end{array}$   & $\begin{array}{c}\text {PGSR}_m \text{ of} \\
\text {EFG$-M^{++}$}\end{array}$ & $\begin{array}{c}\text {Wall-clock time} \\
\text {of EFG$-M^{++}$ (s)}\end{array}$ &
$\begin{array}{c}\text {Wall-clock time} \\
\text {of EFG$-M^{+}$ (s)}\end{array}$ & Speed-up          & $\begin{array}{c}\text {Utilization} \\
\text {Ratio}\end{array}$        \\
\hline

\hline

\hline
10                                                                              & 0.945  & 0.945  & 48.559  & 483.618               & 9.959                                                                                                                                              &  99.6\%  \\
20                                                                               & 0.935   & 0.935 & 25.184                & 483.618                                                                                                                                             & 19.203   &  96.0\%\\
30                                                                               & 0.895 & 0.895  & 17.401               & 483.618                                                                                                                                             & 27.793   &92.6\%  \\
40                                                                             & 0.910 & 0.910   & 13.131                & 483.618                                                                                                                                            & 36.831   &  92.1\% \\ 
\hline

\hline

\hline
\end{tabular}
\caption{Performance and parallel efficiency of the EFG$-M^{++}$ algorithm}
\label{tab: parallel_efficiency}
\end{table}

From Table \ref{tab: parallel_efficiency}, we can observe that the asynchronous EFG-\(M^{++}\) algorithm achieves comparable PGS\(_m\) and PGSR$_m$ to the EFG-\(M^+\) algorithm while maintaining high parallel efficiency. Across different values of \( q \), the PGS\(_m\) of EFG-\(M^{++}\) may become slightly lower than that of EFG-\(M^+\); however, the differences remain negligible and do not increase as \( q \) grows. Furthermore, as \( q \) increases, the speedup scales proportionally with it. Notably, the utilization ratio remains consistently above 85\%, regardless of \( q \). These results indicate an efficient parallelization.

\section{Numerical Justification of Theoretical Results and Proposed Algorithms}
\label{sec: ec_num_justification}
In this section, we conduct numerical experiments to validate our theoretical results and evaluate the performance of our proposed algorithms.  We begin in Section \ref{subsec: problem_conf} by introducing the synthetic problem configurations, experimental settings, and several algorithm comparators. In Section \ref{subsec: numerical_topm_sample_optimality}, we verify the sample optimality and consistency of the EFG-\(m\) algorithm and demonstrate the free ranking effect.  We then test the impact of top-\(M\) greedy selection in Section \ref{subsec: numerical_topm_enhancement}. In Section \ref{subsec: comparison}, we compare the performance of our algorithms against the comparators. 

\subsection{Synthetic Problem Configurations, Experimental Settings, and Comparators}
\label{subsec: problem_conf}
\subsubsection{Problem Configurations.} In this section, we use synthetic problem configurations to verify our theoretical results and algorithm enhancements. 
By ``synthetic", we mean that the random evaluation output \( X_i \) from the LLM for alternative \( i \) is generated according to the assumed distribution. In Section \ref{subsec: numerical_LLM}, we present numerical results based on LLM-generated outputs.
We consider two types of problem configurations that are common in the literature: the slippage configuration (SC) and the configuration with randomly generated means (RM). The SC configurations will be used to evaluate the PCS$_m$ while the the RM configurations will be used to evaluate the PGS$_m$ and PGSR$_m$. To formulate the problem configurations, we first set the distribution of alternative 1 (e.g., \(X_1 \sim \text{Normal}(\mu, \sigma^2)\)), and then use mean shifting to set the distributions of the other alternatives. The mean shifting methods for each SC configuration and RM configuration are described by Equations \eqref{eq: SC} and \eqref{eq: topm_random_conf1}, respectively, 
\begin{equation}
    \label{eq: SC}
    X_i \stackrel{d}{=} 
    \begin{cases}
        X_1 &  i=2, \ldots, m, \\
        X_1-\gamma &   i=m+1, \ldots, k;
    \end{cases}
  \end{equation}
  \begin{equation}
    \label{eq: topm_random_conf1}
    X_i \stackrel{d}{=}  X_1 + \delta_i \text{ with }
\delta_i = 
\begin{cases}
    \text{Uniform}\left(\delta, 3\delta\right)  & i=2, \ldots, m,\\ 
    \text{Uniform}\left(0,\delta\right) & i=m+1, \ldots, g,\\
    \text{Uniform}(-1,0) & i=g+1, \ldots, k. \quad \quad \quad \quad 
\end{cases}
\end{equation}
Here, the symbol \(\stackrel{d}{=}\) denotes the identically distributed relationship between two random variables, and $g \geq m$ represents the maximal number of good alternatives having a mean no smaller than $\mu_m - \delta$.  By default, we set \(\gamma=\delta=0.1\), $m=10$, and for simplicity, fix $g=15$.

Notice that despite the convention of assuming normality in designing and analyzing a algorithm, it is also of particular interest to test the algorithm's performance for non-normal observations to improve its practical applicability. Motivated by this, for both SC and RM configurations, we consider three types of distributions to generate the observations in our experiments: normal, log-normal, and Pareto. The normal distribution represents a light-tailed setting, while the log-normal and Pareto distributions are heavy-tailed, posing more challenging scenarios. As a result, we obtain six problem configurations including SC-Normal, SC-LogNormal, SC-Pareto, RM-Normal, RM-LogNormal and RM-Pareto.  Note that the normal, log-normal, and Pareto distributions are all determined by two parameters. We choose the parameter values such that the alternatives of different configurations have similar variances within each class. Details of each problem configuration and the parameter values are summarized in Table \ref{table: topm_confs}.
\begin{table}[htbp]
    \centering
    \begin{tabular}{ccc}
        \hline 
                    
        \hline

        \hline
   Problem Configuration & Distributional Assumption & Parameters \\ 
   \hline 
                    
   \hline

   \hline
   SC-Normal    & $X_1\sim \text{Normal}(\mu, \sigma^2)$   & $\mu=0.1$,  $\sigma=0.6$  \\ \hline
   
   SC-LogNormal & $X_1\sim \text{LogNormal}(\mu, \sigma)$& $\mu=-3.7$,  $\sigma=1.8$ \\ \hline
   SC-Pareto    & $X_1\sim \text{Pareto}(x, a)$          & $x=3.1$,  $a=0.8$         \\ \hline
   RM-Normal    & $X_1\sim \text{Normal}(\mu, \sigma^2)$  & $\mu=0.0$,  $\sigma=1.0$  \\ \hline
   RM-LogNormal & $X_1\sim \text{LogNormal}(\mu, \sigma)$  & $\mu=-2.2$,  $\sigma=1.5$ \\ \hline
   RM-Pareto    & $X_1\sim \text{Pareto}(x, a)$            & $x=2.6$,  $a=0.8$         \\                 \hline 
                    
   \hline

   \hline
    \end{tabular}
    \caption{Synthetic Problem Configurations and Parameters}
    \label{table: topm_confs}
    \end{table}

\subsubsection{Experimental Settings and Comparators.} Unless otherwise specified, in our experiments we consider multiple values of the number of alternatives \( k \), denoted by \( k = 2^l \), with \( l \) increasing from 5 to 14. This allows us to assess each algorithm's performance across different problem sizes.  When solving a specific problem under a problem configuration,  we estimate the PCS\(_m\) (PGS\(_m\) or PGSR\(_m\))  of each algorithm based on 2000 independent macro replications. For our top-$m$ and top-$M$ greedy algorithms, given the sample budget \( B=ck \), we allocate 20\% of the total sample budget to the top-$m$ greedy phase. For the EFG-$M^+$ algorithm, we allocate another 20\% of the sample budget to the seeding phase.

There exist several OSS algorithms in the literature, and all of them could potentially be applied to virtual screening problems. However, these algorithms were primarily designed for small-scale problems, and their performance in large-scale settings has not yet been evaluated or analyzed. 
Despite this, to provide a comparison, we include two well-known OSS algorithms in our experiments—one from the simulation literature and one from the bandit literature:  
\begin{itemize}  
    \item Heuristic 1: The Optimal Computing Budget Allocation (OCBAm) algorithm of \cite{chen2008efficient}.  
    \item Heuristic 2: The Successive Accept-and-Reject (SAR) algorithm of \cite{bubeck2013multiple}.  
\end{itemize}  
Implementation details of these algorithms are included in Section \ref{subsec: additional_algorithms}, where we also discuss some other algorithms not being tested.

\subsection{Sample Optimality and Consistency}
\label{subsec: numerical_topm_sample_optimality}
\subsubsection{Sample Optimality.}
\label{subsubsec: sample_optimality}


We start by testing the sample optimality of the EFG-$m$ algorithm for the PCS$_m$ using the SC configurations. Under each SC configuration, we set the total sample budget \( B = 500k \) for every \( k \). Then, for each problem configuration, we plot the PCS\(_m\) of the EFG-$m$ algorithm against different \( k \) in Figure \ref{fig: topm_optimality}. For comparison, we also include the OCBAm and SAR algorithms in this experiment. From Figure \ref{fig: topm_optimality}, we may obtain the following findings. First, the EFG-\(m\) algorithm is sample optimal across all problem configurations. Taking the SC-Normal configuration as an example, when the number of alternatives \( k \) exceeds \( 2^6 = 64 \), the PCS\(_m\) of EFG-\(m\) stabilizes around 60\%, confirming its sample optimality.   Second, the OCBAm algorithm is clearly not sample optimal in any of the cases considered, as its PCS\(_m\) quickly decreases to zero under all three configurations. Third, the SAR algorithm’s performance is heavily influenced by the distribution type. Under the SC-Normal configuration, its PCS\(_m\) decreases gradually as \( k \) increases. However, under the non-normal configurations, the decline is much steeper, with PCS\(_m\) dropping to zero rapidly, indicating its non-sample-optimality under non-normal distributions.   In contrast, the results demonstrate the robustness of the EFG-\(m\) algorithm  across different distribution types.

    \begin{figure}[ht]
        \centering
        \includegraphics[width=0.8 \textwidth]{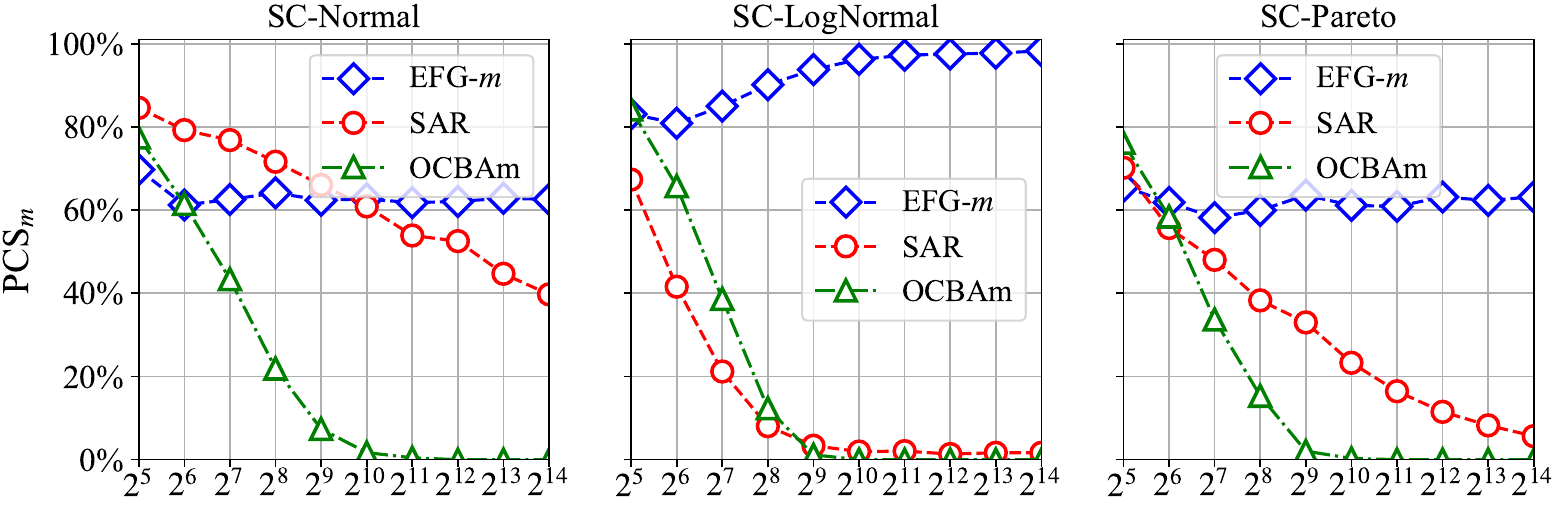}
        \caption{Sample Optimality of the EFG-$m$ algorithm for the PCS$_m$}
        \label{fig: topm_optimality}
        \end{figure}
        
\subsubsection{Consistency.}
\label{subsec: numerical_topm_consistency}
We now proceed to verify the consistency of the EFG-$m$ algorithm. For each SC configuration, we fix the number of alternatives \( k = 2^{14} \); to demonstrate the trend of PCS\(_m\) of the EFG-$m$ algorithm as \( B = ck \) changes, we set \( c \) to increase from 100 to 1000. Then, we plot the PCS\(_m\) of the EFG-$m$ algorithm again different $c$  for each configuration in Figure \ref{fig: topm_consistency}. Figure \ref{fig: topm_consistency} demonstrates the consistency of the EFG-$m$ algorithm. Under the three problem configurations, as \( c \) in the sample budget \( B = ck \) increases, the PCS\(_m\) of the EFG-$m$ algorithm monotonically rises and ultimately converges to 1. From Figure \ref{fig: topm_consistency}, we can also observe a diminishing marginal effect in the PCS\(_m\) growth curve as the evaluation budget increases. Once the PCS\(_m\) surpasses 60\%, the increments in PCS\(_m\) become progressively smaller.

\begin{figure}[ht]
    \centering
    \includegraphics[width= 0.75 \textwidth]{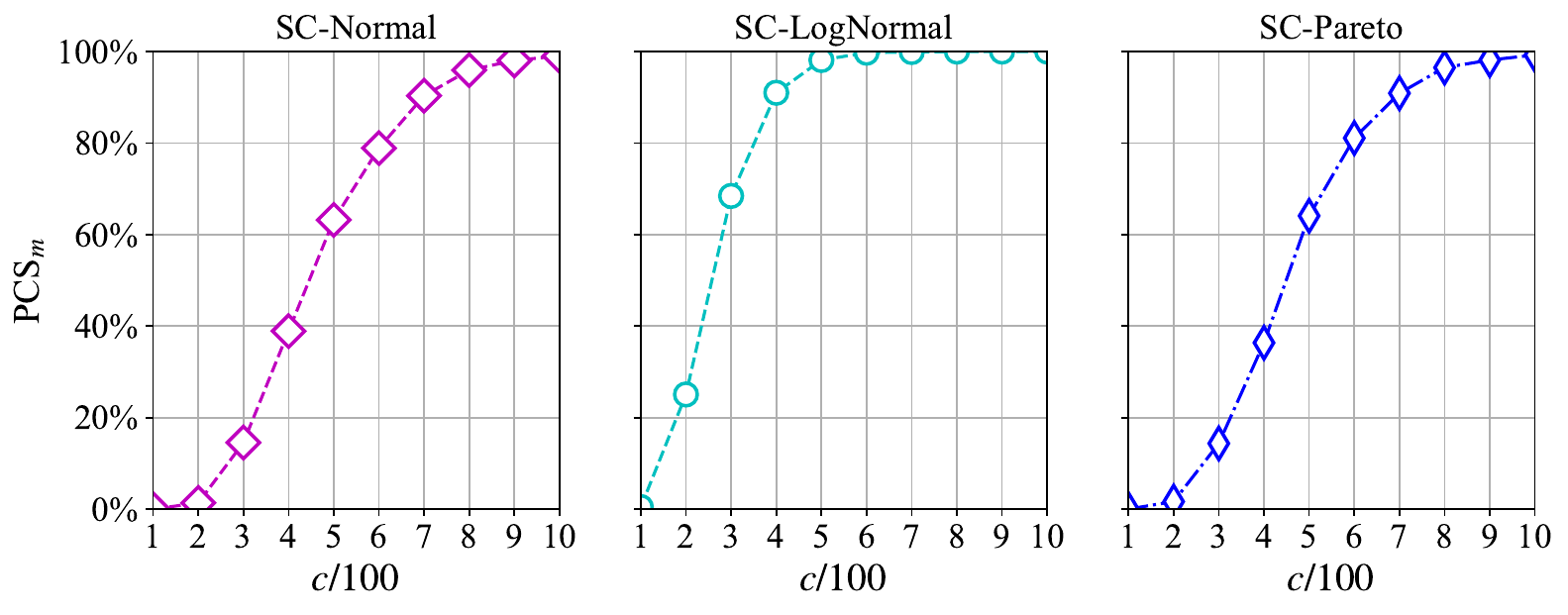}
    \caption{Consistency of the EFG-$m$ algorithm for the PCS$_m$ ($B=ck$)}
            \label{fig: topm_consistency}
    \end{figure}

Next, we use the SC-Normal configuration to show the sample complexity of the EFG-$m$ algorithm regarding various problem parameters. Theorem \ref{thm: consistency} suggests that to maintain the PCS$_m$ of the EFG-$m$ algorithm at least \(1-\alpha\), where $\alpha \in (0, 1)$ is the probability of incorrect screening (PICS$_m$), the order of \(c\) in the total evaluation budget \(B=ck\) should be \(O\left( \frac{\sigma^2}{\gamma^2}\log\frac{m}{\alpha}\right)\). Initially, we focus on the order of the total evaluation budget concerning \(\alpha\). Letting the number of alternatives be \(k=2^{10}\), we fix the other problem parameters and then gradually reduce \(\alpha\) from \(0.2=\frac{2^1}{10}\) to \(\frac{2^{-5}}{10}\). For each value of \(\alpha\), following Theorem \ref{thm: consistency}, we set \(n_0=\frac{c_1 \sigma^2}{\gamma^2}\log\frac{2m}{\alpha}\) and \(n_g =\frac{\alpha}{2} + \frac{4 \alpha \sigma^2}{\gamma^2}\) where \(c_1=5.4\) is a constant independent of \(\alpha\). For each \(\alpha\), we use  30,000 independent macro replications to estimate the EFG-$m$ algorithm's PICS\(_m\). Then, we plot the PICS\(_m\) against different \(\alpha\) in Figure \ref{fig: topm_optimal_alpha}.
    From Figure \ref{fig: topm_optimal_alpha}, it can be seen that as \(\alpha\) gradually decreases, the PICS\(_m\) of the EFG-$m$ algorithm also diminishes; when \(\alpha\) is sufficiently small, the slope of the EFG-$m$ algorithm's PICS\(_m\) is close to -1, demonstrating that it decreases at the same rate as \(\alpha\).  This phenomenon indicates that setting \(c\) as \(O\left(\log \frac{1}{\alpha}\right)\) is sufficient to ensure the EFG-$m$ algorithm's PICS$_m$ remains at most \(\alpha\). We have also investigated the order of the total evaluation budget concerning \(\gamma\), \(\sigma^2\), and \(m\). The numerical results further validate Theorem \ref{thm: consistency} and are included in \ref{subsec: additional_numerical}.

\begin{figure}[ht]
    \centering
    \includegraphics[width=0.35 \textwidth]{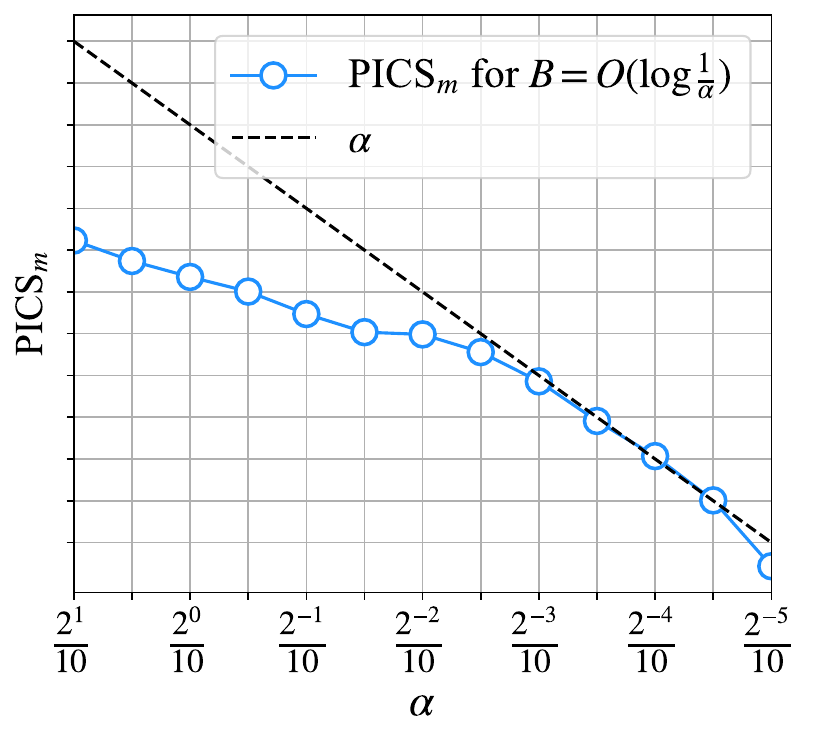}
    \caption{Sample complexity of the EFG-$m$ algorithm regarding $\mathbf{\alpha}$}
    \label{fig: topm_optimal_alpha}
    \end{figure}

\subsection{Free Ranking Effect}
\label{subsec: numerical_topm_ranking}
    
In this subsection, we study the PGS$_m$ and PGSR$_m$ of the EFG-$m$ algorithm under the RM configurations. For each value of \(k\), we let the sample budget \(B = 150k\).  Then, we plot the EFG-$m$ algorithm's PGS\(_m\) and PGSR\(_m\) against different \(k\) for each problem configuration in Figure \ref{fig: topm_ranking}. 
From Figure \ref{fig: topm_ranking}, several findings can be drawn. First, for all three configurations, the EFG-$m$ algorithm achieves the sample optimality for both the  PGS$_m$ and PGSR\(_m\). For instance, under the RM-Pareto configuration, when \(k \geq 2^9 = 512\), the PGS\(_m\) and PGSR$_m$ of the algorithm both stabilize around 80\%. Second, when \(k\) is small, the PGSR\(_m\) of the EFG-$m$ algorithm is smaller than the PGSR$_m$. This is because when \(k\) is small, the evaluation budget \(n_g k\) for the top-$m$ greedy phase is also small, preventing the EFG-$m$ algorithm from having sufficient budget to correct the ranking within the selected subset. However, this issue diminishes as \(k\) increases: the PGSR$_m$  will gradually increase. Notably, for all three configurations, when \(k\) is large, the PGSR\(_m\) of the EFG-$m$ algorithm becomes identical to its PGS\(_m\). This result highlights the EFG-$m$ algorithm’s inherent ``free" ranking ability when solving large-scale virtual screening problems. 

\begin{figure}[ht]
    \centering
    \includegraphics[width=0.8\textwidth]{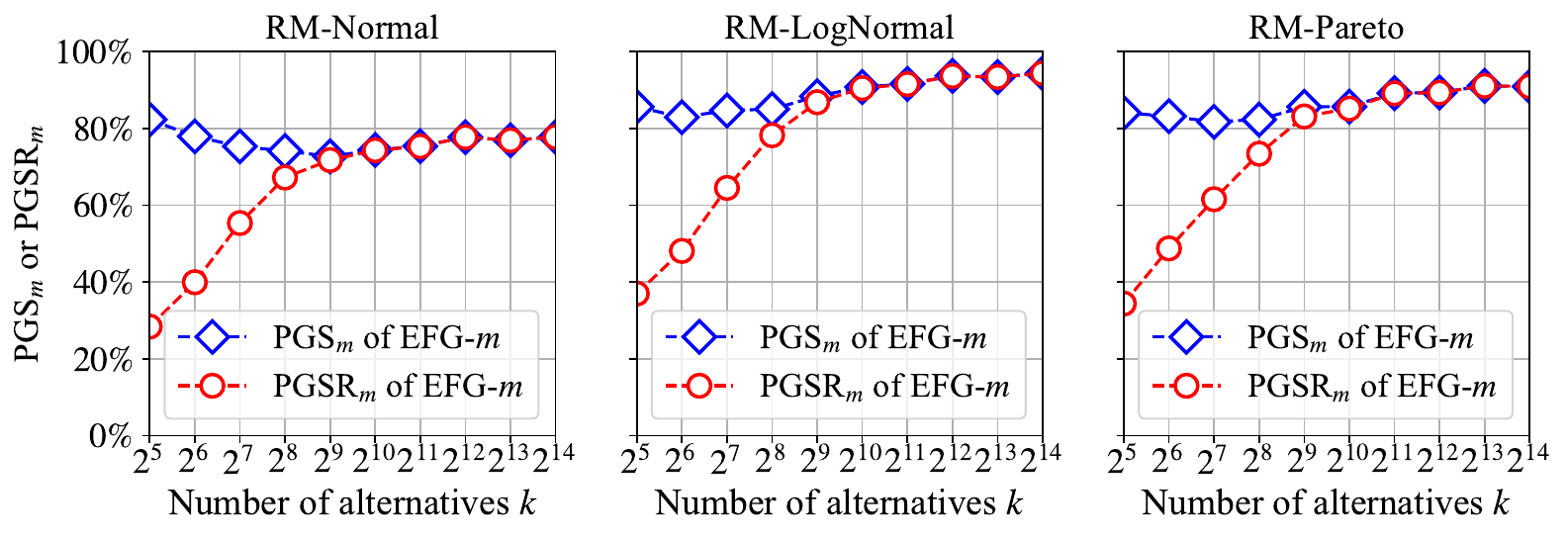}
    \caption{Free ranking effect of the EFG-$m$ algorithm}
    \label{fig: topm_ranking}
    \end{figure}


\subsection{Performance Improvement via Top-$M$ Greedy Selection}
\label{subsec: numerical_topm_enhancement}
This subsection will showcase the performance enhancement of the EFG-$M$ algorithm with a top-$M$ greedy phase, which is introduced in Section \ref{subsec: top_M}, compared to the original EFG-$m$ algorithm. We use the three RM problem configurations and set the total evaluation budget $B=100k$ for each $k$. For the EFG-$M$ algorithm, we set \(M=2m\), which means that in each round of the greedy phase, the EFG-$M$ algorithm will select and sample  the current top-20 alternatives. We show in Section \ref{subsec: additional_select_M} that the performance of  EFG-$M$  is not sensitive to \(M\).  Then,  for each problem configuration, we estimate and plot the PGS$_m$ and PGSR$_m$ of the EFG-$M$ algorithm against different $k$ in Figure \ref{fig: topm_improve_PCS$_m$}. To contrast the performance of the EFG-$M$ algorithm with the EFG-$m$ algorithm, we also include the PGS\(_m\) curves of the EFG-$m$ algorithm in Figure \ref{fig: topm_improve_PCS$_m$}.

From Figure \ref{fig: topm_improve_PCS$_m$},  we observe the following findings. First, similar to the EFG-$m$ algorithm, EFG-$M$ achieves the sample optimality for both the PGS$_m$ and PGSR$_m$. Second, it preserves the free ranking effect. When \(k\) is large, the PGSR\(_m\) of the EFG-$M$ algorithm coincides with its PGS\(_m\). Lastly, and more importantly, the EFG-$M$ algorithm demonstrates a marked improvement in performance over the EFG-$m$ algorithm when \(k\) is relatively large, particularly when \(k \geq 2^{10}\). Notice that in scenarios with smaller \(k\) values, the performance of the EFG-$M$ algorithm can fall short compared to the EFG-$m$ algorithm, likely due to the small evaluation budget \(n_gk\) for the greedy phase at this scale.
\begin{figure}[ht]
    \centering
    \includegraphics[width=0.8 \textwidth]{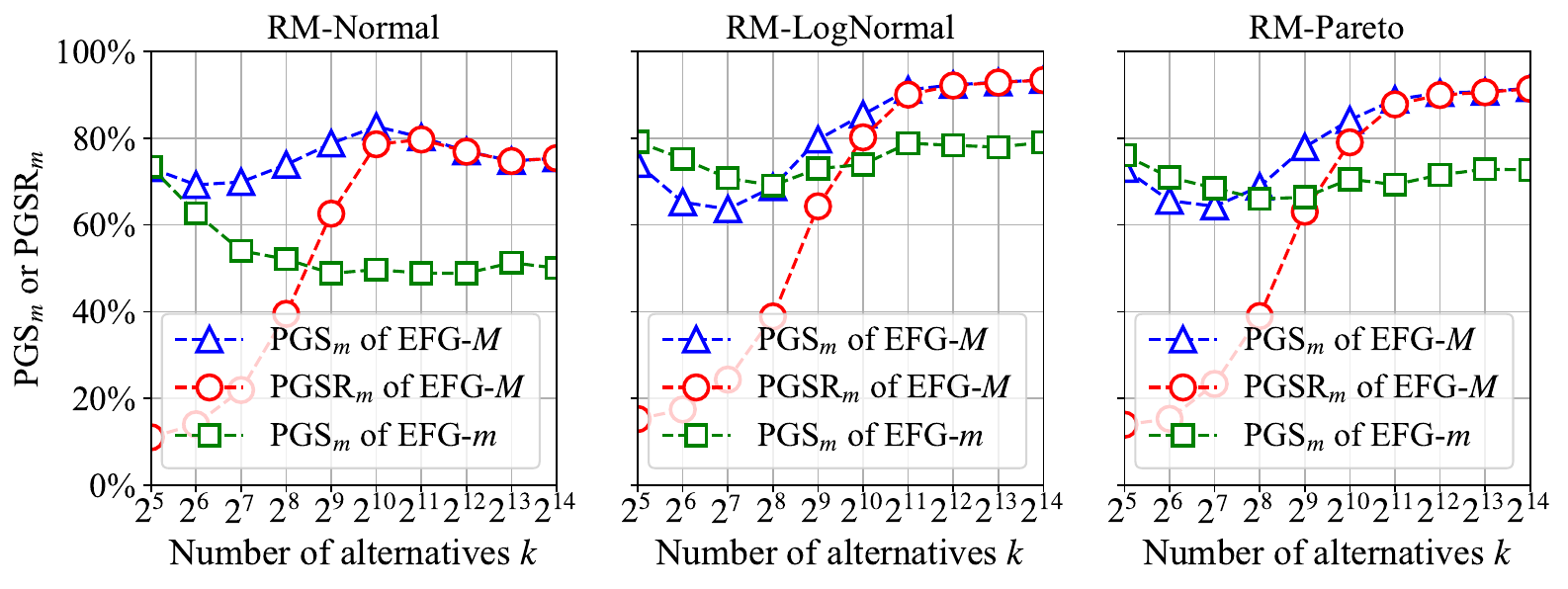}
    \caption{Performance enhancement through top-$M$ greedy selection}
    \label{fig: topm_improve_PCS$_m$}
    \end{figure}

\subsection{Comparison with Existing Algorithms}
\label{subsec: comparison}

In this subsection, we compare the PGS$_m$  and PGSR$_m$ of our seeding-enhanced EFG-$M^+$ algorithm with those of the two heuristic comparators, OCBAm and SAR. We start with a comparison under the synthetic RM configurations with a total evaluation budget $B=100k$ for each $k$.
The PGS$_m$, PGSR$_m$ curves of the three algorithms under each RM configuration in Figure \ref{fig: comparison_PGS_PGSR}  reveal several key observations. First, the EFG-\(M^+\) algorithm, like the EFG-\(m\) and EFG-\(M\) algorithms, is sample optimal for both PGS\(_m\) and PGSR\(_m\) under all configurations and exhibits the free ranking effect.   Second, as in Section \ref{subsubsec: sample_optimality}, the OCBAm algorithm is not sample optimal in any of the cases considered. Interestingly, at a certain point, as PGS\(_m\) and PGSR\(_m\) decrease, they appear to converge, giving the impression of a free ranking effect. However, due to its non-sample-optimality, the effect may have limited practical value.   Third, the SAR algorithm presents a more interesting case. Under the RM-Normal configuration, it performs very well, appearing sample optimal while also obtaining high PGSR$_m$ for large $k$. However, its performance deteriorates significantly under non-normal configurations, where both sample optimality and the ranking effect may vanish.

These results highlight the need for caution when applying heuristic algorithms to large-scale virtual screening problems. As heuristics, they may be either non-sample-optimal or highly sensitive to distribution types. In contrast, the EFG-\(M^+\) algorithm demonstrates consistent performance across different configurations. While the existing heuristics may outperform our algorithms in certain scenarios, our approach is more reliable due to its theoretical foundation.  This reliability may be particularly important in practice.

\begin{figure}[ht]
    \centering
    \includegraphics[width=0.8 \textwidth]{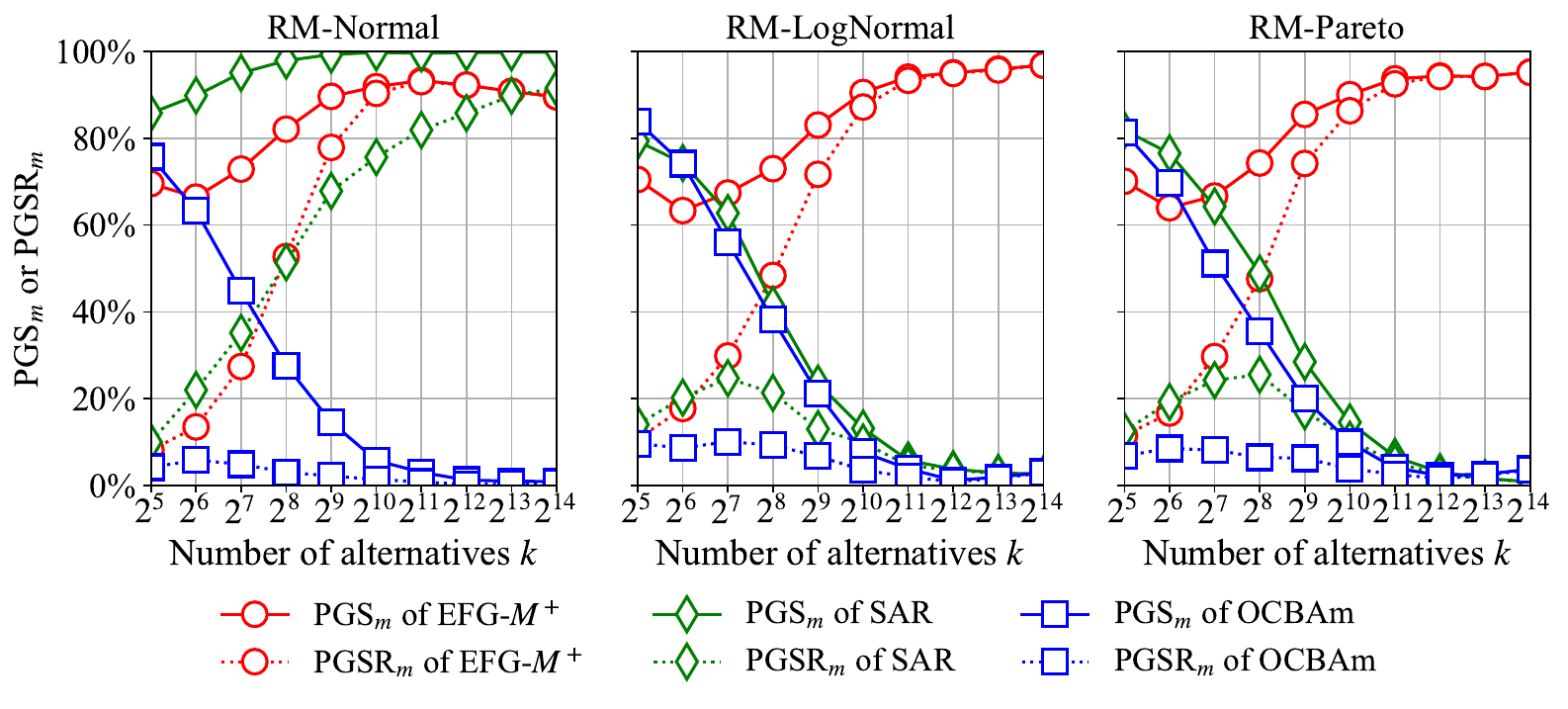}
        \caption{Comparison of PGS$_m$ and PGSR$_m$ for the EFG-$M^+$, OCBAm and SAR algorithms}
    \label{fig: comparison_PGS_PGSR}
    \end{figure}

To further demonstrate the effectiveness of our algorithms, we compare their performance with OCBAm and SAR on a practical simulation-based virtual screening problem drawn from the context of reliability engineering. The detailed problem setup and experimental results are presented in Section~\ref{subsec: numerical_practical}. From the experimental results, we observe patterns consistent with those discussed above, with the EFG-\(M^+\) algorithm consistently outperforming the OCBAm and SAR algorithms across all tested problem configurations in terms of both PGS\(_m\) and PGSR\(_m\). These findings further support that EFG-\(M^+\) is a powerful tool for achieving high-quality screening and ranking in large-scale virtual screening problems.

\section{Supplements to Section \ref{sec: ec_num_justification}}
\label{sec: ec_num_justification2}
\subsection{Algorithms in Comparison}
\label{subsec: additional_algorithms}
In Section \ref{subsec: comparison}, we compare our algorithms with two existing algorithms in the OSS and BAI literature, OCBAm \citep{itemChen2008} and SAR \citep{itemBubeck2013}, through several numerical experiments. Here we provide a brief introduction of these algorithms and detail the implementation settings:

\begin{itemize}
    \item \textbf{The SAR algorithm:} the algorithm is originally proposed in the BAI literature and improves the SR algorithm of \cite{itemAudibert2010} to solve OSS problems. It eliminates one alternative in each iteration, stopping after \( k-1 \) iterations. Alongside the evaluation budget, it does not require additional parameters to be specified. Thus, one may simply follow the pseudocodes in \cite{itemBubeck2013} to implement the algorithm. 
    \item \textbf{The OCBAm algorithm:} the algorithm extends the traditional OCBA (optimal computing budget allocation) approach to solve OSS problems. It operates in two phases. In the initial phase, it allocates to each alternative \( n_1 \geq 1 \) observations to obtain initial estimates of its mean and variance. In the subsequent sequential phase, at each round, the OCBAm algorithm recalculates the budget allocation by solving an optimization problem based on the current sample information; then, it allocates the next observation to the most starving alternative and updates its sample mean and sample variance. Typically in solving small-scale problems, the OCBAm algorithm's performance insensitive to the size of $n_1$. However, our preliminary experiments show that when the problem scale is large, the performance of the OCBAm algorithm may become very sensitive to \( n_1 \).   To set an appropriate $n_1$, we refer to the analysis in \cite{itemWu2018} for the original OCBA algorithm and set \( n_1 = 0.4n \)  given a total evaluation budget  $B=nk$ in our experiments. Additionally, to speed up the experiments, we let the algorithm allocates a small batch of \( \Delta =10\) observations to the most-starving alternative at each round, which would not affect the performance of the OCBAm algorithm.     Notice that alongside the OCBA algorithm, there are also several other OCBA  algorithms for solving small-scale OSS problems (see Section EC.A of \citealt{itemZhang2023} for a comprehensive review). Among these algorithms, we regard the OCBAm algorithm as a representative and include the algorithm in the comparison only. 
    Numerical results in Section \ref{sec: numerical} show that the OCBAm algorithm may be not sample optimal and we believe that this phenomenon may hold for other OCBA algorithms. 
\end{itemize}

\subsection{Other Existing Algorithms and the Computational Complexity}
\label{subsec: other_algorithms}
Alongside the SAR and OCBA algorithms in comparison, there are also several other algorithms in the OSS and BAI literature, including the AOAm algorithm of  \cite{itemZhang2023} and the UGapE algorithm of \cite{itemGabillon2012}.  The AOAm algorithm is developed based on a Bayesian dynamic programming formulation of the optimal budget allocation. It allocates the evaluation budget sequentially and at each round, it approximates the value function of sampling every alternative using the current sample information and then selects the alternative that maximizes the value function approximations (VFA). \cite{itemZhang2023}  demonstrate that this delicately designed algorithm may perform better than the OCBA algorithms in solving small-scale OSS problems. However, this performance gain comes at a more significant computational cost. It can be shown that the computational complexity of  estimating the VFA for all $k$ alternatives in each round is $\mathcal{O}(k^2)$. Therefore, as $k$ grows,  the time spent on selecting the alternative to evaluate may grow significantly and even exceed the time for evaluating it, which makes it computationally prohibitive to apply the AOAm algorithm to solve large-scale virtual screening problems. Similar to the AOAm algorithm, the UGapE algorithm's computational complexity for each selection is also $\mathcal{O}(k^2)$. Thus, we do not include these two algorithms in our numerical comparisons. 

Compared to these algorithms, the EFG-$m$ algorithm is computationally cheap.  The computational complexity of selecting the current top-$m$ alternatives in each round is $\mathcal{O}(k)$ if implemented with the \emph{argpartition} operator. Furthermore, since the EFG-$m$ algorithm makes budge allocation decisions based on the sample means, it is straightforward to use the data structure \emph{heap queue} to reduce the complexity to $\mathcal{O}(\log k)$. We may construct a heap queue of the sample means of all alternatives after the exploration phase. Then, in each round the greedy phase, querying the top-$m$ alternatives from the queue and updating their sample means take only $\mathcal{O}(\log k)$ operations. This feature makes our EFG-$m$ algorithms computationally appealing in solving large-scale problems.

    \subsection{Sample Complexity of the EFG-${m}$ Algorithm}
    \label{subsec: additional_numerical}
    Section \ref{subsec: numerical_topm_consistency} shows the optimal sample complexity of the EFG-$m$ algorithm regarding the probability of incorrect screening (PICS$_m$) $\alpha$. We include in this e-companion additional numerical results to investigate the sample complexity of the EFG-$m$ algorithm regarding the mean difference parameter $\gamma$, the common variance $\sigma^2$ and the subset size $m$.   When investigating each of these three parameters apart, we fix the other parameters, let $k=2^{14}$ and consider multiple values of the parameter; for each value, we set \(n_0=\frac{c_1 \sigma^2}{\gamma^2}\log\frac{2m}{\alpha}\) and \(n_g =\frac{\alpha}{2} + \frac{4 \alpha \sigma^2}{\gamma^2}\) where $c_1$ is a constant independent of the parameter in question. We then plot the PCS\(_m\) curves of the EFG-$m$ algorithm against different parameters in Figure \ref{fig: topm_optimal_others}. From the figure, we can find that given a total evaluation budget $B=\mathcal{O}\left(\frac{k}{\gamma^2}\log\left(\frac{m}{\alpha}\right)\right)$, when $k$ is sufficiently large, the EFG-$m$ algorithm may maintain a constant level of PCS$_m$ regardless of the change in $\gamma$ and $\sigma^2$, indicating the optimal sample complexity of the EFG-$m$ algorithm regarding $\gamma$ and $\sigma^2$ for large-scale virtual screening problems. 



    \begin{figure}[ht]
        \centering
        \includegraphics[width=0.8 \textwidth]{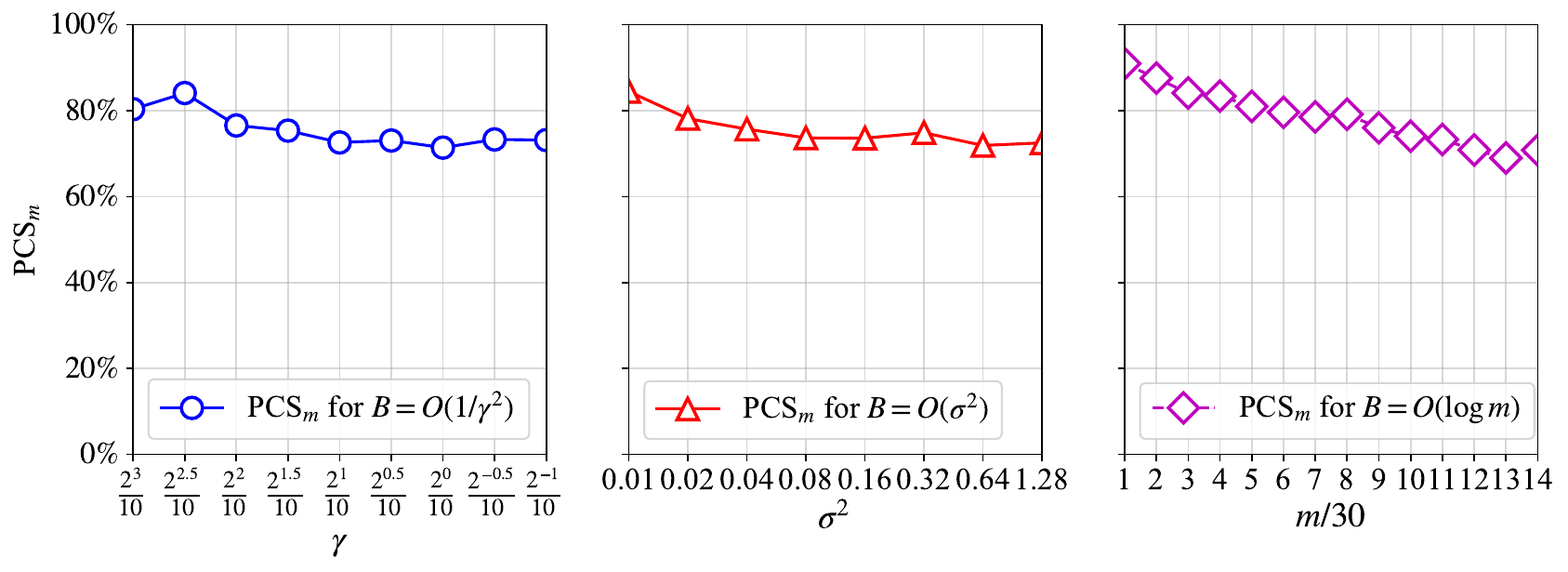}
        \caption{Sample complexity of the EFG-$m$ algorithm regarding \(\gamma\), \(\sigma^2\), and \(m\)}
        \label{fig: topm_optimal_others}
        \end{figure}

Interestingly, as \( m \) increases, the PCS\(_m\) of the EFG-$m$ algorithm exhibits a downward trend. This is because, from a boundary-crossing perspective, as \( m \) increases, the required budget for the top-\( m \) alternatives to reach the minimum of their sample means increases. While \( k \) remains fixed, this budget may become relatively significant and non-negligible compared to the total evaluation budget $B=(n_0+n_g)k$, causing Argument \ref{argu5} to no longer hold. In such cases, the PCS\(_m\) lower bounds established in Theorems \ref{thm: nonnormalPGS} and \ref{thm: consistency} may no longer be valid.  To ensure these results hold, \( k \) should be sufficiently large relative to \( m \).

\subsection{Setting $M$ in EFG-$M$ Algorithms}
\label{subsec: additional_select_M}

It is crucial to recognize the impact of the parameter \(M\) on the EFG-$M$ algorithm's performance. To explore this, we fix \(k = 2^{14}\) and vary \(M/m\) from 1 to 10 to analyze how changes in \(M\) influence the PGS$_m$. Figure \ref{fig: topm_improve_PGS$_m$2} shows the PGS$_m$ curves of the EFG-$M$ algorithm (notice that when $M/m=1$, EFG-$M$ becomes EFG-$m$). These curves show that the EFG-$M$ algorithm's performance may be insensitive to the choice of $M/m$. For the RM-Normal configuration, the curve is almost flat when $M/m \geq 2$; for the other configurations, as $M/m$ rises, the PGS$_m$ of EFG-$M$ declines slowly and keeps its superiority over that of EFG-$m$ in a wide range.  Besides, notice that for all tested configurations, the PGS$_m$ peaks at \(M/m = 2\) or \(M/m = 3\). Based on this and our extensive tuning experience, we recommend setting \(M\) to $2m$ or $3m$ to achieve near-optimal performance in practice.
\begin{figure}[ht]
    \centering
    \includegraphics[width=0.85 \textwidth]{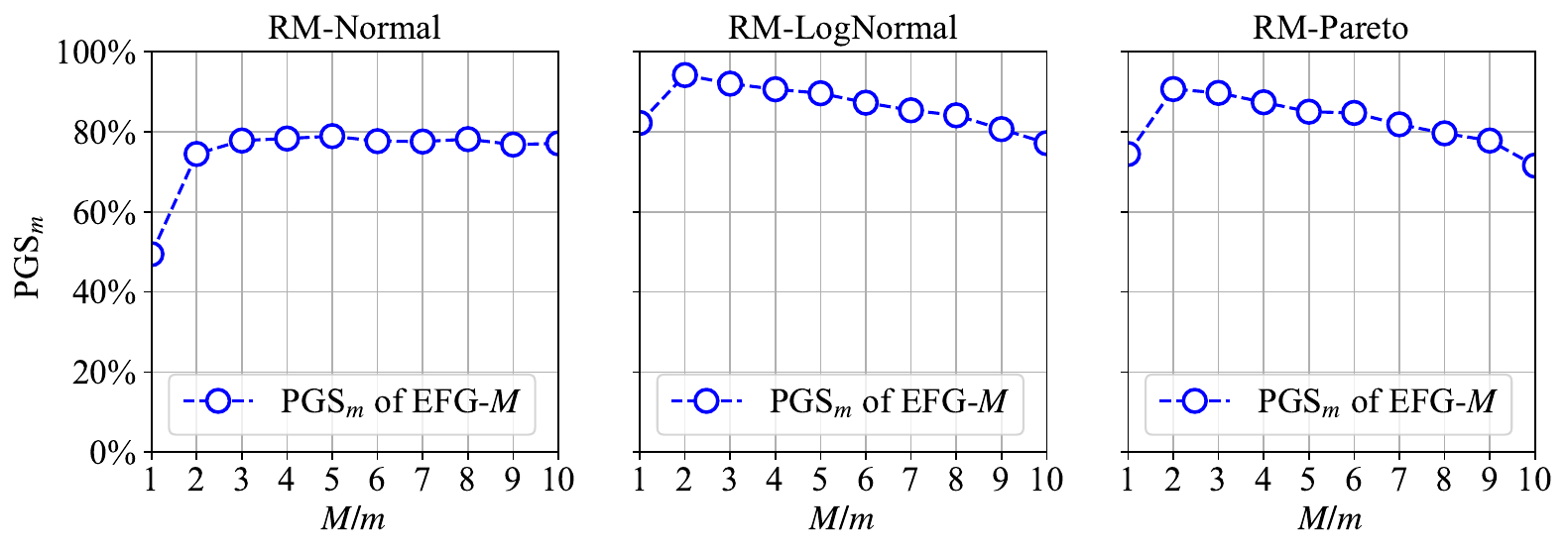}
    \caption{A comparison between the PGS$_m$ of the EFG-$M$ algorithm under different ${M/m}$}
    \label{fig: topm_improve_PGS$_m$2}
    \end{figure}

\subsection{Simulation-Based Redundancy Allocation}
\label{subsec: numerical_practical}

In this subsection, we demonstrate the effectiveness of our sample-optimal EFG-\(m\) algorithms in solving a practical virtual screening problem within the context of reliability engineering. Specifically, we consider a simulation-based redundancy allocation (RA) problem where the goal is to select the most reliable system configurations from a vast set of candidate designs.
A typical form of the RA problem involves a network of subsystems; the objective of the problem is to allocate a limited number of standby components to the subsystems to maximize the overall system reliability, commonly measured by the system lifetime.  As a fundamental problem of reliability engineering, it attracts much research attention and has also been considered in the simulation literature; see, e.g.,  \cite{itemChang2018}.

\begin{figure}[htbp]
    \centering
    \includegraphics[width=0.5\textwidth]{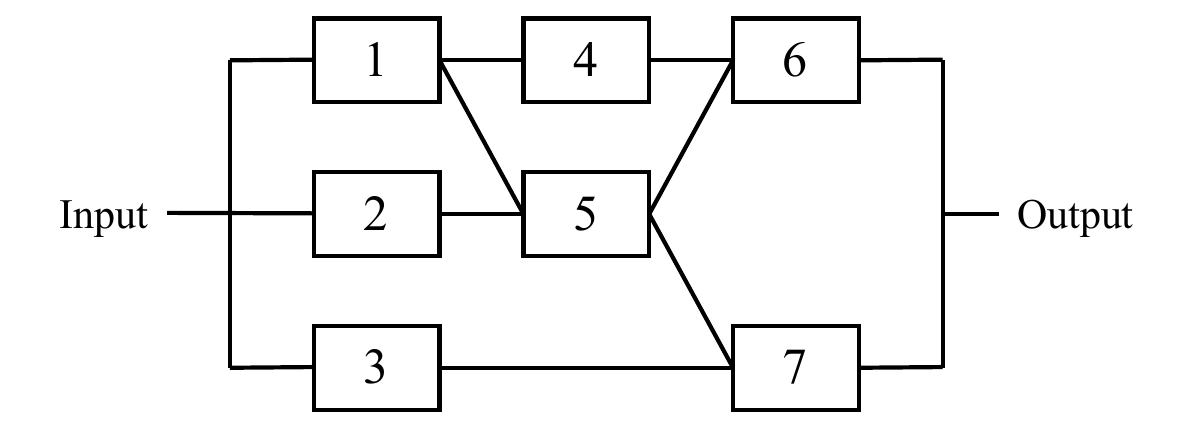}
    \caption{A communication network system}
    \label{fig: allocation_problem}
\end{figure}

In our experiments, we adapt the communication network from \cite{itemZhao2003}, which is shown in Figure \ref{fig: allocation_problem}. The system has 7 subsystems (nodes) and 6 paths.  The lifetime of the system is determined by the most reliable path, that is, the path with the longest lifetime. However, for each path, P$_j$, $j=1, \dots, 6$, the lifetime is determined by the most fragile subsystem in the path. Let $\phi_{i, k}$  represent the (random) lifetime of $k$-th component of subsystem $i$. Assuming independence across components, the total lifetime of subsystem $i$, when allocated to $x_i$ standby components, is $\sum_{k=1}^{x_i} \phi_{i, k}$.  Then,  for each alternative (i.e., feasible allocation scheme) $\mathbf{x}=[x_{1}, x_{2} \ldots, x_{7}] \in \mathbb{N}^7_{+}$, its mean performance may be  expressed as
    \begin{eqnarray}
        \label{eq: reliability_mean}
        \mu(\mathbf{x}) = && \E \left[ \max_{j \in {1, \ldots, 6}} \min_{i \in {P}_j} \sum_{k=1}^{x_i} \phi_{i,k}    \right].
        \end{eqnarray}
For simplicity, we assume that the standby components of different subsystems have the same price, and the total budget allows for purchasing $L$ components.  For a given $L$, our objective is to find a good subset of alternatives with $m=10$ and $\delta=0.1$ among all allocations satisfying $\sum_{i=1}^{7}x_i = L$.  Following the reliability engineering literature, we let $\phi_{i, k}$  follow a  lognormal distribution with shape parameter $\mu_i$ and scale parameter $\sigma_i$ for subsystem $i$.
    The total number of feasible alternatives depends on \( L \), and we consider multiple values of \( L \) to explore different problem scales. \textcolor{black}{To maintain a concise presentation, the experimental setup and parameter settings are provided in Section \ref{subsec: additional_reliability}. Details of the problem instances can be found in Table \ref{table: reliability_problems}, and the PGS$_m$ and PGSR$_m$ of the compared algorithms are summarized in Table \ref{table: reliability_PGS}.}

            \begin{table}[htbp]
                \centering
                \begin{tabular}{ccccccccc}
                    \\ \hline 
                    
                    \hline

                    \hline
                                        Parameter $L$      & \multicolumn{2}{c}{13}       & \multicolumn{2}{c}{14}       & \multicolumn{2}{c}{16}      & \multicolumn{2}{c}{19}      \\ \hline 
                    
                    \hline

                    \hline
                    Number of alternatives $k$      & \multicolumn{2}{c}{1,716}       & \multicolumn{2}{c}{3,432}       & \multicolumn{2}{c}{1,1440}      & \multicolumn{2}{c}{50,388}      \\ \hline 
                    
                    \hline

                    \hline
                    Performance Measure  & PGS$_m$        & PGSR$_m$       & PGS$_m$        & PGSR$_m$       & PGS$_m$        & PGSR$_m$       & PGS$_m$        & PGSR$_m$          \\ \hline 
                    
                    \hline

                    \hline
                    EFG-$m$     & 0.560          & 0.455          & 0.415          & 0.400          & 0.445          & 0.445          & 0.235          & 0.235            \\
                    EFG-$M$     & 0.785          & 0.400          & 0.910          & 0.755          & 0.960          & 0.960          & 0.930          & 0.930           \\
                    EFG-$M$$^+$ & \textbf{0.905} & \textbf{0.490} & \textbf{0.965} & \textbf{0.775} & \textbf{1.000} & \textbf{1.000} & \textbf{1.000} & \textbf{1.000} \\ \hline 
                    SAR      & 0.770          & 0.235          & 0.825          & 0.510          & 0.855          & 0.835          & 0.875          & 0.875           \\
                    OCBAm    & 0.000          & 0.000          & 0.000          & 0.000          & 0.000          & 0.000          & 0.010          & 0.000           \\ \hline 

                    \hline

                    \hline
                    \end{tabular}
                \caption{Estimated PGS$_m$ and PGSR$_m$ on the redundancy allocation problem instances}
                \label{table: reliability_PGS}
                \end{table}

                Table \ref{table: reliability_PGS} offers insights into the performance of our EFG-$m$ algorithms in solving large-scale virtual screening problems. First, the EFG-$m$ algorithm, while being the simplest among the tested algorithms, maintains a nonzero PGS$_m$ across all tested problem instances, reaffirming its sample optimality. In contrast, the OCBAm algorithm fails to achieve this, highlighting the necessity of tailored algorithms in large-scale screening tasks. \
Second, Table \ref{table: reliability_PGS} illustrates the progression of improvements from EFG-\(m\) through EFG-\(M\) to EFG-\(M^+\), with each step introducing significant benefits.  
                Third, the EFG-$m$, EFG-$M$, and EFG-$M$$^+$ algorithms all demonstrate the free ranking effect; when $k$ is sufficiently large, their PGS$_m$ coincide with their respective PGSR$_m$. 
                Fourth, while the SAR algorithm performs better than the original EFG-\(m\) algorithm and also demonstrates the free ranking effect, the EFG-\(M^+\) algorithm consistently outperforms it across all five problem instances. \textcolor{black}{These findings suggest that
                the EFG-\(M^+\) algorithm can be a powerful tool for achieving high-quality screening and ranking in large-scale virtual screening problems.}

\subsection{Supplementary Problem Setup and Implementation Details to Section \ref{subsec: numerical_practical}}
\label{subsec: additional_reliability}

In our experiments, we simply let $\mu_1=\mu_5 = 0.1$, $\mu_2 = \mu_4 = \mu_6 = 0.2$, $\mu_3 = \mu_7 = 0.3$, and $\sigma_1=\cdots = \sigma_7 = 1.5$. Notice that the total number of alternatives $k$ is determined by the value of $L$. We consider several choices of $L$ to represent different problem scales. For each $L$, we estimate the mean performance of each alternative $i=1, \ldots, k$, which is expressed in Equation \eqref{eq: reliability_mean}, based on 300,000 macro independent replications.  We then summarize the information about the problem instances with different $L$ in Table \ref{table: reliability_problems}.
    \begin{table}[htbp]
        \centering
            \begin{tabular}{cccccc}
                \hline 
                    
                \hline
                
                \hline
            Parameter $L$ &  Number of alt.  $k$ &  $\mu_1$ & $\mu_{m}$ &  $\mu_{k}$ & Number of good alt.  $|\mathcal{G}|$ \\ \hline 
                    
            \hline

            \hline

            13 & 1,716 &  4.315 & 4.049 & 1.795 & 23 \\ 
            14 & 3,432 & 4.779 & 4.567 & 1.800 & 18 \\ 
            16 & 11,440& 5.944 & 5.703  & 1.800 & 18 \\ 
            19 & 50,388 & 7.844 &  7.545 &1.800 & 16 \\
            \hline 
                    
            \hline

            \hline
            \end{tabular}
            \caption{Information about the redundancy allocation problem instances}
            \label{table: reliability_problems}
            \end{table}

            In the experiments, we evaluate the EFG-$m$, EFG-$M$, and EFG-$M$$^+$ algorithms using the problem instances detailed in Table \ref{table: reliability_problems}, comparing them against the OCBAm and SAR algorithms. For every problem instance, we set the total evaluation budget $B=200k$.  Then,  we allocate 20\% of this budget to the greedy phase for all our three algorithms; for the EFG-$M$$^+$ algorithm, we allocate 10\% of the budget to the seeding phase. Moreover, we let $M=2m$ for the EFG-$M$ algorithm. Details on the implementation settings for the OCBAm and SAR algorithms are provided in Section \ref{subsec: additional_algorithms}. We then estimate the PGS$_m$ and PGSR$_m$ of each algorithm for every problem instance based on 200 independent macro replications.

\section{Supplements to the LLM-based Case Study in Section \ref{subsec: numerical_LLM}}            
\label{subsec: additional_LLM}

\subsection{Setup Instructions for Configuring the Llama 3.2 Model}
\label{subsubsec: setup}
In our experiments, we use Ollama to set up the Llama 3.2 model, which is available at \url{https://ollama.com/library/llama3.2}. Models from the Ollama library can be customized using a Modelfile for specific tasks. To set up the Llama 3.2 model as a \emph{virtual customer}, we create a Modelfile with the following content:
\begin{promptenv}
FROM llama3.2:3b\\
PARAMETER temperature 1\\
SYSTEM """
You are a customer. You are selected at random while shopping for laptops to participate in a survey.
"""
\end{promptenv}

In this Modelfile, the system message instructs the model to simulate a laptop customer who has been randomly selected to participate in a survey (we borrow the message from Appendix B of \citealt{itemBrand2023}). This customization ensures that the model responds in a manner aligned with a real customer's perspective. Subsequent sections will describe the survey prompts used to query the willingness to pay (WTP) for each laptop design. Readers interested in how to use Modelfile can refer to the official Ollama documentation\footnote{\href{https://github.com/ollama/ollama?tab=readme-ov-file\#customize-a-prompt}{https://github.com/ollama/ollama?tab=readme-ov-file\#customize-a-prompt}}.

\subsection{Prompt for WTP Queries and WTP Observation Generation}
\label{subsubsec: prompt_WTP}

After setting up the system message, we may query the LLM to generate WTP observations for each laptop design. For example, consider a laptop configuration with an Intel Core i5 CPU, 16 GB RAM, and a 256 GB Storage Drive. The corresponding prompt is as follows:
\begin{promptenv}
The customer is asked: What is the maximum price you would be willing to pay for a Lenovo laptop with Intel Core i5 CPU, 16 GB RAM and 256 GB Storage Drive? Please give a single price in numbers (no descriptions).
\end{promptenv}

The system message and such prompt together account for approximately 70–80 tokens.
To generate WTP observations, we send this prompt for every queried alternative design and extract the numerical response from the LLM output. We notice that in some cases, the LLM may not respond in the expected format (i.e., a single numerical price). When a response does not contain an extractable price, we discard the response and re-query the LLM to ensure valid data collection.   Additionally,  we discard rare responses where the WTP exceeds 6,000 to mitigate the impact of erroneous extreme values. 

    
\subsection{Attribute Sets and Values for Different Problem Instances}
\label{subsubsec: attributes}
Section \ref{subsubsec: llm_screening} considers four problem instances in the LLM-based product design screening problem, each defined by a different set of laptop attributes and values. Table \ref{tab:attribute_sets} summarizes the attribute sets and values used in these problem instances.  
Each problem instance expands the number of alternatives by introducing additional attributes. The smallest instance ($k = 36$) considers only CPU, RAM, and storage, while the largest instance ($k = 3,240$) incorporates GPU, screen size, and display resolution.

\begin{table}[htbp]
    \centering
    \caption{Attribute sets and values for different problem instances}
    \label{tab:attribute_sets}
    \begin{tabular}{cl}
        \hline
        
        \hline
        
        \hline
        \text{Problem Instance} & \text{Attributes and Values} \\
        \hline

        \hline

        \hline
        $k = 36$  & \small \begin{tabular}{@{}l@{}}
                      \textit{CPU}: Intel Core i5, Intel Core i7, Intel Core i9, AMD-R5, AMD-R7, AMD-R9 \\
                      \textit{RAM}: 16 GB, 32 GB, 64 GB \\
                      \textit{Storage Drive}: 256 GB, 512 GB
                   \end{tabular} \\  
        \hline
        $k = 360$ & \small\begin{tabular}{@{}l@{}}
                      \textit{CPU}: Intel Core i5, Intel Core i7, Intel Core i9, AMD-R5, AMD-R7, AMD-R9 \\
                      \textit{RAM}: 8 GB, 16 GB, 32 GB, 64 GB \\
                      \textit{Storage Drive}: 256 GB, 512 GB, 1024 GB \\
                      \textit{GPU}: NVIDIA GeForce 20M, 30M, 40M, \\
                      \quad \quad \quad AMD Radeon 5000M, 6000M
                   \end{tabular} \\  
        \hline
        $k = 1,080$ & \small\begin{tabular}{@{}l@{}}
                      \textit{CPU}, \textit{RAM}, \textit{Storage Drive}, and \textit{GPU} as in $k=360$ \\
                      \textit{Screen Display Resolution}: 1080p Full HD, 1440p Quad HD, 4K Ultra HD
                     \end{tabular} \\
        \hline
        $k = 3,240$ & \small \begin{tabular}{@{}l@{}}
                      \textit{CPU}, \textit{RAM}, \textit{Storage Drive}, \textit{GPU}, and \textit{Resolution} as in $k=1,080$ \\
                      \textit{Screen Size}: 13.3 inch, 14 inch, 15.6 inch
                     \end{tabular} \\
        \hline

        \hline

        \hline
    \end{tabular}
\end{table}

\subsection{Prompts for the DIRECT Approach}
\label{subsubsec: direct prompt}
To evaluate the performance of the DIRECT approach, which queries the LLM directly to obtain the top-\( m \) designs from a given set of alternatives, we experiment with three different prompting strategies. Each prompt presents the LLM with a list of all alternative designs and requests a response specifying the top-\( m \) choices.  The examples below illustrate the three prompt formulations for a given problem instance.

\begin{promptenv}
 \makebox[1\linewidth]{\bfseries Prompt 1 for the DIRECT Approach}
The customer is asked: Below is a list of attribute configurations for a Lenovo laptop. Suppose that your budget is unlimited. Please select the top 10 attribute configurations for which you would be willing to pay the highest price, ranking them in descending order of the willingness to pay. Please give only the number IDs in descending order (no descriptions or symbols). Response should be like (1, 2, 3, 4, 5, 6, 7, 8, 9, 10). 
\\

1. A laptop with Intel Core i5 CPU, 16 GB RAM, 256 GB Storage Drive 

... (omitted)

 \makebox[1\linewidth]{\bfseries Prompt 2 for the DIRECT Approach}
    The customer is asked: Below is a list of attribute configurations for a Lenovo laptop: \\

1. A laptop with Intel Core i5 CPU, 16 GB RAM, 256 GB Storage Drive

... (omitted) \\
\\
Suppose that your budget is unlimited. Please select the top 10 attribute configurations for which you would be willing to pay the highest price, ranking them in descending order of your willingness to pay. Please give only the number IDs in descending order (no descriptions or symbols). Response should be like (1, 2, 3, 4, 5, 6, 7, 8, 9, 10).

 \makebox[1\linewidth]{\bfseries Prompt 3 for the DIRECT Approach}
The customer is asked: Below is a list of attribute choices for a Lenovo laptop: \\

CPU: A (Intel Core i5), B (Intel Core i7), C (Intel Core i9), D (AMD-R5), E (AMD-R7), F (AMD-R9)

RAM: A (8GB), B (16GB), C (32GB)

Storage Drive: A (256GB), B (512GB) \\
\\Suppose that you have no budget constraint. Please select the top 10 attribute configurations for which you would be willing to pay the highest price, ranking them in descending order of your willingness to pay. Each configuration should be described by the combination of attribute IDs in the format (CPU, RAM, Storage Drive). The CPU attribute ID should be in [A, B, C, D, E, F]. The RAM attribute ID should be in [A, B, C]. The Storage Drive attribute ID should be in [A, B]. Please provide only the combinations of IDs in descending order (no descriptions or symbols) and ensure no duplicates. Response should be like (A, A, A),(A, A, A),(A, A, A),(A, A, A),(A, A, A).
\end{promptenv}

                    

                    

                    




\subsection{Performance of The DIRECT Approach When $\mathbf{m=1}$} 
\label{subsubsec: direct_top1}
As discussed in Section \ref{subsubsec: llm_screening}, we further evaluate the LLM’s performance under the DIRECT approach when \( m=1 \). In this experiment, we use PGS\(_1\), the probability of selecting a good design whose mean WTP is within \( \delta \) of the design with the maximal mean WTP, as the performance measure. We assess PGS\(_1\) across the three prompting strategies described in Section \ref{subsubsec: direct prompt} and five relatively small-scale problem instances defined in Table \ref{tab:attribute_sets}. Notice that the last two instances with \( k=36 \) and \( k=360 \) are the same as the corresponding problem instances used in Section \ref{subsubsec: llm_screening} for \( m=10 \). The results are summarized in Table \ref{table:LLM_DIRECT_PGS}.

\begin{table}[htbp]
    \centering
    \caption{Attribute sets and values for small-scale problem instances}
    \label{tab:attribute_sets_small}
    \begin{tabular}{cl}
        \hline
        
        \hline
        
        \hline
        \text{Problem Instance} & \text{Attributes and Values} \\
                \hline

        \hline

        \hline
        $k = 6$  & \small \begin{tabular}{@{}l@{}}
                      \textit{CPU}: Intel Core i5, Intel Core i7, Intel Core i9 \\
                        \textit{RAM}: 16 GB, 32 GB
                   \end{tabular} \\  

        \hline
        $k = 12$  & \small \begin{tabular}{@{}l@{}}
                      \textit{CPU} and \textit{RAM} as in $k=6$  \\
                      \textit{Storage Drive}: 256 GB, 512 GB
                   \end{tabular} \\  

        \hline
        $k = 18$  & \small \begin{tabular}{@{}l@{}}
                      \textit{CPU}: Intel Core i5, Intel Core i7, Intel Core i9 \\
                      \textit{RAM}: 16 GB, 32 GB, 64 GB \\
                        \textit{Storage Drive}: 256 GB, 512 GB
                   \end{tabular} \\  
        \hline
        $k = 36$  & \small \begin{tabular}{@{}l@{}}
                      \textit{CPU}: Intel Core i5, Intel Core i7, Intel Core i9, AMD-R5, AMD-R7, AMD-R9 \\
                      \textit{RAM}: 16 GB, 32 GB, 64 GB \\
                      \textit{Storage Drive}: 256 GB, 512 GB
                   \end{tabular}  \\
        \hline
        $k = 360$ & \small\begin{tabular}{@{}l@{}}
                      \textit{CPU}, \textit{RAM}, and \textit{Storage Drive} as in $k=36$ \\
                      \textit{GPU}: NVIDIA GeForce 20M, 30M, 40M, \\
                      \quad \quad \quad AMD Radeon 5000M, 6000M
                   \end{tabular} \\  
        \hline

        \hline

        \hline
    \end{tabular}
\end{table}

\begin{table}[htbp]
    \centering
    \caption{Estimated PGS$_1$ of the DIRECT approach under different prompts}
    \label{table:LLM_DIRECT_PGS}
    \begin{tabular}{ccccccc}
        \hline

                \hline

                        \hline
        Number of alternatives $k$  & 6  & 12  & 18  & 36  & 360\\ 
                \hline

                \hline

                        \hline
        Prompt 1     & 0.200  & 0.340  & 0.100  & 0.040 & 0.000  \\
        Prompt 2        & 0.200  & 0.370  & 0.670  & 0.070 & 0.000\\
        Prompt 3     & 0.095  & 0.220  & 0.080  & 0.120 & 0.002 \\
        \hline

                \hline

                        \hline
    \end{tabular}
\end{table}

Table \ref{table:LLM_DIRECT_PGS} reveals several observations. First, for small-scale problems (\( k = 6, 12, 18, 36 \)), the DIRECT approach achieves nonzero PGS\(_1\). This result suggests that LLMs may have the ability to identify good designs when the number of alternatives is limited. 
Second, as the number of alternatives increases, the effectiveness of the DIRECT approach declines rapidly. For \( k = 36 \) and \( k = 360 \), the PGS\(_1\) values drop significantly, with nearly all prompts failing entirely when \( k = 360 \). This result suggests that LLMs may struggle with direct screening tasks as problem size grows and may not achieve sample optimality.  
Third, while Prompt 2 outperforms Prompt 1 and Prompt 3 for \( k = 6, 12, 18 \), it is outperformed by Prompt 3 for \( k = 36 \) and \( k = 360 \). This highlights the instability of direct screening performance for LLMs across different prompts.  
Overall, these results indicate that the performance of the DIRECT approach is highly unstable, varying significantly across problem instances and prompting strategies. Notice that the PGS\(_1\) results depend on the choice of language model. Using a larger model with more parameters may improve the DIRECT approach's performance. However, we believe that the general observations hold similarly across models.

\end{document}